\documentclass{article}

\usepackage{microtype}
\usepackage{graphicx}
\usepackage{subcaption}
\usepackage{booktabs} % for professional tables
\usepackage{thmtools}

\usepackage{tikz}
\usepackage[hyphens]{url}
\usetikzlibrary{arrows.meta, positioning, matrix, decorations.pathreplacing}

\usepackage{hyperref}

\usepackage[preprint]{icml2026}

\usepackage{amssymb}
\usepackage{mathtools}
\usepackage{amsmath}
\usepackage{amsthm}

\usepackage{xcolor}
\usepackage{fancyvrb}
\usepackage{fvextra}
\usepackage{aliascnt}
\usepackage{textcomp}
\usepackage{enumitem}
\DeclareRobustCommand{\visiblespace}{%
  \ifmmode\text{\textvisiblespace}\else\textvisiblespace\fi
}

\usepackage[capitalize,noabbrev]{cleveref}
\theoremstyle{plain}
\newtheorem{theorem}{Theorem}[section]

\newaliascnt{proposition}{theorem}
\newtheorem{proposition}[proposition]{Proposition}
\aliascntresetthe{proposition}

\newaliascnt{lemma}{theorem}
\newtheorem{lemma}[lemma]{Lemma}
\aliascntresetthe{lemma}

\newaliascnt{corollary}{theorem}

\aliascntresetthe{corollary}

\newaliascnt{informaldef}{theorem}

\aliascntresetthe{informaldef}

\theoremstyle{definition}
\newaliascnt{definition}{theorem}
\newtheorem{definition}[definition]{Definition}
\aliascntresetthe{definition}

\theoremstyle{definition}
\newaliascnt{example}{theorem}
\newtheorem{example}[example]{Example}
\aliascntresetthe{example}

\newaliascnt{assumption}{theorem}

\aliascntresetthe{assumption}

\theoremstyle{remark}
\newaliascnt{remark}{theorem}

\aliascntresetthe{remark}

\crefname{section}{section}{sections}
\Crefname{section}{Section}{Sections}

\crefname{appsec}{appendix}{appendices}
\Crefname{appsec}{Appendix}{Appendices}

\makeatletter
\newcommand{\applabel}[1]{%
  \crefalias{section}{appsec}%
    \crefalias{subsection}{appsec}%
  \label{#1}%
  \crefalias{section}{section}%
  \crefalias{subsection}{subsection}%

}
\makeatother

\usepackage[textsize=tiny]{todonotes}

\icmltitlerunning{BAPO Bounds on Chain-of-Thought Token Complexity in LLMs}

\newenvironment{prompt}
  {\VerbatimEnvironment
   \begin{Verbatim}[fontsize=\small, formatcom=\color{black},
                    frame=single, rulecolor=\color{black},
                    breaklines=true,          % wrap long lines
                    breakanywhere=true,       % allow breaking long tokens/URLs
                         breaksymbol={},           % no symbol at line breaks
                     breaksymbolleft={},       % ensure no left-side symbol
                     breaksymbolindent=0pt,
                    framerule=0.3pt, xleftmargin=1em,
                    commandchars=\\\{\}]}
  {\end{Verbatim}}

\usepackage{xspace}
\newcommand{\bopa}{cBAPO\xspace}
\newcommand{\bopas}{cBAPOs\xspace}
\newcommand{\bopabapos}{BAPOs/cBAPOs\xspace}

\begin{document}

\twocolumn[
  \icmltitle{Reasoning About Reasoning: \\ BAPO Bounds on Chain-of-Thought Token Complexity in LLMs}
  
  % It is OKAY to include author information, even for blind submissions: the
  % style file will automatically remove it for you unless you've provided
  % the [accepted] option to the icml2026 package.

  % List of affiliations: The first argument should be a (short) identifier you
  % will use later to specify author affiliations Academic affiliations
  % should list Department, University, City, Region, Country Industry
  % affiliations should list Company, City, Region, Country

  % You can specify symbols, otherwise they are numbered in order. Ideally, you
  % should not use this facility. Affiliations will be numbered in order of
  % appearance and this is the preferred way.
  \icmlsetsymbol{equal}{*}

  \begin{icmlauthorlist}
    \icmlauthor{Kiran Tomlinson}{msr}
    \icmlauthor{Tobias Schnabel}{msr}
    \icmlauthor{Adith Swaminathan}{netflix}
    \icmlauthor{Jennifer Neville}{msr}

  \end{icmlauthorlist}

  \icmlaffiliation{msr}{Microsoft Research, Redmond, WA}
  \icmlaffiliation{netflix}{Netflix, Los Gatos, CA}

  \icmlcorrespondingauthor{Kiran Tomlinson}{kitomlinson@microsoft.com}

  % You may provide any keywords that you find helpful for describing your
  % paper; these are used to populate the "keywords" metadata in the PDF but
  % will not be shown in the document
  \icmlkeywords{chain of thought, lower bounds, bounded attention, transformers}

  \vskip 0.3in
]

% this must go after the closing bracket ] following \twocolumn[ ...

% This command actually creates the footnote in the first column listing the
% affiliations and the copyright notice. The command takes one argument, which
% is text to display at the start of the footnote. The \icmlEqualContribution
% command is standard text for equal contribution. Remove it (just {}) if you
% do not need this facility.

% Use ONE of the following lines. DO NOT remove the command.
% If you have no special notice, KEEP empty braces:
\printAffiliationsAndNotice{}  % no special notice (required even if empty)
% Or, if applicable, use the standard equal contribution text:
% \printAffiliationsAndNotice{\icmlEqualContribution}

\begin{abstract}
Inference-time scaling via chain-of-thought (CoT) reasoning is a major driver of state-of-the-art LLM performance, but it comes with substantial latency and compute costs. 
We address a fundamental theoretical question: \emph{how many} reasoning tokens are required to solve a problem as input size grows?  
By extending the bounded attention prefix oracle (BAPO) model---an abstraction of LLMs that quantifies the information flow required to solve a task---we prove lower bounds on the CoT tokens required for three canonical BAPO-hard tasks: binary majority, triplet matching, and graph reachability. 
We show that each requires $\Omega(n)$ reasoning tokens when the input size is $n$. 
We complement these results with matching or near-matching upper bounds via explicit constructions. 
Finally, our experiments with frontier reasoning models show approximately linear reasoning token scaling on these tasks and failures when constrained to smaller reasoning budgets, consistent with our theoretical lower bounds. 
Together, our results identify fundamental bottlenecks in inference-time compute through CoT and offer a principled tool for analyzing optimal reasoning length.

\end{abstract}

\section{Introduction}
\label{sec:intro}

\begin{figure*}[tbp]
  \centering
  \begin{subfigure}[t]{0.48\linewidth}
    \centering
    \resizebox{\linewidth}{!}{
      \begin{tikzpicture}
        \matrix (m) [
          matrix of nodes,
          nodes in empty cells,ampersand replacement=\&,
          row sep=0pt,
          column sep=1mm,
          nodes={
            draw,
            line width=1pt,
            minimum height=1cm,
            minimum width=0.9cm,
            align=center,
            anchor=center,
            font=\normalsize
          },
          column 1/.style={nodes={minimum width=5cm}, column sep=0.3cm},
          column 5/.style={nodes={draw=none}},
          column 6/.style={column sep=0.5cm}
        ]{
          $x_1 \qquad \dots \qquad x_n$ \& $s_1$ \& $s_2$ \& $s_3$ \& $\ldots$ \& $s_m$ \& $y$ \\
        };

        \draw [decorate, decoration={brace, amplitude=5pt}]
    ([yshift=6pt]m-1-1.north west) -- ([yshift=6pt]m-1-1.north east)
    node[midway, yshift=13pt] {input};

  \draw [decorate, decoration={brace, amplitude=5pt}]
    ([yshift=6pt]m-1-2.north west) -- ([yshift=6pt]m-1-6.north east)
    node[midway, yshift=13pt] {CoT reasoning};

  \draw [decorate, decoration={brace, amplitude=5pt}]
    ([yshift=6pt]m-1-7.north west) -- ([yshift=6pt]m-1-7.north east)
    node[midway, yshift=13pt] {output};

   \draw [decorate, decoration={brace, amplitude=5pt}]
        ([yshift=-6pt]m-1-2.south east) -- ([yshift=-6pt]m-1-1.south west)
        node(step)[midway, yshift=-13pt] {BAPO step 1};
        
   \draw [decorate, decoration={brace, amplitude=5pt}]
        ([yshift=-26pt]m-1-3.south east) -- ([yshift=-26pt]m-1-1.south west)
        node(step)[midway, yshift=-13pt] {BAPO step 2};
    \draw [decorate, decoration={brace, amplitude=5pt}]
        ([yshift=-46pt]m-1-4.south east) -- ([yshift=-46pt]m-1-1.south west)
        node(step)[midway, yshift=-13pt] {$\ldots$};

      \end{tikzpicture}
    }
    \caption{CoT tokens produced by an LLM are modeled through individual BAPO steps for each produced CoT token (\Cref{def:bapo-cot}).}
    \label{fig:bapo}
  \end{subfigure}
  \hfill
  \begin{subfigure}[t]{0.48\linewidth}
    \centering
    \resizebox{\linewidth}{!}{%
      \begin{tikzpicture}[>=Latex, thick, font=\small]
        % Main block
        \draw (0,0) rectangle (8,1);
        \node at (2,0.5) { $x_1\qquad \ldots \qquad x_k$};
        \node at (6,0.5) {$x_{k+1} \ldots x_n \;s_1 \ldots s_{t-1}$};

        % Split line
        \draw[very thick] (4,-0.4) -- (4,1.3);
        \node[above] at (4,1.2) {$\forall$ splits $k$};

        % y box
        \draw (8.6,0) rectangle (9.6,1);
        \node at (9.1,0.5) {$s_t$};

        % Prefix/attention arrows
        \draw[->] (1.2,-0.1) -- (1.2,-0.55)
          -- node[midway, fill=white, inner sep=3pt] {prefix $f(\cdot)$: $a$ bits} (9.1,-0.55)
          -- (9.1,0);
        \draw[->] (1.2,-0.1) -- (1.2,-1.15)
          -- node[midway, fill=white, inner sep=3pt] {attention $g(\cdot)$: $b$ tokens} (9.1,-1.15)
          -- (9.1,0);
      \end{tikzpicture}%
    }
    \caption{Every step is BAPO-easy; i.e., constant communication needed across an arbitrary split of the input and reasoning tokens.}
    \label{fig:computegraph}
  \end{subfigure}
  \hfill
    \caption{Overview of our modeling approach. An LLM autoregressively generates reasoning tokens $s_1, \cdots s_m$ to eventually produce an answer $y$. Each token generation step is modeled using a bounded attention prefix oracle (BAPO) with constant bandwidths; see~\Cref{sec:multistep_bapo} for a formal definition. To produce each $s_t$, information in a BAPO can cross an arbitrary prefix/suffix cut only through an $a$-bit summary and up to $b$ attended tokens from the prefix; this constrained update repeats for steps $t=1, \ldots m+1$.}
    \label{fig:schematic}
\end{figure*}
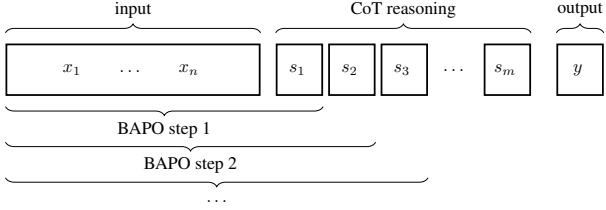
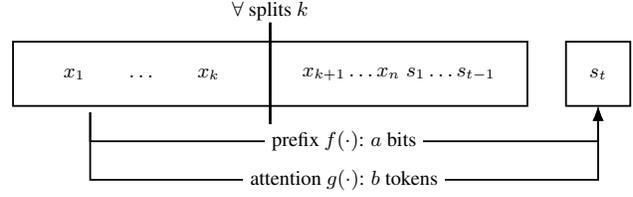

State-of-the-art transformer-based large language models (LLMs) rely on inference-time scaling---allocating additional test-time compute by generating intermediate tokens---to improve performance on challenging tasks~\cite{snell2024scaling}. 
A main drawback of inference-time scaling is that it is costly: even with KV caching, generating $n$ reasoning tokens entails superlinear compute and latency due to the cost of attention growing quadratically~\cite{vaswani2017attention}. 

This raises the following open question: how does the need for reasoning tokens scale with input size for a given task? In particular, what is the minimum reasoning budget required to reliably solve a task? 

There is a growing body of empirical work on compressing and optimizing reasoning chains~\cite{sui2025stopoverthinkingsurveyefficient}. 
However, these evaluations are conducted on benchmarks where input size is not a natural scaling parameter, and compression is therefore reported as a constant-factor reduction (e.g. ``$30\%$ shorter''). As input sizes grow, does the required reasoning length remain roughly constant, grow linearly, or even faster? Principled guidance and theoretical understanding of when compression is even possible is still largely missing. This paper helps fill in that gap by providing asymptotic lower and upper bounds on reasoning chain length for three problems that are hard for LLMs to solve without reasoning~\cite{schnabel2025lost}. The resulting lower bounds and experimental results help confirm the \emph{token complexity hypothesis} of \citet{lee2025llmscompresscot}, which states that every problem has an inherent minimal reasoning chain length, and align with reasoning length bounds for unique hard-attention transformers~\cite{amiri2025lower}.

Our theory relies on a previously identified bottleneck in transformers, namely that the amount of information a transformer can accurately process while producing a single token is limited. 
This is formalized in the \emph{bounded attention prefix oracle} (BAPO) model~\citep{schnabel2025lost}. For problems that are hard in this framework, i.e., require a lot of information to be communicated, chain of thought~\citep[CoT;][]{wei2022cot}\footnote{We use CoT in a more general sense to denote all token-based reasoning approaches, following community use.} enables LLMs to solve them. We show in this paper that these BAPO-hard problems carry high CoT inference costs both in theory and practice.

In particular, we prove that three BAPO-hard problems---binary majority, triplet matching, and graph reachability---require at least $\Omega(n)$ CoT tokens to be solved by a constant-bandwidth BAPO with input size $n$. Consequently, for these tasks, we expect the cost and latency of CoT to scale at least as fast as the problem size, regardless of the particular CoT strategy employed.~\cref{fig:schematic} shows an information flow schematic of chain of thought decoding of LLMs and our multi-step BAPO abstraction of that process. We complement these lower bounds with matching or nearly-matching upper bounds on the token complexity of the three problems through explicit constructions. Additionally, we prove that there are BAPO-hard problems with sublinear token complexity by showing that pointer chasing is BAPO-hard, but has token complexity scaling with the length of the pointer chain rather than the input. 

Finally, we test frontier reasoning models to see how many reasoning tokens they use in practice on these problems. We find that GPT-5.2 with reasoning exhibits linear token scaling when solving the binary majority, triplet matching, and graph reachability problems, as predicted by our theory. Gemini 2.5 Pro uses even more tokens. We also find that imposing a too small reasoning budget degrades performance substantially,  consistent with our lower bounds.

Overall, our results show that inference-time scaling via reasoning tokens can be a bottleneck  on hard tasks: linear lower bounds on required reasoning length translate into at least quadratic compute in practice as context grows, motivating the need for alternatives such as new architectures or more effective tool use. 
At the same time, our empirical results show that frontier reasoning models often use substantially more tokens than the linear lower-bounds require, suggesting room for constant-factor efficiency gains even when sublinear scaling is information-theoretically impossible under BAPO bandwidth limits. 

\section{Background}
\Cref{fig:bapo} illustrates how an LLM generates chain-of-thought tokens by repeatedly applying the same architecture to an ever-growing context.  Although powerful, LLMs are limited in what they can compute in a single token-generation step. We rely on the recently introduced BAPO model~\cite{schnabel2025lost} that postulates that LLMs are communication-bound: if a problem requires too much information to be communicated inside of a transformer, the computation fails. The amount of communication  a problem requires is measured by two parameters of the BAPO model, the \emph{prefix bandwidth} $a$ and the \emph{attention bandwidth} $b$, illustrated in \Cref{fig:computegraph}, that capture different communication pathways in transformers.

Intuitively, the final layer of a transformer must have all relevant information about the input in order to produce a correct next-token prediction. The BAPO model captures this by considering how input information crosses an arbitrary line dividing the input into a prefix and a suffix. Preprocessed information about the prefix (i.e., representations after each layer) can cross this line, but due to causal attention, the preprocessing done in the prefix must be independent of the suffix. Limits on this information flow pathway are captured by the prefix bandwidth. Alternatively, attention in the suffix can pull in prefix token values in a way that depends on the suffix, and the attention bandwidth captures how many tokens can be effectively attended to without information loss. We provide a deeper introduction to the BAPO model in \Cref{app:bapo-intro} and refer readers to \citet{schnabel2025lost} for a thorough exposition.

\subsection{Single-step BAPO}
Let $p:\Sigma^* \rightarrow \Sigma$ be the computational problem we would like to solve over a token alphabet $\Sigma$.

\begin{definition}
An \emph{$(a, b)$-(score-)BAPO}\footnote{We use the slightly more general \emph{score-BAPO} from \citet{schnabel2025lost} but we refer to it as BAPO for simplicity.} with \emph{prefix bandwidth} $a$ and \emph{attention bandwidth} $b$ is defined by 
\begin{itemize}[itemsep=1pt,topsep=-5pt,parsep=0pt,partopsep=0pt]
    \item a \emph{prefix oracle} $f:\Sigma^* \rightarrow \{0, 1\}^a$,
    \item an \emph{attention function} $g: \Sigma^* \times \mathbb N  \times \Sigma \times \mathbb N\rightarrow [0, 1]$
    \item and a \emph{suffix oracle} $h: \{0, 1\}^a \times (\cup_{i = 0}^b (\Sigma \times \mathbb N)^i) \times \Sigma^* \times \mathbb N \rightarrow \Sigma$.
\end{itemize}
An $(a, b)$-BAPO \emph{solves} $p$ if $$h(f(x_1 \dots x_k), G, x_{k+1}\dots x_n, k) = p(x_1\dots x_n)$$ 
for all $k < n$, and all $$G \in \operatorname*{arg\,max}_{\substack{S \subseteq \{(x_i, i): i \le k, g_i > 0\} \\ |S| \le b}} \sum_{(x_i, i) \in S} g_i,$$ where $g_i = g(x_{k+1}\dots x_n,k, x_i, i)$. (See \Cref{app:bapo-intro} for a walk-through of this definition.)
\end{definition}

\begin{definition}
    A problem $p$ is \emph{BAPO-easy} if there exists a constant-bandwidth $(O(1), O(1))$-BAPO that solves $p$. Otherwise, the problem is called \emph{BAPO-hard}. 
\end{definition}

\begin{table*}[t!]
\centering
    \caption{Summary of token complexity results. }
    \label{tab:summary}
    \begin{tabular}{lrrr}
    \toprule
        \bfseries Problem& \bfseries Lower bounds& \multicolumn{2}{c}{\bfseries Upper bounds} \\ \cmidrule(lr){2-2} \cmidrule(lr){3-4}
 & \bopa/BAPO& \bopa&BAPO\\
         \midrule
         \textsc{Majority} & $\Omega(n)$ (Thm.~\ref{thm:majority-lb}) & $O(n \log n)$ (Thm.~\ref{thm:majority-ub}) & $n+1$ (Prop.~\ref{prop:double-trouble-cot})\\
          \textsc{Match3}$_n$ & $\Omega(n)$ (Thm.~\ref{thm:match3-lb}) & $O(n)$ (Thm.~\ref{thm:match3-ub}) & $n+1$ (Prop.~\ref{prop:double-trouble-cot})\\
          \textsc{Reachability} & $\Omega(m)$ (Thm.~\ref{thm:reachability-lb}) & $O(n^2)$ (Thm.~\ref{thm:reachability-ub}) & $m+3$ (Prop.~\ref{prop:double-trouble-cot})\\
          \midrule
          Any decidable language & & $O(t(n)\max(n, s(n)))^\dagger$ (Thm.~\ref{thm:tm-ub}) & $n+1$ (Prop.~\ref{prop:double-trouble-cot})\\
          Any language & & (open) & $n+1$ (Prop.~\ref{prop:double-trouble-cot})\\
          Any \bopa-easy problem & & 1 (by definition) & 1 (by definition)\\
         \bottomrule
    \end{tabular}
    
    \footnotesize For \textsc{Reachability}, $n$ and $m$ are the number of  nodes and  edges; in all other cases, $n$ is the input length.
    
    $^\dagger$$t(n)$, $s(n)$ are the time/space complexity of a TM for the problem using any constant number of tapes and heads.
\end{table*}

\subsection{BAPO-CoT}
\label{sec:multistep_bapo}
A BAPO with chain of thought is an extension of the original model where each reasoning token is produced by a BAPO. As shown by~\citet{schnabel2025lost}, any decidable problem $p$ (even BAPO-hard ones) can be broken up into a sequence of BAPO-easy instances of a step-by-step problem $p'$. This means that CoT can lower required bandwidth to a constant given sufficient reasoning tokens.

\begin{definition}
    \label{def:bapo-cot}
    An \emph{$(a, b)$-BAPO-CoT} solving $p:\Sigma^* \rightarrow \Sigma$ is an $(a, b)$-BAPO over the token set $\Gamma \supseteq \Sigma \cup  \{\square\}$ 
    that solves some problem $p':\Gamma^* \rightarrow \Gamma$ 
    such that for all inputs $x\in \Sigma^*$ to $p$, there exists some sequence of strings $s_1, \dots, s_m \in \Gamma^*$ with the following properties.
    \begin{enumerate}
        \item $s_1 = x$ (the BAPO-CoT starts with $x$ as its input),
        \item $s_{i+1} = s_{i}p'(s_i)$ for all $i = 1, \dots, m-1$ (at each step, it produces some CoT token, concatenated to $s_i$),
        \item $p'(s_{i}) = \square$ if and only if $i = m-1$ (at the last step only, it outputs the halt token), and
        \item $p'(s_{m-2}) = p(x)$  (solves the problem before halting).
    \end{enumerate}
\end{definition}

\subsection{Problems}

We analyze the following problems known to be BAPO-hard~\cite{schnabel2025lost}, thus requiring CoT to solve with constant bandwidth.

\begin{definition}
    \textsc{Majority}: $\{0, 1\}^n \rightarrow \{0, 1\}$ is the problem of determining whether the input has strictly more ones than zeros.
\end{definition}

\begin{definition}
  \textsc{Match3}$_n$: $\mathbb Z_m^n \rightarrow \{0, 1\}$ is the problem of determining whether there are some $i,j\in [n]$ ($i < j < n$) such that $x_{i} + x_{j} + x_n \equiv 0$ (mod $m$), with output 1 if such a match exists and 0 otherwise.
\end{definition}

\begin{definition}
    \textsc{Reachability}: $([n] \times [n])^m \times [n] \times [n] \rightarrow \{0, 1\}$ is the problem of determining if there is path from $s$ to $t$ in a directed graph $G$ with $n$ nodes and $m$ edges. To encode the problem, let $\Sigma = [n]\times [n] \cup [n]$ where the token $(i, j) \in \Sigma$ represents edge $(i, j)$ and the integer tokens represent nodes. An instance of \textsc{Reachability} is specified by the edge list of $G$ in any order followed by nodes $s$ and $t$. 
\end{definition}

BAPO-easy examples include \textsc{Parity}, \textsc{Equality}, \textsc{Disjointness} and \textsc{Index}~\cite{schnabel2025lost}. 

\section{BAPO Token Complexity}
We now introduce a formal notion of CoT efficiency under the BAPO model. Unless needed for clarity, we refer to BAPO token complexity simply as token complexity.

Let $\mathcal P_n$ be the set of all instances of a problem $p$ of size $n$. Let $\mathcal B(\mathcal P_n)$ be the set of all constant-bandwidth BAPO-CoTs that correctly solve all instances in $\mathcal P_n$, with $|\Gamma| = \Theta(|\Sigma|)$.\footnote{If the CoT alphabet $\Gamma$ is arbitrarily large, in particular scaling arbitrarily with $n$, then this allows the CoT tokens to store too much---for instance, storing 1s counts in \textsc{Majority} for arbitrarily long inputs. But for some problems, $|\Sigma|$ varies with problem size (e.g., in \textsc{Reachability}, where $\Sigma$ needs to encode edges uniquely), so we need to allow $\Gamma$ to grow with $\Sigma$.}
\begin{definition}
    \label{def:tokencomplexity}
    The \emph{BAPO token complexity} $c(n)$ of a problem is given by the constant-bandwidth BAPO-CoT with best worst-case performance for $n$:
    \begin{equation*}
        c(n) = \min_{(f, g, h) \in \mathcal B(\mathcal P_n)} \max_{x\in \mathcal P_n} \mathsf{cotTokens}\big((f, g, h), x\big).
    \end{equation*}
\end{definition}

This means that no constant-bandwidth BAPO-CoT can solve all instances of $p$ with fewer than $c(n)$ CoT tokens.    
Here, $\mathsf{cotTokens}((f, g, h), x)$ is the worst-case number of CoT tokens a BAPO-CoT $(f, g, h) \in \mathcal B(\mathcal P_n)$ produces on $x\in \mathcal P_n$ before the halt token $\square$ (worst-case over split indices $k$ and choices of attended token sets $G$). This is a \emph{nonuniform} notion of token complexity, as we allow the optimal BAPO to be different for every input length $n$. 

We will also consider a restricted version of BAPOs, called \emph{self-consistent BAPO} (\bopa) for more meaningful token complexity upper bounds. See \Cref{sec:cbapo} and \Cref{app:bapo*} for more detail.

\subsection{Summary of Results}

Our objective is to characterize the token complexity of problems, which we will do through upper bounds (constructions) and lower bounds (impossibility proofs). \Cref{tab:summary} shows a summary of our results, both for the original BAPO-CoT model as well as the \bopa-CoT model (replacing BAPO with \bopa in \Cref{def:tokencomplexity}).

Our main theoretical contribution is that \textsc{Majority}, \textsc{Match3}$_n$, and \textsc{Reachability} all require linearly many tokens for a constant-bandwidth BAPO-CoT. Note that the \bopa and BAPO models share the same lower bounds. This is because a lower bound for the less restricted BAPO directly implies the same token complexity bound for the restricted model, and our proofs do not make use of the restriction. The table also highlights why we introduce the more restricted \bopa model. Without its restriction that limits the power of the prefix oracle $f$, we can exploit the power of $f$ to solve any problem (even undecidable ones) with linearly many tokens, which is clearly unrealistic.

\section{Lower Bounds on Token Complexity}\label{sec:lower-bounds}
We begin with token complexity lower bounds on three BAPO-hard problems and then discuss how classical algorithmic lower bounds relate to token complexity.

\subsection{BAPO-CoT Token Complexity Lower Bounds}

The key to our analysis is an extension of the BAPO-hardness proof technique of \citet{schnabel2025lost} that accounts for every possible reasoning chain a BAPO-CoT could construct. The core idea behind the original BAPO-hardness proofs is, given a supposed $(a, b)$-BAPO solving the problem, to construct a collection of problem instances split into prefixes and suffixes with the following properties: 
\begin{enumerate}[label=(\arabic*),itemsep=1pt]
    \item the prefixes have a set of \emph{masking tokens} in common, placed strategically so that the BAPO will always attend to masking tokens, 
    \item the set of prefixes is sufficiently large ($> 2^a$) so that the prefix oracle has the same output for two of them by the pigeonhole principle, 
    \item given any pair of prefixes, there exists some suffix where the answer to the problem is different if we append the suffix to the two prefixes. 
\end{enumerate}
 These ingredients provide a scenario where the BAPO sees identical information (same prefix oracle output, same attended tokens, same suffix) in two instances with different answers, meaning it answers one of them wrong. Collections of such prefixes and suffixes do not exist for all problems, and managing their size and properties is the core challenge of the proof.

We perform the same style of construction, but additionally ensure that the information seen by the BAPO-CoT is identical between the two instances \emph{at every CoT step}. This means the two BAPO-CoTs generate the same reasoning sequence and, eventually, the same answer. This requires many more masking tokens, but we show that the three problems can all accommodate them in the constructions. See \Cref{app:proofs} for all proofs.

\begin{theorem}\label{thm:majority-lb}
    Any constant-bandwidth BAPO-CoT for \textsc{Majority} on length-$n$ inputs uses $\Omega(n)$ CoT tokens, so the token complexity of \textsc{Majority} is at least $\Omega(n)$.
\end{theorem}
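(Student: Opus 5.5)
We argue by contradiction. Fix an $(a,b)$-BAPO-CoT $(f,g,h)$ with constant $a,b$ that solves \textsc{Majority} on length-$n$ inputs. Suppose that on every input it emits at most $T \le cn$ CoT tokens before halting, for a small constant $c$ depending on $a,b$ and $|\Gamma|$. We will build two inputs $x, x'$ that receive the same CoT transcript but have different majority values. The BAPO-CoT then outputs the same answer on both, which is a contradiction.

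\textbf{Construction.} Fix a single split point $k$ near $n/2$. At every CoT step the full context is $x_1\dots x_n s_1\dots s_{t-1}$, and we use this same $k$ at every step. The adversary is allowed to choose $k$, because correctness must hold for all $k$. The prefix $x_1\dots x_k$ is made of two parts. The first is a free region of length $\ell$ holding a string with some number $j$ of ones, for $j$ in a range of size $\ell+1$. The second is a block of masking tokens, roughly $(b+1)T$ copies each of $0$ and $1$, placed at fixed positions shared by all prefixes.

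The suffix $x_{k+1}\dots x_n$ is a string whose count of ones can be tuned, so that for any two distinct $j < j'$ some suffix makes the total majority flip between them.

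The prefix oracle outputs $f(x_1\dots x_k)\in\{0,1\}^a$, which does not depend on $t$. Since the split is fixed, this output is the same at every step. So once $\ell+1 > 2^a$, pigeonhole gives two prefixes with $j\ne j'$ and identical $f$-outputs. We need $\ell = \Theta(1)$ here, but to handle the suffix choices more cleanly we may take $\ell$ slightly larger.

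\textbf{Masking at every step.} The main obstacle is controlling the attention $G$ at each of the $T+1$ steps. The attention function $g$ depends on the suffix and on the CoT so far, so at each step it may favor different prefix positions. The BAPO is correct for all maximizing $G$, and ties are resolved adversarially. We therefore need, at every step $t$, a maximizing set $G_t$ that contains only masking tokens, or at least only tokens on which the two prefixes agree. Then the suffix oracle sees identical inputs in both runs.

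The approach is an inductive, adversarial one, following \citet{schnabel2025lost}, and it makes the prefix structure depend on the transcript. Run the BAPO-CoT on the joint instance step by step. At step $t$, with the CoT so far identical for both instances, the suffix oracle attends to at most $b$ prefix tokens. Whenever the top-$b$ scores land on positions in the free region, we instead commit those positions to fixed shared values: they become masking tokens. A counting argument shows this consumes at most $b$ free positions per step. Over $T$ steps at most $bT$ positions get pinned.

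To keep the pinning consistent, we make the free region large: length $L = bT + \ell$, with $\ell > 2^a$ positions remaining free. Because pinning depends on $g$, which depends on the suffix and the CoT, we proceed as follows.
\begin{enumerate}
\item Fix the suffix.
\item Run a simulation, pinning positions as the adversary needs them, with the pinned values balanced between $0$ and $1$.
\item Apply pigeonhole only over the unpinned $\ell$ positions.
\end{enumerate}
To interleave the pigeonhole on $f$ (which must be applied before the CoT runs) with the pinning (which depends on the CoT), we apply pigeonhole over the family of prefixes that share the pinned set. We also show by induction on $t$ that the transcripts coincide.

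If this interleaving is awkward, we fall back on an alternative that avoids dependence on $g$: use a uniform prefix in which each of $0$ and $1$ appears more than $(T+1)b$ times in fixed masked positions. Then at every step the $b$ maximizers can be chosen as masking tokens of the same value and position in both instances, and distinct $j$ differ only in a small non-attended region.

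\textbf{Counting.} All of this needs $n \ge k + |\text{suffix}| \gtrsim (b+1)T + 2^a + O(1)$. The suffix must be able to shift the majority across the gap between the two prefixes, which requires suffix length $O(2^a + bT)$. If $T < n/C$ for a suitable constant $C = \Theta(b)$, the construction fits in length $n$. Padding with balanced pairs gives length exactly $n$.

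The two instances then receive identical $(f, G_t, \text{suffix}, \text{CoT})$ at every step, so they produce identical CoTs and the same output. Their majority values differ, which is the contradiction. Hence $T = \Omega(n/b) = \Omega(n)$.
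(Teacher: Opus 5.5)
Your skeleton matches the paper's: one fixed split, shared masking tokens, pigeonhole on the $a$-bit prefix output, and identical transcripts by induction on the step. But the step that carries all the difficulty, making the masking valid at every CoT step, is not resolved, and the order of quantifiers you propose does not work.

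\textbf{The circularity.} You cannot ``run the BAPO-CoT on the joint instance''. The two instances exist only after pigeonholing over prefixes that already contain the pinned positions. Even one simulated step needs the value of $f$ on the prefix, which depends on the prefix you are still building. Pigeonholing ``over the family of prefixes that share the pinned set'' does not break this loop.

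\textbf{The suffix order.} Step~1 (``fix the suffix'') also fails. For a single suffix, \textsc{Majority} is one threshold in $j$, and the colliding pair $j\neq j'$ may lie on the same side of it. So the distinguishing suffix must be chosen \emph{after} the collision, and the masking must already be valid for it.

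\textbf{The missing idea.} This is exactly how the paper proceeds: pin for \emph{every} possible oracle output $\phi\in\{0,1\}^a$ and \emph{every} one of the $2^a+1$ candidate suffixes, simulating $h$ with $\phi$ in place of the true prefix output for all $T$ steps. This fixes the masking positions before any prefix is defined. It costs only the constant factor $2^a(2^a+1)$: about $b\,2^a(2^a+1)T$ masking tokens of each value rather than $(b+1)T$. That gives $T=\Omega(n/(b4^a))$, which is still $\Omega(n)$.

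Two further details are needed.
\begin{enumerate}
\item A position must be pinned with the value whose score triggered it, so the scores of a $0$ and of a $1$ at each index are tracked separately.
\item The number of candidate suffixes must stay $O(1)$, so the total $0/1$ counts cannot depend on the pinning. The paper ensures this by making every prefix a permutation $\pi$ of one fixed multiset with equally many masking $0$s and $1$s; $\pi$ only decides which of them occupy the pinned positions.
\end{enumerate}
Your counting paragraph instead lets the suffix absorb an $O(bT)$ imbalance. That would reintroduce the circularity, because the imbalance depends on the pinning, which depends on the set of suffixes you mask for.

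\textbf{The fallback.} It is not valid. If the masking tokens sit at positions fixed independently of $g$, nothing forces the top-$b$ scores onto them. The function $g$ sees the index $i$ and may score free-region positions strictly higher. For example, it can give the largest score to the $t$-th free position at CoT step $t$ (it reads $t$ off the suffix length), so the CoT copies the whole free region within $\ell$ steps. Adversarial tie-breaking only helps if the masking tokens are at least tied with every other candidate. Guaranteeing that is precisely why the masking positions (the permutation $\pi$ in the paper) must be chosen adaptively from $g$.
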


\begin{theorem}\label{thm:match3-lb}
    Any constant-bandwidth BAPO-CoT for \textsc{Match3}$_n$ over $\mathbb Z _{n^2}^n$ uses $\Omega(n)$ CoT tokens, so the token complexity of \textsc{Match3}$_n$ is at least $\Omega(n)$.
\end{theorem}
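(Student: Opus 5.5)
We argue by contradiction using the multi-step fooling-set strategy described above. Fix a constant-bandwidth $(a,b)$-BAPO-CoT for \textsc{Match3}$_n$ over $\mathbb Z_{n^2}^n$, and suppose that on every length-$n$ input it emits at most $T = \epsilon n$ CoT tokens before halting, for a small constant $\epsilon$ to be chosen. We will build a family of inputs, all split at one common index $k$, such that two of them have the same prefix oracle outputs, the same attended tokens and the same suffix at every CoT step, yet differ in the answer.

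\textbf{Step 1: layout.} Split the input into three regions. The first is a block of ``free'' positions carrying a set $A \subseteq \mathbb Z_{n^2}$ of candidate values. The second is a block of masking tokens. The third is the suffix, which contains the final element $x_n$. We choose values so that pairs involving masking tokens never produce a match. For instance, masking values are taken from a residue class whose pairwise sums with anything in the instance and with any $x_n$ we use cannot hit $0$ mod $n^2$. Because the modulus is $n^2$, there is enough room to place $\Theta(n)$ free values and $\Theta(n)$ masking values in disjoint numeric ranges, for example free values in $[1, n^2/4)$ and masking values in a range that is not reachable by any relevant sum.

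\textbf{Step 2: forcing attention onto masking tokens.} The attention function at CoT step $t$ depends only on the current suffix (original suffix plus generated tokens) and on the prefix token values and positions. Since at most $b$ tokens are attended per step and there are at most $T+1$ steps, at most $b(T+1)$ prefix positions are ever attended. The difficulty is that which positions are attended could depend on the free values. I would handle this adaptively, in the style of \citet{schnabel2025lost}. Simulate step by step: at each step, pre-fill with masking values all positions that the attention function would score highly under any assignment, or alternatively fix those positions to common values shared by every family member. Then, inductively, the family shares the same attended multiset at every step as long as the generated CoT tokens agree.

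\textbf{Step 3: pigeonhole.} Free positions number $n - O(bT)$. If $\epsilon$ is small relative to $1/b$, this leaves $\Omega(n)$ free positions. The free sets $A$ range over a family of size $2^{\Omega(n)}$, for example subsets of a fixed set of $\Omega(n)$ values. At each step the prefix oracle outputs $a$ bits, so across all steps the joint output has $a(T+1)$ bits. If $a(T+1) < \Omega(n)$, which holds for $T = \epsilon n$ with small $\epsilon$, two prefixes $A \neq A'$ collide on the whole transcript. Note also that each step's prefix oracle input includes only the prefix, which is the same for both steps' views. The CoT tokens lie in the suffix, so by induction the generated tokens coincide.

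\textbf{Step 4: distinguishing suffix.} Pick $v \in A \setminus A'$ and arrange the free values so that $A$ contains a partner $u$ with $u + v + x_n \equiv 0$ for a suitable $x_n$ while $A'$ has no match. The simplest way is to let the free values be $\{c_i\}$ and $\{d_i\}$ paired so that only a pair $(c_i,d_i)$ both present yields a match with $x_n$, encoding a disjointness-like set. Then the two instances have different answers but identical transcripts, which is a contradiction, so $T=\Omega(n)$.

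I expect the main obstacle to be Step 2, guaranteeing that the attended sets coincide across all $T$ steps despite adaptivity, while keeping the masking tokens from creating spurious matches. The generous modulus $n^2$ is what makes the second part feasible.
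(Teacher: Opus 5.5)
Your skeleton (fixed split, masking along the CoT rollout, pigeonhole on $f$, distinguishing suffix) matches the paper's, but Steps 2--4 as you instantiate them do not close. The masks must be built along the CoT trajectories that will actually occur. Each CoT token depends on the suffix value $x_n$ and on the prefix-oracle output, and you fix neither before masking. You choose $x_n$ in Step 4 only after finding the collision, which itself depends on the finished masked prefix. And ``simulate step by step'' cannot evaluate $h$ without knowing $f(\text{prefix})$.

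The paper breaks this circularity by fixing the suffix set $S=\{0,\dots,a\}$ in advance. For each of the $2^a$ possible oracle outputs and each $s\in S$, it rolls out $c(n)$ steps with an adversarial $G$, greedily adding the top-$b$ scoring (value, position) pairs as shared masks. The total cost, at most $2^a(a+1)\,b\,c(n)$ positions, stays below $n/4$ only because $|S|$ and the number of enumerated oracle outputs are constants.

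Your family has $2^{\Omega(n)}$ prefixes. Pairwise separating that many prefixes needs $\Omega(n)$ distinct suffix values, since a suffix set $S$ can pairwise separate at most $2^{|S|}$ prefixes. Masking for every candidate suffix would then cost $\Omega(n)\cdot bT=\Omega(n^2)$ positions at $T=\epsilon n$. Conversely, with $O(1)$ suffixes a family larger than $2^{|S|}$ buys nothing. Your $a(T+1)$-bit transcript count makes matters worse: it would force $|S|>a(T+1)$ and an enumeration over $2^{a(T+1)}$ oracle transcripts.

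The missing observation is the one you half-state. With $k=n-1$ fixed, the prefix never changes, so $f$ contributes a single $a$-bit value to the whole rollout. Then $a+1$ match pairs, giving $2^{a+1}>2^a$ pairwise-separated prefixes, suffice.

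Second, drawing mask values from a ``safe'' residue class (Step 1) is incompatible with masking. Whatever (value, position) pairs $g$ scores highest among values that could appear in the completion must themselves be placed as masks, relying on adversarial tie-breaking. So you cannot steer their values numerically.

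The paper instead restricts the greedy step to \emph{feasible} values: those forming no match with any existing mask together with any $s\in S$. Every value placed later is also feasible, so the greedy maximum dominates any completion. Only afterwards does it invoke the packing lemma, which needs $|Z|\le\sqrt{n^2}/2=n/2$. This chooses, for each $s$, a pair $(x_s,y_s)$ that matches $s$ and nothing else, with the filler value $\lfloor n^2/3\rfloor$ seeded into the masks from the start. That, rather than disjoint numeric ranges, is where the modulus $n^2$ is used.
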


\begin{theorem}\label{thm:reachability-lb}
    Any constant-bandwidth BAPO-CoT for \textsc{Reachability} on $m$-edge graphs uses $\Omega(m)$ CoT tokens, so the token complexity of \textsc{Reachability} is at least $\Omega(m)$.
\end{theorem}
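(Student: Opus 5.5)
We prove \Cref{thm:reachability-lb} by the multi-step fooling argument described above. Fix an $(a,b)$-BAPO-CoT with constants $a,b$ that solves \textsc{Reachability} on $m$-edge graphs. Suppose, for contradiction, that on every instance it halts within $T \le cm$ CoT tokens, where $c$ is a small constant to be chosen later. Our goal is two instances with different answers on which the BAPO-CoT produces identical CoT sequences and hence the same final answer.

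\textbf{Construction.} We use a layered ``disjointness-style'' gadget, as in the single-step BAPO-hardness proof of \citet{schnabel2025lost}.
\begin{itemize}
\item Take $N=\Theta(m)$ middle nodes $v_1,\dots,v_N$, plus $s$ and $t$.
\item The prefix is an edge list encoding a subset $A\subseteq[N]$: it contains edge $(s,v_i)$ for $i\in A$, and a harmless self-loop or isolated dummy edge for $i\notin A$. This keeps every prefix at exactly $N$ edges.
\item The suffix encodes a singleton $\{j\}$ via the edge $(v_j,t)$, padded with dummy edges, followed by the query $s,t$.
\item Then $t$ is reachable iff $j\in A$, so two distinct prefixes $A\neq A'$ are separated by a suffix choosing $j\in A\triangle A'$.
\end{itemize}
We also add a block of $M=\Theta(m)$ \emph{masking edges}, e.g.\ edges among fresh irrelevant nodes $u_1,\dots,u_M$, placed in the prefix. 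These can be permuted and duplicated arbitrarily without changing the answer, and their order is the same across all prefixes we compare.

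\textbf{Step 1: pigeonhole over the entire CoT trajectory.} The split is fixed at $k$ = end of the prefix. Everything after the split is the suffix plus the CoT tokens generated so far. At each CoT step $t$ the BAPO-CoT's behavior depends only on three things: the prefix summary $f(\text{prefix})\in\{0,1\}^a$, the at most $b$ attended prefix tokens, and the suffix+CoT. Crucially, $f$ only sees the prefix, so its output is the same at every step. Over $T$ steps the attention selects at most $bT$ prefix positions in total.

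We restrict to a family of $2^{a}+1$ prefixes and pick two, $A\ne A'$, with equal $f$-value. Then we show the attention can be forced to land on identical tokens. Here we use the argmax tie-breaking in the definition: the statement must hold for \emph{all} maximizing $G$. It therefore suffices to exhibit one consistent run in which every attended set consists of masking tokens, or of positions where the two prefixes agree.

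\textbf{Step 2: the main obstacle.} The hard part is guaranteeing that the attention never needs to touch a differing position. The attention function $g$ is adaptive: it depends on suffix and CoT content, and may score exactly the informative edge $(s,v_j)$ highest. We handle this by induction over steps, with an adversary that chooses the prefix encoding. Each informative slot $i$ is realized by one of two tokens that the attention scores the same way. Equivalently, we choose the family so that differing positions receive score no higher than masking tokens of equal value; the masking edges can be chosen as copies scored $\ge$ by $g$.

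The intended inductive claim is: while the two runs have identical CoT so far, $g$'s scores are identical functions of position content. Among the $N$ differing positions, at most $bT$ can ever be attended. If $bT<N-\log_2(\cdot)$, there remain unattended positions, and we restrict the family to sets $A$ that differ only there. We then refine the family step by step, keeping more than $2^a$ candidates, and choose the suffix index $j$ among never-attended positions.

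The parameter check is as follows. Fixing the $bT$ attended positions costs at most $bT$ bits of entropy from the $N$ free bits. We need $N-bT>a$, which holds when $T<(N-a)/b=\Omega(m)$. Hence any solver with $T=o(m)$ (indeed $T<cm$ for $c<1/(2b)$) produces identical trajectories on two instances with different reachability, a contradiction. This gives $\Omega(m)$.
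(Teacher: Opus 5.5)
\textbf{There is a genuine gap: your gadget cannot yield any lower bound.} At the split you fix, each instance is essentially \textsc{Index}/\textsc{Contains}, which is BAPO-easy. Once the suffix contains $(v_j,t)$, $t$ is reachable iff the single token $(s,v_j)$ occurs in the prefix. So consider the $(0,1)$-BAPO with $g(\cdot,k,x_i,i)=1$ iff $x_i=(s,v_j)$ (reading $j$ off the suffix), where $h$ outputs $1$ iff $G\neq\emptyset$. It answers every instance of your family correctly at that split in one step (plus the halt token). Your argument uses nothing beyond correctness on this family at this split, so it would also ``refute'' this solver. It must therefore fail somewhere, and it fails in Step 2, in four places:
\begin{itemize}
\item Masking edges among fresh nodes $u_1,\dots,u_M$ mask nothing. $g$ may give them score $0$, and $G$ only contains positive-score tokens, so adversarial tie-breaking never forces them into $G$.
\item You cannot choose tokens ``that the attention scores the same way'', because $g$ is given. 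A masking token must be something $g$ scores at least as high as the informative token. Here that is $(s,v_j)$ itself, and inserting it changes the answer.
\item ``Choose $j$ among never-attended positions'' is circular. The attended set is a function of the suffix, and also of prefix \emph{contents}, not just positions. Suffix $j$ lets $g$ point straight at slot $j$. Summed over your $N$ suffixes, the attended positions number $NbT$, not $bT$.
\item The quantifier order is off. The CoT trajectory depends on the $f$-value, which you only learn after pigeonholing, yet you refine the family after fixing the pair $A,A'$.
\end{itemize}

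\textbf{The paper's proof rests on two ingredients your proposal lacks.}
\begin{itemize}
\item \emph{A family where no $O(1)$ prefix tokens decide the answer, even given the suffix.} It uses $p=2a+4$ disjoint layered paths of $d=m/p$ edges each, and only $p^2=O(1)$ suffixes $s_it_j$. Whether $s_i$ reaches $t_j$ is then a composition of per-layer permutations.
\item \emph{Masking by genuine, answer-preserving path edges.} The masking loop ranges over each of the $2^a$ possible $f$-outputs (the collision value is unknown in advance), each suffix, and each of the $c(m)$ simulated CoT steps. At each point it greedily commits the top-$b$ scoring edges among all edges still feasible in the layered structure. Then for every completion, some maximizing $G$ consists only of committed edges.
\end{itemize}
This commits at most $2^ap^2b\,c(m)$ edges. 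If $c(m)=o(m)$, that is fewer than $m/2$, so some layer has at least $p/2=a+2$ uncommitted edges. Permuting them gives $(a+2)!>2^a$ prefixes, forcing an $f$-collision between two prefixes that some suffix $s^*t^*$ separates. The $\Omega(m)$ bound comes exactly from requiring the committed edges to fit inside the $m$ available edges, and this accounting is what your proposal is missing.
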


\subsection{Lower Bounds via Turing Machines}
Beyond these three problems, Turing machine lower bounds can provide a general, albeit weak approach to lower bound chain-of-thought length. However, this approach provides lower bounds for transformers, but not for BAPOs. 

To turn algorithmic lower bounds into CoT lower bounds, we can simulate a transformer with a Turing machine.  More precisely, if any possible algorithm for a problem requires time $\Omega (t(n))$, then no CoT for that problem can be shorter than $\Omega(t(n) / n^2)$ assuming the forward pass of a transformer takes time $O(n^2)$; otherwise, running the transformer with CoT would give a more efficient algorithm. By the time hierarchy theorem~\cite{hartmanis1965computational}, this roughly means there exist problems with any desired minimum CoT length (e.g., there exist problems with minimum CoT length $\Omega(n^a)$ for all $a$). This also means, for instance, that \textsf{NP}-hard problems do not have polynomial-length CoTs unless  \textsf{P} = \textsf{NP}. But unconditional time complexity lower bounds for specific problems (beyond $\Omega(n)$, which is useless after applying the $1/n^2$ factor) are virtually nonexistent, so this approach is unlikely to provide useful lower bounds in practice for problems in \textsf{P} like graph reachability. 

While this approach provides few useful bounds, we can still ask whether it can be used for BAPO-CoTs. It was previously unknown whether constant-bandwidth BAPOs could be simulated by Turing machines at all. We resolve this question in the negative, so algorithmic lower bounds do not immediately translate to BAPO-CoT token complexity lower bounds. The proof uses a unbounded token alphabet to encode hard problems in a single token, although \Cref{prop:double-trouble} shows this impossibility still holds for BAPOs with finite $\Sigma$ (the question for cBAPOs with finite $\Sigma$ is open).

\begin{proposition}\label{prop:no-tm-simulation}
    Constant-bandwidth \bopabapos 
    cannot in general be simulated by Turing machines.
\end{proposition}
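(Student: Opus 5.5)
The plan is to exhibit a single constant-bandwidth BAPO (and in fact a cBAPO) whose input--output behavior is not Turing-computable. Any Turing machine that simulated it would then decide an undecidable language, which is impossible. The text before the statement says the construction exploits an unbounded token alphabet to encode a hard problem inside one token, and I will follow that route. Take $\Sigma = \mathbb N$, or equivalently the set of all finite binary strings, so that a single token can encode an arbitrary Turing machine description $\langle M \rangle$. Let $p$ be the halting problem on one-token inputs: $p(x_1 \dots x_n) = 1$ if and only if $x_n$ encodes a machine that halts on empty input. For definiteness we may also require $n \ge 2$ and ignore the other tokens.

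The construction puts all of the work in the suffix oracle. Let $f \equiv 0$ (so $a = 1$), $g \equiv 0$ (so $b = 0$ and $G = \emptyset$), and define $h(\cdot, \emptyset, x_{k+1}\dots x_n, k) = \mathrm{HALT}(x_n)$. For every split $k<n$ the suffix contains $x_n$, so $h$ always outputs $p(x)$. The definition of a BAPO requires only that $f,g,h$ be functions, not computable ones, so this is a valid $(1,0)$-BAPO solving $p$. It is also self-consistent, because the prefix oracle is trivial. I will check that it satisfies whatever restriction \Cref{sec:cbapo} places on cBAPOs; since that restriction limits $f$, a constant $f$ should pass. The same trick also settles the BAPO-CoT version: take $p'$ to output $p(x)$ at the first step and $\square$ at the next, with $h$ reading the last input token in the first case and seeing the answer token in the second.

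It remains to define ``simulated by a Turing machine'' and derive the contradiction. The natural reading is that there is a TM $T$ which, given an input (and, if desired, the split $k$), outputs the answer of the BAPO. Running $T$ on the one-token input $\langle M\rangle$ would then decide the halting problem. The same conclusion holds if the simulation is instead required to compute $f$, $g$, and $h$ separately, since $h$ itself is uncomputable. This part is routine.

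The main obstacle is interpretive rather than technical. ``Cannot in general be simulated'' has to be read as ``there exists a constant-bandwidth BAPO/cBAPO that no TM simulates.'' The construction has to be shown to fall inside whatever constraints the cBAPO definition imposes, and those constraints are only given later in the paper. If that definition requires each oracle to depend on its input in some restricted way, I would move the uncomputable lookup into whichever oracle is left unconstrained, most likely $h$ acting on the final token. If instead a finite alphabet were required, I would fall back to \Cref{prop:double-trouble}, which handles finite $\Sigma$ for BAPOs, and leave the cBAPO case open as the paper notes.
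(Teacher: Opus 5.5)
Your proposal is correct and matches the paper's proof. The paper also takes $\Sigma = \mathbb N$, encodes Turing machines as single tokens, and lets an unrestricted suffix oracle $h$ decide an undecidable property of the final token $x_n$, which the suffix always contains. The differences are immaterial: the paper uses totality (``halts on all inputs'') where you use halting on empty input, and it uses a $(0,0)$-cBAPO where you use a $(1,0)$ one with constant $f$. Taking $a=0$ as the paper does makes self-consistency immediate, since the output space $\{0,1\}^0$ is a singleton.
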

\begin{proof}
     Pick $\Sigma = \mathbb N$ and encode Turing machines as natural numbers. The problem  $p(x_1\dots x_n) =1[\text{$x_n$ is a Turing machine that halts on all inputs}]$ is undecidable yet solved by a $(0, 0)$-\bopa
     since $h$ always has the final token in the suffix and can be an arbitrary function (it's trivially self-consistent since $a=0$). 
\end{proof}

\section{Upper Bounds on Token Complexity}\label{sec:upper-bounds}
We first introduce the restricted \bopa model and discuss why it is needed. We then provide general Turing-machine-based upper bounds on token complexity, followed by tighter bounds for the three BAPO-hard problems via explicit construction, and closing with an example of a problem with sublinear token complexity.

\subsection{Self-Consistency and \bopa}\label{sec:cbapo}

In our analysis of upper bounds, we will focus on a restricted version of the original BAPO model that limits the power of the prefix oracle $f$ in the original model to align more closely with the way transformer computations happen.  There is a certain self-similarity in how causally-masked transformers work: the result of the computation on a sequence $x_1, \ldots, x_{k+1}$, is the result of the transformers' new computation on $x_{k+1}$ merged with its old computation on the prefix $x_1, \ldots, x_k$. This motivates the additional constraint of \emph{self-consistency}: any prefix result needs to be computable by a BAPO. In addition to capturing this computational constraint in real transformers better, it also provides more realistic upper bounds on token complexity.

 \begin{restatable}{definition}{SelfConsistencyDefinition}\label{def:bapo-consistent}
    A function $f: \Sigma^* \rightarrow \{0, 1\}^{a}$ is $(a, b)$-\emph{BAPO-consistent} for a problem $p$ if there exist functions $g$ and $h$ such that $(f, g, h)$ is an $(a, b)$-BAPO with input token set $\Sigma$ and output token set $\{0, 1\}^a$ for which $f(x_1\dots x_n) = h(f(x_1 \dots x_k), G, x_{k+1}\dots x_n, k)$ for all $n> 1$, all prefixes $x_1\dots x_n$ of inputs to $p$, all split indices $k< n$, and all possible attended token sets $G$. 
\end{restatable}

We call BAPOs with this restriction \bopas, and define \bopa-easiness and \bopa-hardness analogously. % as for BAPOs. 
\begin{restatable}{definition}{CBAPODefinition}
    A $(a, b)$-BAPO $(f, g, h)$ where $f$ is $(a, b)$-BAPO-consistent is called \emph{self-consistent}, denoted $(a, b)$-\emph{\bopa}. A problem $p$ is \emph{\bopa-easy} if an $(O(1), O(1))$-\bopa solves $p$. Otherwise, $p$ is \emph{\bopa-hard}.
\end{restatable}

For more details and analysis of \bopa see \Cref{app:bapo*}.
In particular, we show that this new definition preserves all meaningful known properties of the model (i.e., all previously-mentioned BAPO-easy and BAPO-hard problems are cBAPO-easy and cBAPO-hard, respectively).

The motivation for studying \bopas for token complexity upper bounds is that they eliminate a surprising and undesirable property of BAPOs, namely that repeating the input twice allows a constant-bandwidth BAPO to solve any problem. Intuitively, this is because either the prefix oracle $f$ or the suffix oracle $h$ sees a whole copy of the input, regardless of the split index. This behavior is less problematic in the single-token setting as the standard BAPO cannot copy the input, but causes issues in the CoT setting.   

\begin{proposition}[The Input-Doubling Loophole]\label{prop:double-trouble}
Let $L$ be any language over $\Sigma$. Add a special token $| \notin \Sigma$ to the alphabet. Define the problem $p_L : \{x|x : x \in \Sigma^*\} \rightarrow \{0, 1\}$ where $p_L(x|x) = \mathbf1[x \in L]$. There is a $(1, 0)$-BAPO solving $p_L$.
\end{proposition}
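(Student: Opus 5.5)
The plan is to give an explicit $(1,0)$-BAPO and check the defining condition at every split index. The input is $w = x|x$ with $|x| = \ell$, so $n = 2\ell+1$ and the separator $|$ sits at position $\ell+1$. For any split $k<n$, either the prefix $w_1\dots w_k$ contains the separator ($k \ge \ell+1$), or the suffix $w_{k+1}\dots w_n$ contains it ($k \le \ell$). We take the attention function to be $g \equiv 0$, so with $b=0$ the attended set $G$ is always empty and plays no role. The single prefix bit will handle the first case, and the suffix oracle's unrestricted power will handle the second.

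\emph{Prefix oracle.} Define $f(w_1\dots w_k) = \mathbf 1[\text{the string before the first } | \text{ in } w_1\dots w_k \text{ is in } L]$ whenever $|$ occurs in the prefix, and $f = 0$ otherwise. Because $f$ is an arbitrary function of the prefix, it does not need to be computable. This is exactly why $L$ may be any language, and why the construction fails for \bopas.

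\emph{Suffix oracle.} The oracle $h(\beta, \emptyset, w_{k+1}\dots w_n, k)$ first checks whether $|$ occurs in the suffix.
\begin{itemize}
\item If it does, then $k \le \ell$. The portion of the suffix after the $|$ is a complete copy of $x$, since everything after the separator is the second copy. So $h$ outputs $\mathbf 1[\text{suffix after the } | \in L]$ and ignores $\beta$.
\item If it does not, then the separator lies in the prefix, so $k \ge \ell+1$. In that case $f$ saw the whole first copy of $x$, and $h$ outputs $\beta$.
\end{itemize}

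The only step needing care is showing that this case distinction is well defined and exhaustive. Because the problem's domain consists only of well-formed strings $x|x$, there is exactly one separator, and it lies either at or before position $k$, or after it. The boundary case $k=\ell+1$ puts $|$ as the last prefix token. The prefix then contains all of the first copy, so $f$ evaluates it correctly, and the suffix (the second copy) contains no $|$, so $h$ defers to $\beta$. The case $k=\ell$ puts $|$ first in the suffix, and the second copy follows it intact. In both cases the output equals $\mathbf 1[x\in L] = p_L(x|x)$, which proves the proposition. It also makes transparent the later remark that the same trick gives an $(n+1)$-token BAPO-CoT for any problem: the CoT simply copies the input.
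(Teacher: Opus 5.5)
Your construction is the same as the paper's: $f$ outputs $\mathbf 1[x\in L]$ when the separator is in the prefix and $0$ otherwise, while $h$ answers directly from the complete second copy of $x$ when the separator is in the suffix and otherwise forwards the prefix bit, with attention unused; your check of the boundary splits $k=\ell$ and $k=\ell+1$ correctly fills in what the paper leaves implicit. One small correction to your aside: the trick does not fail for cBAPOs because $f$ may be uncomputable, but because self-consistency would force the bit $\mathbf 1[x\in L]$ to be recoverable by a constant-bandwidth BAPO at splits inside the first copy of $x$ (where your $f$ outputs only $0$), and that already breaks for decidable BAPO-hard $L$ such as \textsc{Majority}.
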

\begin{proof}
Given input $x|x$ to the BAPO, $f$ outputs $1[x \in L]$ if $|$ appears in the prefix and 0 otherwise. The suffix oracle $h$ outputs $1[x \in L]$ if $|$ appears in the suffix, or outputs what it gets from $f$ otherwise. This BAPO has prefix bandwidth 1, doesn't use attention, and solves $p_L$.
\end{proof}

This makes upper bounds on token complexity unrealistically low as we can construct a linear-time constant-bandwidth BAPO-CoT algorithm for any problem. The BAPO-CoT only needs to write a second copy of the input, at which point it can use the input-doubling loophole to solve the problem. 

\begin{proposition}\label{prop:double-trouble-cot}
    For any decision problem $p: \Sigma^* \rightarrow \{0, 1\}$ and any $n$, there is a $(1, 1)$-BAPO-CoT that solves $p$ on inputs up to size $n$ with at most $n+1$ tokens.
\end{proposition}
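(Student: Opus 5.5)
The plan is to have the BAPO-CoT copy the input and then exploit the Input-Doubling Loophole (\Cref{prop:double-trouble}). Fix $n$ and take $\Gamma = \Sigma \cup \{|, \square, 0, 1\}$, which has size $\Theta(|\Sigma|)$. On input $x = x_1\dots x_\ell$ with $\ell \le n$, the CoT should be
\[
x \to x| \to x|x_1 \to \dots \to x|x,
\]
followed by the answer $p(x)$ and then the halt token $\square$. So the step-by-step problem $p'$ must satisfy the following. First, on a string in $\Sigma^*$ it outputs $|$. Second, on a string $x|y$ where $y$ is a proper prefix of $x$ it outputs $x_{|y|+1}$. Third, on $x|x$ it outputs $p(x)$. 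Fourth, on $x|x\,p(x)$ it outputs $\square$. The first step may alternatively be folded into the copying; in either case the count is about $\ell+1$ copy/separator tokens before the answer. I will match the paper's $n+1$ convention by counting the separator plus the $\ell$ copied tokens, with the answer and halt token handled as the definition's final steps.

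It remains to give a $(1,1)$-BAPO that computes $p'$ on every split $k$. The attention bandwidth handles the copying. The suffix always contains the last token, and the suffix oracle knows $k$ and the suffix. For the copy phase it needs $x_{j}$ with $j=|y|+1$.
\begin{itemize}
\item If position $j$ lies in the suffix, $h$ reads $x_j$ directly.
\item Otherwise $g$ puts positive score only on position $j$. The suffix contains the whole $|y$ part or its tail, so $|y|$ is computable from the suffix length and $k$ whenever $|$ is in the suffix.
\item The case where $|$ itself is in the prefix needs the prefix bit.
\end{itemize}

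The one prefix bit is used as in \Cref{prop:double-trouble}. If $|$ appears in the prefix, then the prefix contains a full copy of $x$, and $f$ outputs $\mathbf 1[x\in L]$, i.e.\ $p(x)$. Otherwise $f$ outputs $0$. Then:
\begin{itemize}
\item In the answer step, $h$ sees $|$ in the suffix, so the suffix contains the complete second copy and $h$ computes $p(x)$. If instead $|$ is in the prefix, $h$ uses the bit.
\item The halt step is detected from the suffix, since the last token is a $0/1$ answer token after a complete copy.
\end{itemize}

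The main obstacle is the copy phase when $|$ lies in the prefix. Here $h$ does not know $\ell$ and so cannot locate $x_j$. I would resolve this with the nonuniformity: the BAPO may depend on $n$, and the positions are $k$-indexed. Concretely, the token to copy at absolute position $t$ is $x_{t-\ell-1}$. I would attend with $g$ to every candidate pair $(|$ at position $i)$, using the single attention slot to retrieve the $|$ token's position $i$, which reveals $\ell = i-1$. But that single slot is then spent.

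Better option, which I will try first: change the CoT to write each copied token together with enough index information. Alternatively, since $\Gamma$ may contain $\Theta(|\Sigma|)$ symbols, mark the first copied token specially. Failing that, I would use the 1 prefix bit differently during copying. If $|$ is in the prefix, then the entire original $x$ is in the prefix, and the required next token is a function of the prefix together with the suffix length. The remaining issue is whether 1 bit plus 1 attended token suffice. Here $g$ can score position $t-\ell-1$ only if $\ell$ is known, and the suffix lacks this. I would handle it by letting $g$ assign scores decreasing in a way that the top-scoring token is the correct one. $g$ sees $x_i$ and $i$ and the suffix, but not whether position $i$'s relation to $|$ holds. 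This is the step I expect to need the most care.
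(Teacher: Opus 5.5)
Your proposal has a genuine gap, and it is the step you flag as the main obstacle and leave unresolved. Because the separator is written directly after an input of unknown length $\ell \le n$, any split with $k \ge \ell+1$ leaves only a tail $x_{k-\ell}\dots x_{j-1}$ of the partial copy in the suffix. That tail is consistent with many values of $\ell$, and the single prefix bit is already committed to $p(x)$, so neither $g$ nor $h$ can recover $\ell$.

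This breaks more than the copy step:
\begin{itemize}
\item At the answer step, $h$ must know that the copy is complete ($t = 2\ell+1$) before it outputs $f$'s bit.
\item At the halt step, a final token in $\{0,1\}$ does not certify that the answer has been written when $\Sigma \supseteq \{0,1\}$ (e.g.\ binary inputs), since it could be a copied symbol.
\end{itemize}
So your answer-step and halt-step bullets also tacitly assume that $\ell$ is known.

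None of your patches closes the gap:
\begin{itemize}
\item One attention slot cannot retrieve both $|$ and $x_{t-\ell}$. Since $g$ scores each prefix token separately, a larger constant $b$ would not help it find $x_{t-\ell}$ without knowing $\ell$ either.
\item Marking the first copied token only moves the same search one position to the right.
\item Annotating every CoT token with $\ell$ does work, but only by letting $|\Gamma|$ grow with $n$. That contradicts the $|\Gamma| = \Theta(|\Sigma|)$ you set yourself, which the token-complexity definition requires.
\end{itemize}

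The missing idea is to use nonuniformity to make the input length irrelevant by padding, which is what the paper does. Let $h$ output a blank $\visiblespace \notin \Sigma$ while the current length $m$ is below $n$, so the padded input always occupies exactly positions $1,\dots,n$. Every phase is then a function of $m = k + |\text{suffix}|$ alone, which $g$ and $h$ always know:
\begin{itemize}
\item For $n \le m < 2n$, $g$ attends to position $m-n+1$ and $h$ outputs that symbol, read from $G$ or from the suffix.
\item At $m = 2n$, $h$ outputs the answer.
\item At $m = 2n+1$, $h$ outputs $\square$.
\end{itemize}
The prefix bit is $p$ applied to the non-blank symbols among the first $\min(k,n)$ positions. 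This equals $p(x)$ whenever $k \ge n$. When $k < n$, the suffix at $m = 2n$ contains the entire padded copy in positions $n+1,\dots,2n$, so $h$ computes $p(x)$ directly.

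No separator is needed. That is also what makes the count on length-$n$ inputs exactly $n+1$ (the $n$ copied tokens plus the answer), whereas your separator scheme would spend $\ell+2$ (separator, $\ell$ copies, answer). The padding does add $n-\ell$ blanks on shorter inputs. This is immaterial for $c(n)$, since $\mathcal P_n$ contains only size-$n$ instances.
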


\begin{proof}
    The idea behind the BAPO-CoT is that it only needs to write down a second copy of the input, at which point the problem can be solved in one step by \Cref{prop:double-trouble}.
    
    Let \visiblespace{} be a symbol not in $\Sigma$. Let $m$ be the length of the input to the BAPO-CoT (with $m \le n$ at the first step). First, while $m < n$, $h$ outputs \visiblespace. The attention function attends to the symbol at index $m - n+1$. While $n \le m < 2n$, $h$ outputs the attended symbol at index  $m-n+1$, resulting in copying the input a second time.  The prefix oracle outputs $p(x')$, where $x'$ contains the input symbols in the prefix before the first \visiblespace{} (with $|x'|$ limited to $n$, in case no \visiblespace{} was written). Then, once $m = 2n$, either the prefix or suffix contains the entire original input. If the prefix does, then the prefix oracle's output is the solution to $p$, which $h$ outputs. Otherwise, the suffix contains the whole original input, so $h$ can output the solution to $p$. At $m = 2n+1$, $h$ outputs $\square$. This requires bandwidths $a=1$ and $b=1$ and solves $p$ on inputs up to length $n$ using $n+1$ CoT steps.
\end{proof}

The self-consistency constraint in the \bopa model prevents this by enforcing that the same computational constraints have to be applied to the prefix oracle, so it cannot solve the original problem if it is BAPO-hard.

\subsection{Upper Bounds via Turing Machine Simulation}
\citet{schnabel2025lost} show that we can construct a constant-bandwidth BAPO-CoT that simulates a Turing machine running on the problem with a number of CoT tokens scaling with the time and space complexity of the problem. This means that we can directly translate the complexity of classical algorithms into token complexity upper bounds.
For BAPO-CoTs, the $n+1$ upper bound from \Cref{prop:double-trouble-cot} makes this approach useless, as simulating Turing machines only yields higher bounds. However, we show that this approach is useful for \bopa-CoTs by proving that constant-bandwidth \bopa-CoTs can also simulate Turing machines. Moreover, we strengthen the one-tape Turing machine simulation of \citet{schnabel2025lost} to multi-tape, multi-head Turing machines, improving the token complexity bounds we obtain from this approach. For example, \Cref{thm:tm-ub} implies a token complexity bound for \textsc{Majority} of  $O(n^2)$ using the two-head Turing machine that matches up 0s and 1s, improving on the $O(n^3)$ upper bound implied by a single-head construction.  

\begin{theorem}\label{thm:tm-ub}
    Let $L \subseteq \{0, 1\}^*$ be a language decided by a Turing machine with $\tau$ tapes and $\eta$ heads per tape with time complexity $t(n)$ and auxiliary space complexity $s(n)$. For any $n$, there is a $(0, 2 + \tau\eta )$-\bopa-CoT that recognizes $L^{\le n} = \{x \in L : |x| \le n\}$ with $\Theta(t(n)\max(n, s(n)))$ CoT tokens, so the token complexity of $L$ is bounded by $O(t(n) \max(n, s(n)))$.
\end{theorem}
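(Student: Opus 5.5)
We will build an explicit \bopa-CoT that writes out the Turing machine's successive configurations in the chain of thought. Take $a=0$, so the prefix oracle is constant and self-consistency holds trivially; all work is done by attention and the suffix oracle. Every CoT token must then be a function of a constant number of attended tokens plus the suffix and the split index $k$. Since the split may be arbitrary, the suffix may contain nothing useful, so the construction must be robust to the worst split. Concretely, each CoT token has to be determined by $O(1)$ tokens at positions computable from the current length alone.

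\textbf{Tape encoding and its cost.} We encode configurations as fixed-length blocks. Let $B = n + s(n) + O(1)$ be the block length. We write the $\tau$ tapes one after another (or interleaved track-wise) as cells over an alphabet $\Gamma$ of constant size $\Theta(|\Sigma|)$. Each cell token records its tape symbol, for each of the $\eta$ heads whether that head sits there, and, in a designated cell, the finite-control state. The transition rule is local. Cell $j$ of configuration $t+1$ depends on cells $j-1, j, j+1$ of configuration $t$ on the same tape. It also depends on the current state and on the symbols under all $\tau\eta$ heads, since the transition depends on all of them. Writing one block costs $B$ tokens, so $t(n)$ steps cost $\Theta(t(n)\max(n,s(n)))$ tokens. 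The initial block copies the input, costing another $O(n)$ tokens. After the halting configuration we emit the answer and then $\square$.

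\textbf{Bounding the attention budget.} When producing the token at position $m$, its index inside the current block and its block number follow from $m$ (known as $k$ plus the suffix length), because $B$ is fixed once $n$ is fixed; this uses nonuniformity. The attention function $g$ gives positive score only to positions it must read. First, it reads the neighbouring cells $m-B-1$, $m-B$, $m-B+1$ in the previous block. Second, it reads the $\tau\eta$ cells of the previous block carrying head markers; these are located by marker content in the token itself, using the argmax over scores. The state is stored inside a head-marked token, so it is retrieved with no extra budget.

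The main obstacle is keeping this within attention bandwidth $2+\tau\eta$ rather than $3+\tau\eta$. We would try two fixes. One is to have each cell token redundantly carry its left neighbour's symbol, so that only two previous-block neighbours are needed. The other is to observe that $j$ and $j\pm1$ being relevant only matters near a head, and the neighbour information is already recoverable from the head-cell tokens. We also have to check that ties and zero scores never cause a wrong $G$. Scores are arranged so that exactly the needed tokens have positive, distinct values and at most $2+\tau\eta$ tokens are positive. Then every argmax set is forced, and because $h$ receives positions, it can identify each attended token. Robustness across all $k$ follows because $h$ uses only $G$, $k$, and the suffix length, never suffix contents.

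Finally, we verify the \bopa-CoT conditions of \Cref{def:bapo-cot}: the output precedes $\square$, and it halts exactly once. We also verify that the count is $\Theta$, not merely $O$, because every block is written in full.
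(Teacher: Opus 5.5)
Your construction is essentially the paper's. It uses $a=0$ (so self-consistency is free) and writes each configuration as a fixed-length chunk of $\tau(n+s(n))$ cell tokens recording the tape symbol and the set of heads on it, plus the state. Each new token is computed from the same cell of the previous chunk together with that chunk's head-marked cells.

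The one step you leave open is fitting into bandwidth $2+\tau\eta$. Commit to your second fix, which is exactly what the paper does: the neighbours $m-\ell\pm 1$ are never needed.
\begin{itemize}
\item Since $G$ reports indices, the head-marked tokens give every head's position and scanned symbol. Together with the state, $h$ therefore knows the whole transition: the new state, the written symbols and the moves.
\item Cell $j$'s symbol changes only if a head sat on $j$, and then the written symbol is known.
\item The heads on $j$ in the new configuration are those that were at $j-1$ and moved right, at $j$ and stayed (or moved left at the leftmost cell), or at $j+1$ and moved left. All of these are read off from the old head positions.
\end{itemize}
So the paper attends only to the cell at $m-\ell$, the state token of the previous chunk, and the at most $\tau\eta$ head-marked tokens of that chunk, giving $2+\tau\eta$. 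Note that your proposal puts the state both ``in a designated cell'' and ``inside a head-marked token''; pick one. A designated state token must be attended separately, which is the paper's count. Replicating the state in every head-marked token would even give $1+\tau\eta$. Your first fix is harmless but redundant: a neighbour's symbol matters only when a head is on it, and then the head-marked token already supplies it.

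One correction to your robustness argument: $h$ cannot avoid the suffix contents. The attention function only scores prefix positions $i\le k$. Whenever the cell at $m-\ell$, the state token, or a head-marked cell of the previous chunk lies after the split, it is absent from $G$, and $h$ must read it directly from the suffix. What makes the construction independent of $k$ is narrower: $h$'s output depends only on these $O(1)$ designated tokens and on $m = k + (\text{suffix length})$. Each designated token is obtained from $G$ or from the suffix according to which side of the split it falls. With these adjustments your argument coincides with the paper's proof.
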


\subsection{Upper Bounds via Direct Construction}
Given a particular problem, we can establish tighter token complexity bounds through directly giving an efficient \bopa-CoT algorithm. The main challenge is that constant-bandwidth \bopas are quite limited in the operations they can perform. To begin with, we lower the $O(n^2)$ upper bound for \textsc{Majority} to $O(n \log n)$ using a bit-by-bit counting CoT algorithm. This leaves a small gap to our $\Omega(n)$ lower bound. In practice, token alphabets for LLMs are large enough that counts can be written out as single tokens (rather than $\Theta(\log n)$) for reasonably-sized problem instances, so this factor is unlikely to appear. However, we conjecture that with a binary alphabet, $O(n \log n)$ is tight.

\begin{theorem}\label{thm:majority-ub}
    There is a $(0, 2)$-\bopa-CoT for \textsc{Majority} over $n$-bit inputs that uses $\Theta(n \log n)$ CoT tokens, so the token complexity of \textsc{Majority} is at most $O(n \log n)$.
\end{theorem}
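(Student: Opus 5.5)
The plan is an explicit $(0,2)$-\bopa-CoT that keeps a running count of ones, written in binary, and increments it once per input bit. Because the prefix bandwidth is $a=0$, the prefix oracle outputs nothing. Self-consistency is therefore trivial: an empty-output $f$ is BAPO-consistent via the constant $h$. All the work is done by the attention function $g$ and the suffix oracle $h$, which see the entire suffix. Since the split index $k$ is arbitrary, the key constraint is that $h$ must produce the next token correctly even when the suffix is just the last token. So every step must be computable from at most two attended tokens (each with its position) plus the last token and $k$.

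To meet this constraint, I will use a fixed-width layout. Set $L=\lceil\log_2(n+1)\rceil$ and use the CoT alphabet $\{0,1,\square,\#\}$ plus a constant number of marker symbols, so that $|\Gamma|=\Theta(|\Sigma|)$. The CoT is a sequence of $n+1$ blocks $C_0 C_1\cdots C_n$. Each block has exactly $L+1$ tokens: a separator $\#$ followed by $L$ bits storing, least significant bit first, the number of ones among $x_1\dots x_i$. Because $n$ is fixed (the notion is nonuniform), the current absolute position $N$ (the input length so far) determines exactly which block $i$ and which bit offset $j$ is being written. The position of the last token is $n+k+|\text{suffix}|$, so $h$ can recover $N$.

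To write bit $j$ of $C_i$, $h$ needs three things: bit $j$ of $C_{i-1}$, the input bit $x_i$, and the carry into position $j$. I will have $g$ assign score $1$ to exactly the two positions holding $x_i$ and bit $j$ of $C_{i-1}$, and score $0$ elsewhere. This works because $g$ receives the suffix and $k$, hence $N$, and checks whether each candidate index equals one of two target indices. If a target token lies in the suffix rather than the prefix, $h$ reads it directly. Either way $h$ obtains both tokens. \textbf{The main obstacle is the carry}, since recomputing it would require attending to all lower bits. My plan is to make the carry explicit in the CoT: each block stores, for each bit position, a pair token from $\{0,1\}^2$ encoding (sum bit, carry-out). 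The carry into position $j$ is then the carry-out just written at offset $j-1$ of $C_i$, which is the last token of the current sequence and always lies in the suffix. This keeps the attention bandwidth at $2$ and the alphabet constant.

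For termination, after $C_n$ the CoT must emit the answer. The answer is $1$ if and only if the count exceeds $n/2$. Deciding this requires comparing the $L$-bit count with the constant $\lfloor n/2\rfloor$, which cannot be done with two attended tokens at once. I will append one more pass of $L$ tokens that carries out a serial comparison, from least to most significant bit, with a running ``greater-than so far'' flag. Each step attends to one bit of $C_n$ and reads the previous flag token from the end of the sequence. The final flag is the answer, and then $h$ outputs $\square$. The total length is $(n+2)(L+1)+O(1)=\Theta(n\log n)$, which gives the bound. I still need to check that every step's targets are determined by $N$ alone, which holds, and that the argmax set $G$ is unique, which holds because exactly two positions have positive score.
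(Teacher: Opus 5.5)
Your proposal is correct and is essentially the paper's construction. Both write a little-endian binary counter once per input bit. In both, $g$ attends to the current input bit and to the same-offset bit of the previous counter, and the carry is read from the just-written last token (your pair tokens play the role of the paper's $0^c$ symbol). Both finish with a serial comparison against $\lfloor n/2\rfloor$; you scan least-significant-first with a running flag, while the paper scans most-significant-first and halts at the first differing bit.

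Two small remarks. First, the current length is $N = k + |\text{suffix}|$ rather than $n + k + |\text{suffix}|$. Second, your width $L = \lceil \log_2(n+1)\rceil$ is the right choice: the paper's $\lceil \log_2 n\rceil$ bits overflow when $n$ is a power of two and every input bit is $1$.
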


Next, we design a \bopa-CoT algorithm for \textsc{Match3}$_n$  using $O(n)$ steps, showing our lower bound is tight. The idea is to use the CoT as a scratchpad to simplify \textsc{Match3}$_n$ into a sequence of \textsc{Match2}$_n$ instances, each of which is \bopa-easy. 
\begin{theorem}\label{thm:match3-ub}
    There is a $(0, 2)$-\bopa-CoT for \textsc{Match3}$_n$ over $\mathbb Z_m^n$ that uses $O(n)$ CoT tokens, so the token complexity of \textsc{Match3}$_n$ is at most $O(n)$.
\end{theorem}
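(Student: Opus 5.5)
We build an explicit $(0,2)$-\bopa-CoT that reduces \textsc{Match3}$_n$ to a sequence of \textsc{Match2}-style checks, one per candidate index $i$. Given input $x_1\dots x_n \in \mathbb Z_m^n$, the question is whether some $i<j<n$ has $x_j \equiv -(x_i + x_n)$. For $i = 1,\dots,n-2$ the CoT writes one token $c_i = -(x_i + x_n) \bmod m$. Here $\Gamma = \mathbb Z_m \cup \{0,1,\square\}$ plus a constant number of markers, which satisfies $|\Gamma| = \Theta(|\Sigma|)$.

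After the targets are written, the CoT emits the answer bit $y$, defined as $1$ iff some $c_i$ equals some $x_j$ with $i<j<n$, and then emits $\square$. This uses $(n-2)+2 = O(n)$ tokens. Because $f$ outputs zero bits, self-consistency is trivial, exactly as in \Cref{prop:no-tm-simulation}. So it only remains to check that each step is solvable by $h$ and $g$ with attention bandwidth $2$.

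\textbf{Target-writing steps.} At step $t$ the current string is $x_1\dots x_n c_1\dots c_{t-1}$, and its length tells $h$ which step it is on. We need $x_t$ and $x_n$. The attention function gives score $1$ to position $t$ and to position $n$ whenever these lie in the prefix, and score $0$ everywhere else. This is legitimate because $g$ receives $k$ and the suffix, and therefore knows the total length. Any position not in the prefix is visible to $h$ directly in the suffix, so $h$ always sees both values and computes $c_t$.

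\textbf{Answer step.} This is the \textsc{Match2}-like step, and it is the main obstacle. We must decide whether there are $i<j<n$ with $x_j = c_i$ using only two attended tokens and no prefix bits. The usual score-based trick works. For each suffix token $c_i$ sitting at position $n+i$, $h$ needs to know whether a matching $x_j$ with $j>i$ exists in the prefix. Since $g$ receives the whole suffix together with the candidate prefix token $(x_j, j)$, it can assign score $1$ to $(x_j,j)$ exactly when there is a suffix target $c_i$ with $i<j$ and $c_i = x_j$, and score $0$ otherwise. Tokens with score $0$ are excluded from $G$, so $G$ is nonempty iff a match exists with $x_j$ in the prefix and $c_i$ in the suffix.

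The cases break down as follows.
\begin{itemize}
\item Both $x_j$ and $c_i$ in the suffix: $h$ checks this pair directly.
\item Both in the prefix: impossible, because every $c_i$ comes after every $x_j$.
\item $c_i$ in the prefix and $x_j$ in the suffix: also impossible for the same reason.
\end{itemize}
Hence $h$ outputs $1$ iff $G \neq \emptyset$ or it finds a match inside the suffix. One case needs care: the split lands among the $x$'s, so $x_n$ is in the suffix and all targets $c_i$ are in the suffix as well. Then the argument above still applies, since $h$ sees every $c_i$ and can compare against prefix values only through $g$, which is exactly what the score rule handles.

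\textbf{Final step.} $h$ sees that the last token is the answer bit and outputs $\square$. We also check that the indexing convention $i<j<n$ stays consistent, and that inputs with $n<3$ are handled by outputting $0$ immediately. With that, the construction uses $O(n)$ CoT tokens with bandwidth $(0,2)$, which gives the stated bound.
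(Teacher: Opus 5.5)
\textbf{The answer step fails.} Your target-writing phase is sound, but the single answer step does not work. In a BAPO-CoT the split index $k$ ranges over \emph{every} position of the current string $x_1\dots x_n c_1\dots c_{n-2}$, not only positions among the $x$'s. Your claim that ``both in the prefix'' is impossible because every $c_i$ comes after every $x_j$ confuses the order of positions with the location of the cut. For any $k>n$ the prefix is $x_1\dots x_n c_1\dots c_{k-n}$, so a match between some $c_i$ with $i\le k-n$ and some $x_j$ lies entirely in the prefix. Your $g$ only compares prefix tokens against targets visible in the suffix, and with $a=0$ nothing else crosses the cut.

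Concretely, take an instance whose only match is $x_1+x_2+x_n\equiv 0$ and split at $k=2n-3$, so the suffix is just $c_{n-2}$. Then $G=\emptyset$, the suffix contains no match, and $h$ outputs $0$, while the correct answer is $1$. A cleverer $g$ cannot rescue this step. Each attention score depends only on the suffix and a single prefix token, and there are no prefix bits. So in general no single $(0,2)$ step can decide whether some target sitting in the prefix equals some admissible $x_j$ also in the prefix.

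\textbf{The missing idea.} You need to ensure that the target being searched for is always in the suffix. The paper does this by interleaving. After writing $c_i=-(x_i+x_n)$, it spends one search step in which $c_i$ is the \emph{last} token of the current string, and the last token is in the suffix for every admissible $k$. In that step:
\begin{itemize}
\item the attention function gives nonzero score exactly to prefix tokens equal to $c_i$;
\item $h$ checks suffix $x$'s directly;
\item the step writes a bit $0$ or $1$, and the CoT halts at the first $1$ (or after the final target's search).
\end{itemize}
This costs about $2n$ tokens at bandwidth $(0,2)$, which still gives $O(n)$.

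Your restriction of the match to indices $i<j<n$ is worth keeping in that scheme. It prevents $x_i$ from being reported as its own partner when $2x_i+x_n\equiv 0$, a case that a search over all indices below $n$ would misreport.
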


For \textsc{Reachability}, the idea is to perform a graph search (like BFS/DFS), tracking the set of nodes being explored and the set of visited nodes in the CoT.
Here, we find a $O(n^2)$ upper bound, which is tight with our $\Omega(m)$ lower bound in dense graphs with $m = \Theta(n^2)$.

\begin{theorem}\label{thm:reachability-ub}
    There is a $(0, 3)$-\bopa-CoT for \textsc{Reachability} over graphs with $n$ nodes and $m$ edges that uses $O(n^2)$ CoT tokens, so the token complexity of \textsc{Reachability} is at most $O(n^2)$.
\end{theorem}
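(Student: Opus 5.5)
We construct a $(0,3)$-\bopa-CoT that runs a breadth-first search from $s$ and writes its state into the CoT. Since the prefix bandwidth is $a=0$, self-consistency holds trivially. Every step therefore only has to be computable by a suffix oracle $h$ that sees the suffix together with at most three attended prefix tokens, where the attention scores $g$ may depend on the suffix. The CoT alphabet is $\Gamma=\Sigma\cup\{\square,\mathsf{V},\mathsf{F},\#,\ldots\}$ with a constant number of marker tokens, so $|\Gamma|=\Theta(|\Sigma|)$.

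\textbf{CoT layout.} The CoT is a sequence of \emph{rounds}. Round $r$ writes a separator $\#$ and then one token for each node $v\in[n]$ in increasing order: $\mathsf V$ if $v$ has been reached after $r$ rounds, and $\mathsf F$ otherwise. Each round costs $n+1$ tokens. At most $n$ rounds are needed, because reachability stabilizes after $n-1$ rounds of frontier expansion. This gives $O(n^2)$ CoT tokens in total. After the last round, the CoT writes the bit for $t$ and then $\square$.

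\textbf{Single step.} The key claim is that each next token depends on only $O(1)$ earlier tokens, and that these tokens can be located by attention scores computed from the suffix regardless of the split index $k$. To produce the token for node $v$ in round $r$, the model needs two facts:
\begin{enumerate}
\item whether $v$ was visited in round $r-1$, which is a single token at a known offset $n+1$ positions back;
\item whether there exists an edge $(u,v)$ in the input such that $u$ was visited in round $r-1$.
\end{enumerate}
The current position, and hence $v$ and the offset, is determined by the token count. The index $k$ is given to $h$, and the total length is also known to $h$, so $h$ can compute the index of the needed token directly.

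Fact 2 is the real difficulty. It is an existential query over the edges, and the relevant round-$(r-1)$ token for $u$ may sit in the prefix while the edge token sits in the suffix, or vice versa. I would avoid the joint lookup by having each round record the visited set in a form that makes the disjunction attendable with a single token. One option is to interleave, for each visited node $u$, its outgoing edges copied from the input. That would blow up the length, so instead I would use a \textsc{Match2}-style observation: for a fixed $v$, the query ``exists $u$ with $(u,v)\in E$ and $\mathrm{vis}_{r-1}(u)$'' reduces to a \textsc{Disjointness}/\textsc{Index}-type query, which is BAPO-easy. Concretely, attention scores $g$ place weight on edge tokens $(u,v)$ whose $u$-marker in round $r-1$ is $\mathsf V$. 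This is not directly possible, because $g$ scores one prefix token at a time and can condition only on the suffix. I would therefore split into cases according to where the round-$(r-1)$ block lies.

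\textbf{Case analysis on the split index $k$.} If $k$ lies in the input, the whole previous round is in the suffix. Then $g$ gives score $1$ to the prefix edge tokens $(u,v)$ whose $u$ is marked $\mathsf V$ in the suffix, and $h$ also checks the suffix edges directly; one attended token suffices. If $k$ lies inside or after the CoT, all edges are in the prefix while the round-$(r-1)$ markers are split. The problematic sub-case is when some markers of round $r-1$ are in the prefix together with all the edges. To handle it, I would refine the round so that it emits the node token $v$ together with its bit, written as pairs. Then $g$ can score an edge $(u,v)$ only relative to visible information, and the fallback is to change the schedule so that processing node $v$ in round $r$ costs $O(1)$ but the ordering guarantees that the needed markers are always in the suffix, which happens whenever they are recent. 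I expect this bookkeeping to be the main obstacle: ensuring that for every $k$ the existential edge-plus-marker test is resolved with at most three attended tokens, using $\le 3$ for the previous marker, a witnessing edge, and a witnessing marker. If necessary, I would slow each round down by a constant factor, for example by re-copying the visited bits adjacent to their nodes, which keeps the total at $O(n^2)$.
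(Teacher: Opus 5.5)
Your construction has a genuine gap at exactly the step you flag as the real difficulty (fact~2), and the remedies you sketch do not close it. A \bopa-CoT must be correct for every split index. So take $k$ to be one less than the current length: the suffix is then just the last CoT token. With $a=0$, the oracle $h$ sees only that token, $k$, and at most three prefix tokens, ranked by scores $g(\text{last token},k,y_i,i)$ that are computed for each prefix token \emph{separately}. In your layout, a witness for fact~2 is a conjunction of two different prefix tokens: the input edge $(u,v)$ and the round-$(r-1)$ marker of $u$. The score of the edge token cannot depend on that marker. So when $v$ has many in-neighbors and only some are visited, nothing guarantees that a witnessing edge lands among the attended tokens, and the markers alone say nothing about edges. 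Your remedies do not help. ``The needed markers are in the suffix whenever they are recent'' fails because the adversary may leave only one token in the suffix. Re-copying visited bits next to node names still leaves each bit in a different token from the input edge it must be paired with. The root cause is that your per-step query depends on the whole previous visited set. Only the last token is guaranteed to be visible to $g$, and one token from a $\Theta(n^2)$-size alphabet cannot encode that set.

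The missing idea, which is what the paper does, is to expand \emph{one} node per block and to store that node inside every CoT token. This collapses the existential query into single-token lookups.
\begin{itemize}
\item The CoT alphabet is $V\times V\times\{0,1\}\times\{0,1\}$ (plus $0,1,\square$). This still has size $\Theta(|\Sigma|)$, because $\Sigma$ already contains all $n^2$ edges. A token $(u,j,\text{visited},\text{active})$ records the current node $u$ and the status of node $j$.
\item The last token always reveals $u$, and the position reveals $j$. So the three attended tokens are the input edge $(u,j)$, the unique token $(j,j,1,0)$ marking $j$ as already visited, and the previous block's token for $j$ (at a fixed offset) recording whether $j$ is active. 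Each is something $g$ can target for every $k$.
\item Between blocks, a fetch step picks the next current node by scoring active unvisited tokens with $i/s$, so the rightmost one wins. It also attends to $t$ to decide termination.
\end{itemize}
Each node is expanded at most once, giving at most $n$ blocks of $n+1$ tokens and hence $O(n^2)$ tokens. This is the same count you targeted, but with every step actually computable by a $(0,3)$-\bopa.
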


Finally, we provide an example of a BAPO-hard problem that has \emph{sublinear} token complexity---all of the examples we have seen thus far have at least linear token complexity. 
\begin{definition}\label{def:pointer-chasing}
    $\kappa(n)$-\textsc{PointerChasing}: $[n]^n \rightarrow [n]$ is the problem of computing $q^{\kappa(n)}(1)$ given $q$. We encode it with token alphabet $\Sigma = [n] = \{1, \dots, n\}$, where the input writes out $q$ as $q(1) q(2)\dots q(n)$. 
\end{definition}

In \Cref{app:pointer-chasing}, we show that \textsc{$\kappa(n)$-PointerChasing} is BAPO-hard for $\kappa(n) = \omega(1)$ (\Cref{thm:pointer-chase-hard}). However, it has token complexity $O(\kappa(n))$; picking, e.g., $\kappa(n) = \log n$ or $\sqrt n$ yields sublinear token complexity.

\begin{theorem}\label{thm:pointer-chase-upper-bound}
    There is a $(0, 1)$-\bopa-CoT solving $\kappa(n)$-\textsc{PointerChasing} on inputs of length $n$ that uses $\kappa(n)$ CoT tokens, so the token complexity of $\kappa(n)$-\textsc{PointerChasing} is $O(\kappa(n))$.
\end{theorem}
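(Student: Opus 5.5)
The construction follows the pointer chain directly in the CoT: step $t$ writes the value $v_t = q^t(1)$ as a single token, for $t = 1, \dots, \kappa(n)$. The CoT alphabet is $\Gamma = [n] \cup \{\square\}$, which has size $\Theta(|\Sigma|)$. The algorithm uses no prefix bandwidth ($a = 0$, so $f$ is constant and trivially BAPO-consistent) and attention bandwidth $b = 1$. The last written token is the answer $q^{\kappa(n)}(1)$, and the following step emits $\square$. The token count is $\kappa(n)$, or $\kappa(n)+1$ if the halt token is counted; the $O(\kappa(n))$ bound holds either way.

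Each step is a BAPO on the current string $s = q(1)\dots q(n)\, v_1 \dots v_{t-1}$ of length $N = n + t - 1$, and its behaviour depends only on the current pointer. This pointer is $v_{t-1}$, or $1$ when $t = 1$; call it $u$.

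In every case the suffix oracle can read off both $t = N - n + 1$ and $u$. It knows $N$ from $k$ and the suffix length. Since $k < N$, the last token $s_N$ always lies in the suffix, so $u = v_{t-1}$ is visible there when $t \ge 2$, and $u = 1$ is known when $t = 1$. The next value $q(u)$ sits at input position $u$.

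The attention function gives the only positive score, namely $1$, to the token at index $u$, i.e. $g(\cdot, k, x_i, i) = \mathbf 1[i = u]$. It is computable from the suffix and $k$ for the same reasons as above.

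The suffix oracle then distinguishes three cases:
\begin{itemize}
\item If $u \le k$, the attended set $G$ is exactly $\{(q(u), u)\}$, because the argmax with $b = 1$ is unique: one positive-score token. So $h$ outputs $q(u)$.
\item If $u > k$, position $u$ is in the suffix, and $h$ reads $q(u)$ there directly.
\item If $t = \kappa(n) + 1$, i.e. $N = n + \kappa(n)$, then $h$ outputs $\square$.
\end{itemize}

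Finally I check the conditions of \Cref{def:bapo-cot}. The output is correct at every split $k$ and for every admissible $G$. The halt token appears exactly at the last step, and the step just before it outputs $v_{\kappa(n)} = q^{\kappa(n)}(1)$.

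The main point needing care is that the answer must not depend on how the CoT positions are split. It does not, because the pointer always lives in the final token, which is in the suffix, and the target position is either attended or already in the suffix. Self-consistency is immediate since $a = 0$. For $\kappa(n) = 0$ the edge case is trivial: output $1$, then halt.
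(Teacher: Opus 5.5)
Your proposal is correct and follows the paper's proof essentially step for step: you write $q^t(1)$ at CoT step $t$, have $g$ attend only to the input position named by the last CoT token (index $1$ at the first step), let $h$ read $q(u)$ from $G$ or directly from the suffix, and emit $\square$ after $\kappa(n)$ steps, with $a=0$ making self-consistency trivial. Your extra bookkeeping only makes explicit what the paper leaves implicit: recovering $t$ from $k$ and the suffix length, noting that the attended set is unique, and checking the indexing conditions of the BAPO-CoT definition.
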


\begin{figure*}
    \centering
    \includegraphics[width=\linewidth]{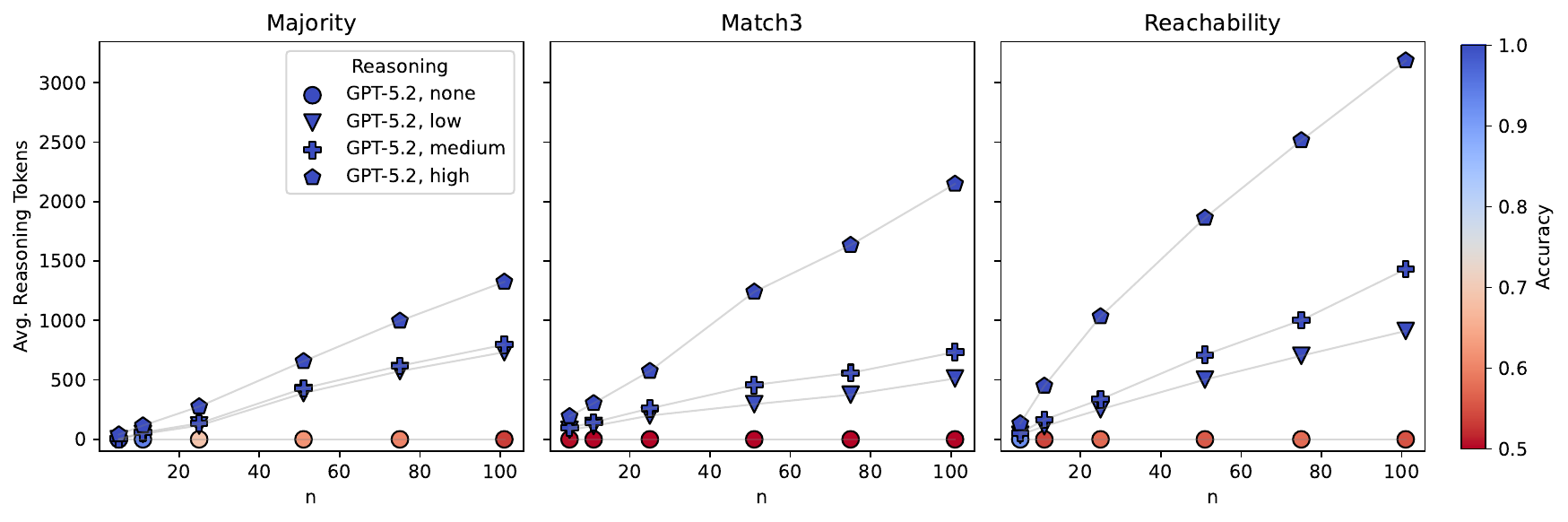}
    \vspace{-6mm}
    \caption{Reasoning token usage and accuracy with different reasoning levels for GPT-5.2. With no reasoning, accuracy quickly degrades to random chance. All reasoning levels achieve near-perfect accuracy, but with approximately linear scaling in the number of tokens used. Gemini 2.5 Pro exhibits much higher token usage and worse accuracy on \textsc{Majority} (see \Cref{fig:gemini-reasoning}).}
    \label{fig:gpt-5-reasoning}
    \vspace{-2mm}
\end{figure*}

\section{Experiments}

We test how frontier LLMs perform reasoning on \textsc{Majority}, \textsc{Match3}$_n$, and \textsc{Reachability} and show that their behavior aligns with BAPO-CoT theory. We generate yes and no instances of the three problems as described in \Cref{app:problem-generation}. We test problem size parameters $n = 5, 11, 25, 51, 75, 101$ (for \textsc{Reachability}, we generate $m \le 3n$ edges to ensure the input size is linear in the number of nodes $n$). For each instance size and model, we generate 250 positive and 250 negative instances.

One challenge of these experiments is that different model providers allow different levels of control and visibility into the internal reasoning of their models, 
which is critical for our experiments. For instance, the Claude models do not expose the number of reasoning tokens, have no option to disable internal reasoning, and set a minimum of 1024 for the \texttt{thinking\_budget} parameter, making our experiments impossible. We therefore test GPT-5.2, which has a \texttt{reasoning\_effort} parameter that may be set to \texttt{none} to disable reasoning or to \texttt{low}, \texttt{medium}, or \texttt{high}; and Gemini 2.5 Pro\footnote{We were unable to test Gemini 3 Pro as our access tier is limited to 250 requests per day.}, which allows controlling the \texttt{thinking\_budget} (128--32768 tokens). The GPT-5.2 and Gemini 2.5 Pro APIs expose the number of internal reasoning tokens though the \texttt{reasoning\_tokens} and \texttt{toughts\_token\_count} fields, respectively.

Code for reproducing our experiments is available at \url{https://github.com/microsoft/bapo-cot}.

\begin{figure*}[ht!]
    \centering
    \includegraphics[width=\linewidth]{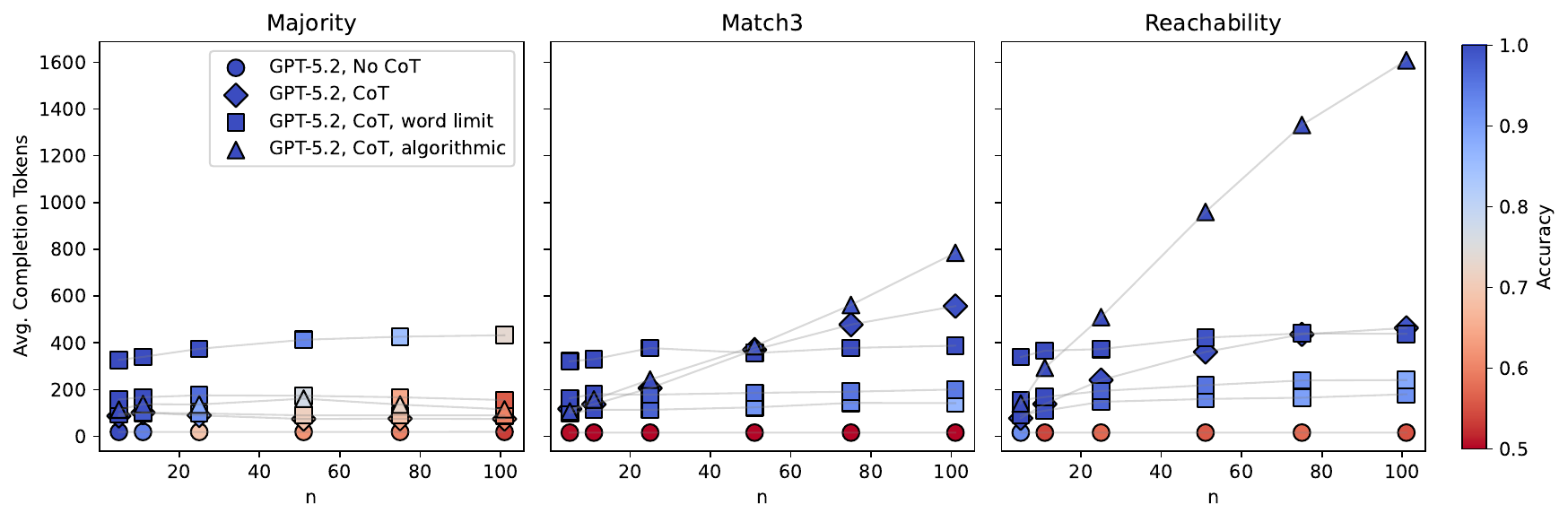}
    \vspace{-6mm}
    \caption{Performance of GPT-5.2 with different CoT prompting approaches (all with \texttt{reasoning\_effort = none} to disable internal reasoning). With fixed word limits, performance degrades with input length. Both the plain and algorithmic CoT approaches allow performance to stay high, at the expense of approximately linear token cost. The poor performance of CoT for \textsc{Majority} is due to refusals to do step-by-step counting (see \Cref{app:example-cots}). }
    \label{fig:cot-approaches}
    \vspace{-2mm}
\end{figure*}

\subsection{How Does Token Complexity Scale Natively?}
For both GPT-5.2 and Gemini 2.5 Pro, we first test how many tokens their internal reasoning uses natively, with no prompting to do chain of thought (see \Cref{app:prompts} for all prompts; \Cref{fig:gpt-5-reasoning} uses the ``No CoT'' prompts). \Cref{fig:gpt-5-reasoning} shows that with reasoning disabled or set to the minimum budget, accuracy rapidly degrades to random guessing for both models. On the other hand, at all reasoning efforts, GPT-5.2 achieves near-perfect accuracy on all problems, but with reasoning token counts scaling linearly with problem size. This matches our theoretical lower bounds to solve these tasks. Gemini 2.5 Pro also performs much better with reasoning, but uses a massive number of tokens for \textsc{Match3}$_n$ and \textsc{Majority} (8000--9000 at $n=101$; see \Cref{fig:gemini-reasoning} in \Cref{app:plots} for Gemini results). On \textsc{Majority}, Gemini's accuracy is improved by reasoning, but degrades to $\sim 0.75$ at larger $n$. These experiments highlight that (1) reasoning is needed to solve these BAPO-hard problems and (2) models' native token usage needs to scale at least linearly with instance size, in line with our theoretical results. \Cref{fig:easy-scaling} in \Cref{app:plots} shows contrasting experiments on three BAPO-easy tasks, where token usage is far lower and often plateaus, with high accuracy nevertheless. For those problems, especially \textsc{Max} and \textsc{Equality}, models can achieve near-perfect accuracy without reasoning, and when reasoning is enabled, they do not spend more tokens as instance-size scales.

\subsection{What Happens If We Limit CoT Length?}
Next, we test CoT externalized in the output rather so that we can control it through prompting. For this experiment, we use GPT-5.2 with \texttt{reasoning\_effort = none}, as Gemini 2.5 Pro does not allow fully disabling internal reasoning. We test three different CoT prompts: (1) a basic CoT prompt instructing the model to ``think step by step,'' (2) a CoT with word budget (``use exactly $k$ words'', for $k = 50, 100, 200$), and (3) an explicit instruction to follow the algorithm used in our \bopa-CoT upper bounds (counting ones, searching for matching pairs, or running BFS/DFS). All prompts are shown in \Cref{app:prompts}.

\Cref{fig:cot-approaches} plots the external CoT approaches with GPT-5.2. For \textsc{Match3}$_n$ and \textsc{Reachability}, we again see token counts scaling approximately linearly with the plain and algorithmic CoT prompts, and a large gain in accuracy from CoT. The 50- and 100-word-limit CoT prompts only partially close the accuracy gaps, indicating they are below the token complexity of the problem. However, the 200-word-limit CoT appears long enough for $n = 101$ to achieve near-perfect accuracy for these two problems, indicating significant inefficiencies in the model's internal reasoning (cf.\ \Cref{fig:gpt-5-reasoning}). For \textsc{Majority}, GPT-5.2 struggles with external CoT, often explicitly refusing to count tokens step by step as instructed or jumping immediately to an incorrect final count (see \Cref{tab:majority-examples}, \Cref{app:example-cots}), hence the poor performance and low token usage of the CoT prompts. \Cref{app:example-cots} includes example CoTs for all three problems.

\subsection{Do Models Naturally Decompose Problems into BAPO-Easy Steps?}
The failure or refusal of GPT-5.2 to count 1s one at a time (as seen in \Cref{tab:majority-examples}, \Cref{app:example-cots})  highlights that the existence of a BAPO-easy decomposition is not a guarantee that an LLM will execute it, even when instructed. This is even true when the model is clearly capable of the decomposition: while we cannot see the internal reasoning applied by the model in \Cref{fig:gpt-5-reasoning}, it seems likely, given the linear scaling of CoT tokens, that GPT-5.2 with \emph{internal} reasoning does reliably count 1s when faced with \textsc{Majority}. This raises an important question: for what problems do models effectively perform BAPO-easy decompositions when reasoning? Interestingly, with only an instruction to ``think step by step,'' GPT-5.2 often applies the linear BAPO-CoT algorithm for \textsc{Match3}$_n$ used in our construction (see \Cref{tab:match3-examples} in \Cref{app:example-cots}, e.g., ``Compute needed complement for each a''). For \textsc{Reachability}, the plain CoT prompt also elicits a graph search (see \Cref{tab:reachability-examples}); this is perhaps unsurprising, as it is the most natural way to approach the problem. The case of \textsc{Majority} does show that models can take shortcuts (e.g., trying to jump to the ones count in a single step), leading to failures. For other problems with less obvious BAPO-easy decompositions, or with more tempting shortcuts, it seems likely that providing explicit guidance for how to decompose a problem or training models to perform BAPO-easy decompositions could help performance.

\section{Related Work}

Several papers show theoretically how CoT increases the power of transformers. \citet{perez2021attention} show that encoder-decoder transformers with unbounded CoT-style steps over rational vectors are Turing-complete.  \citet{merrill2024expressive} show that any problem in \textsf{TIME}$(t(n))$ is solvable by a decoder-only transformer in at most $t(n)$ CoT steps, a finer-grained characterization. \citet{GuhaoCoT2023} argue that CoT enables transformers to solve dynamic programming problems that bounded-depth transformers cannot. Providing a different distributional perspective, \citet{abbe2024far} show how CoT (or ``scratchpads'') can reduce the \emph{globality degree} of a task, the number of inputs tokens needed to correlate with the answer (beyond the input token histogram). This is very similar in spirit to the BAPO-CoT result that CoT reduces the required bandwidth ($\approx$ number of attended tokens) of tasks.

Few  lower bounds on CoT length are known. \citet{li2024chainthoughtempowerstransformers} show if we only allow $O(\log n)$ CoT steps, this does not increase the expressivity of transformers beyond \textsf{AC}$^0$ or \textsf{TC}$^0$ (with constant- or log-precision, respectively). This implies that any problem outside of \textsf{TC}$^0$, for instance, requires $\omega(\log n)$ CoT steps. \citet{merrill2024expressive} show that $O(n)$ CoT steps are insufficient for recognizing context-free languages in general (given a complexity theory conjecture about the efficiency of context-free parsing) and that $O(\log n)$ steps are insufficient for \textsf{NL}- and \textsf{P}-complete problems. Most similarly to our work, \citet{amiri2025lower} derive bounds on CoT length for transformers with unique hard attention (UHATs), showing that parity, multiplication, and median require $\Omega(n)$ CoT steps, while reachability on directed acyclic graphs requires $\Omega(|E| \log |V|)$. They also show upper bounds that are tight for each problem (at least within a log factor).  \citet{amiri2025lower} similarly use a Turing machine simulation for UHATs to establish upper bounds on CoT length. They allow an unbounded CoT alphabet, enabling an upper bound of $O(t(n))$ (in their construction, the space complexity is encoded in the CoT alphabet rather than in the CoT length). If we allowed the CoT alphabet $\Gamma$ to grow with $s(n)$, then using their tape-update construction idea would also produce a \bopa-CoT token complexity upper bound of $O(t(n))$. A very different perspective on CoT length is provided by viewing CoT as a Markov chain over reasoning steps and considering its hitting time of the answer state~\cite{kim2025metastable}.

On the empirical side, \citet{lee2025llmscompresscot} propose the ``token complexity hypothesis'' that every problem has some minimal reasoning chain length. This hypothesis is informed by their experiments finding that across different chain-shortening prompts, LLMs tend to succeed if the chain is longer than some critical length and fail otherwise~\cite{lee2025llmscompresscot}. \citet{wu2025more} argue that problems have an optimal reasoning length (increasing accuracy at first as chain length increases, then decreasing beyond the optimal length) and analyze a simple subtask decomposition and error accumulation model that reproduces this phenomenon. Other empirical research supports the observation that more reasoning tends to increase performance, especially on more challenging tasks~\citep{jin2024impact}. However, this finding varies across task type: \citet{feng2025characterizes} find on mathematical and scientific reasoning tasks that longer CoT chains are correlated with lower accuracy.

Approaches for reducing the token cost of CoT in practice include fine-tuning models on shortened chains~\cite{Liu2024skipsteps,xia2025tokenskip,chen2025overthinking}, reinforcement learning with a reward for shorter  chains~\cite{luo2025o1pruner,yi2025shorterbetter,tang2026towards},  prompting for brevity~\cite{lee2025llmscompresscot,han2025tokenbudget,xu2025chaindraft,nayab2025concise}, and adaptive prediction of the benefit of generating more tokens~\cite{zabounidis2025re}. \citet{fan2025pricesecondthoughtevaluation} discuss different approaches for empirically measuring CoT token efficiency.  For a survey, see~\citet{sui2025stopoverthinkingsurveyefficient}. 
These papers typically report constant-factor efficiency gains on fixed benchmarks, while we focus on how reasoning length scales asymptotically with input size---a crucial factor when applying LLMs to ever-larger problems. 

\section{Discussion}
Our main message is that there are fundamental limits to how efficient reasoning can be for LLMs on tasks that require a lot of information to be communicated inside the model, as captured through the BAPO framework. 
Our main theoretical contributions are token complexity lower bounds for  \textsc{Majority}, \textsc{Match3}$_n$, and \textsc{Reachability} showing they require linearly many reasoning tokens for a constant-bandwidth BAPO-CoT.

\paragraph{Implications for inference-time scaling.} Given that run-time inference costs are superlinear in the number of reasoning tokens, the linear lower bounds we prove still pose substantial costs in practice. Practically, this motivates a need for faster inference methods, but more fundamentally, other architectures or paradigms such as looped transformers~\cite{giannou2023looped} might yield better scaling.

\paragraph{Implications for reasoning compression.}

A key limitation of the current empirical literature is that it does not pin down the \emph{scaling law} of minimal reasoning length. Because many widely used benchmarks lack a clean instance-size parameter, results are framed as percentage compression or fixed token budgets (constant-factor improvements).
One of our contributions is to make the scaling question explicit by analyzing parametric problem families.
Our results suggest that for tasks whose minimal required reasoning scales as $\Omega(n)$, any compression method that enforces an input-size-agnostic budget will eventually fail as $n$ grows. Conversely, if a task’s required reasoning is $o(n)$ but the reasoning method uses superlinearly many tokens, then aggressive shortening may be possible.
This motivates evaluating compression methods on problems of varying token complexities and, within each problem, varying input sizes.

\paragraph{Open theoretical questions.}
Our investigation leaves a number of open questions.
 Are there problems with {\em superlinear} \bopa-CoT token complexity? In particular, can \textsc{Reachability} be solved by a \bopa-CoT with $O(m+n)$ CoT tokens, or does it require $\Theta(n^2)$? 
 With constant-size $\Sigma$, can constant-bandwidth cBAPOs be simulated by Turing machines? (\Cref{prop:no-tm-simulation,prop:double-trouble} show the answer is no for BAPOs and for \bopas with unbounded $\Sigma$.)

\paragraph{Limitations and future work.}
BAPO-CoT is an abstraction of transformer computation that isolates a per-step communication bottleneck; it does not model all features of modern systems (e.g., tool use or retrieval). Our experiments are limited to a small set of tasks and model access constraints. Future work includes: (i) extending token complexity bounds to other problems, (ii) developing principled evaluation suites for reasoning compression that vary instance size, and (iii) studying how architectural changes or externalized computation affect token complexity.

\section*{Acknowledgments}
Thanks to Michael Hahn, Michael Lindon, the MSR AI Interaction and Learning group and the Netflix Machine Learning and Inference Research team for helpful discussions. We also acknowledge useful suggestions from the anonymous reviewers and  the experimental Paper Assistant Tool (PAT) offered by ICML 2026 and Google.

\section*{Impact Statement}

This paper presents theoretical results on the limits of inference-time scaling in machine learning models. While it does not introduce new applications directly, it may have implications for the efficiency, cost, and resource use of future AI systems.

\bibliographystyle{icml2026}
\bibliography{references}

\clearpage

\appendix
\onecolumn

\section{Introduction to BAPO}\applabel{app:bapo-intro}
While \citet{schnabel2025lost} provides the authoritative introduction of the BAPO model, we provide an overview here for completeness.  The idea behind the BAPO model is that information about the input must somehow propagate to the last token's residual stream in order for the transformer to make a correct next-token prediction. This information could either take the form of (representations of) explicit token values, or some kind of summary statistic computed by earlier residual streams. For instance, when solving \textsc{Parity}, the last residual stream does not need accurate representations of every previous token; a representation of the parities of different portions of the input suffices to compute the parity of the full input. On the other hand, when solving \textsc{Index} (given a string and an index, find the token at the given index), the last residual stream needs to know the exact token value at the specified index. The BAPO model captures these two pathways of information flow in a transformer and hypothesizes that there are bandwidth limits on the amount of information that can accurately pass through these two pathways.  

There are three components of a BAPO, the functions $f, g,$ and $h$. The \emph{prefix oracle} $f$ captures all preprocessing performed by early residual streams of a transformer. The \emph{attention function} $g$ represents selective attention by later residual streams to tokens in the prefix. Finally, the \emph{suffix oracle} $h$ represents the computation of the correct next token in the late residual streams (and, in particular, the last residual stream), based on the information provided by $f$, the tokens attended to by $g$, and the tokens in those late residual streams. 

To capture the idea that every prefix of residual streams of a transformer itself represents the computation of the transformer on that prefix, we require that a BAPO be able to solve a problem given any split of the input into a prefix and a suffix. We use $k$ to represent the split index (the last index of the prefix). Thus, given an input $x_1\dots x_n$, the prefix oracle $f$ receives as input $x_1\dots x_k$ only. The attention function $g$ outputs an attention score for each prefix token, computed as a function of the suffix $x_{k+1}\dots x_n$ (in the original BAPO model, the scores were binary, but we focus on \emph{score-BAPO} where the scores are in $[0, 1]$). For each prefix index $i$, we say its attention score is $g_i = g(x_{k+1}\dots x_n, k, x_i, i)$ (note that the attention function has access to the index $i$ of the candidate token $x_i$, to model its positional encoding). To impose communication bandwidths on the model, the output of $f$ is limited to $a$ bits and the suffix oracle only attends to the $b$ tokens with highest non-zero attention scores $g_i$. In the case of attention score ties, ties are broken arbitrarily, and the BAPO must be able to succeed for any tie-breaking. Equivalently, we can think of ties being broken adversarially.

Putting all of this together, the suffix oracle $h$ must solve a problem given, for any split index $k \in \{1, \dots, n - 1\}$:
\begin{enumerate}
    \item the $a$ bits $f(x_1\dots x_k)$,
    \item some set $G$ of at most $b$ attended tokens (and their indices) with maximal (non-zero) attention scores: $G \in \operatorname*{arg\,max}_{\substack{S \subseteq \{(x_i, i): i \le k, g_i > 0\} \\ |S| \le b}} \sum_{(x_i, i) \in S} g_i$,
    \item the suffix $x_{k+1}\dots x_n$, and
    \item the split index $k$.
\end{enumerate}

Thus the output of a BAPO is $h(f(x_1 \dots x_k), G, x_{k+1}\dots x_n, k)$. When this equals $p(x_1\dots x_n)$ for all split indices $k$ and all possible attended token sets $G$, we say that the BAPO $(f, g, h)$ \emph{solves} $p$.

\begin{example}
    For \textsc{Parity}, we can have $f$ output a single bit, the parity of $x_1\dots x_k$. Given the bit $f(x_1\dots x_k)$ and the suffix $x_{k+1}\dots x_n$, $h$ can output the parity of $x_1 \dots x_n$ and solve \textsc{Parity} with prefix bandwidth $a = 1$ and attention bandwidth $b = 0$, giving a $(1, 0)$-BAPO for \textsc{Parity}.
\end{example}

\begin{example}
    For \textsc{Index}, where the input is $x_1 \dots x_{n-1} i^*$ and the solution is $x_{i^*}$, the attention function $g$ can attend to the desired token by using the score
    \begin{equation*}
        g(x_{k+1}\dots x_n, k, x_i, i) = \begin{cases}
            1 &\text{if $i = x_n$}\\
            0 & \text{otherwise}.
        \end{cases}
    \end{equation*}
    Since the split index $k$ is at most $n - 1$, the suffix always contains at least one token. In this case, it contains the index $i^*$, so $g$ can always attend only to the token at index $i^*$ using the score defined above. If $i^* \le k$, then the desired token is in the prefix and we will have $G = \{x_{i^*}\}$. On the other hand, if $i^* > k$, then the desired token is in the suffix. Thus, $h$ can return $x_{i^*}$ in either case: either it gets it from $G$ or from seeing it in the suffix. This provides a $(0, 1)$-BAPO for \textsc{Index}.
\end{example}

\begin{example}
    To show that a BAPO does not solve a problem, it suffices to provide a single $k$ and a single possible $G$ where the BAPO fails. For example, consider a candidate BAPO for \textsc{Equality} (given $y|z$ of length $2m + 1$, check if $y = z$) where the attention function $g$ looks for an index where $y_i \ne z_i$:
    \begin{equation*}
         g(x_{k+1}\dots x_{2m+1}, k, x_i, i) = \begin{cases}
            1 &\text{if $k < i + n + 1$ and $x_i \ne x_{i + m+1}$} \\
            0 & \text{otherwise}.
        \end{cases}
    \end{equation*}
    Note that we have renamed the input $y|z$ to be $x_1\dots x_{2m+1}$, where $x_{m+1} = |$. Consider a $(0, 1)$-BAPO that has this attention function and does not use its prefix oracle. When $k = m$, the attended token set will either be empty (if $y = z$) or will contain some token $y_i$ that disagrees with its corresponding $z_i$. Thus $h$ can solve \textsc{Equality} for this split index $k$. However, if $k = m+2$, then it could be that $y_1 \ne z_1$ or $y_1 = z_1$ and $h$ would receive identical inputs in both cases, so this $(0, 1)$-BAPO fails to solve \textsc{Equality} in general. To fix this approach, we can have $f$ check whether any prefix of $z$ that it sees agrees with the corresponding prefix of $y$, outputting a single indicator bit. Combined with the above $g$, this provides a $(1, 1)$-BAPO that solves equality for all $k$. 
\end{example}

\section{Proofs}\applabel{app:proofs}

\subsection{Proofs from \Cref{sec:lower-bounds}}

\begin{proof}[Proof of \Cref{thm:majority-lb}]
    Suppose for a contradiction that $(f, g, h)$ is an $(a,b)$-BAPO-CoT with $a = O(1)$, $b = O(1)$, and that takes $c(n) = o(n)$ steps to solve \textsc{Majority} on inputs of length $n$. 

    We adapt the construction from \citet{schnabel2025lost} to handle a sublinear CoT. Given any positive integer $m$ and any permutation $\pi \in S_{2^{2a+2}bm + 2^a + 1}$, define $$X_\pi = \{\pi(0^{2^{2a+1} b m} 1^{2^{2a+1} b m} 0^k 1^{2^a + 1-k}), 0^{2^a - k} 1 ^{k+1}\}_{k = 0}^{2^a}.$$ These prefix-suffix pairs have combined length $n = 2^{2a+2}bm+2^{a+1}+2$. Since $n = \Theta(m)$ and $c(n) = o(n)$, we can therefore pick $m$ sufficiently large so that it generates instances of size $n$ with  $m > c(n)$. Let $c^* = c(n)$ for this chosen value of $m$ and resulting input size $n$; note that we have $m > c^*$. 

    Now we'll show how to define $\pi$ so that the attention function $g$ provides no useful information (under some adversarial score tiebreaking in $G$) for all $c$ CoT steps and for all $2^a+1$ suffixes (regardless of what prefix they're paired with). Let $s_k$ be the suffix in $X_\pi$ with $k+1$ ones and let $S = \{s_k\}_{k=0}^{2^a}$. For any suffix $s$ and partial CoT $z$,  
    let $g_0(sz, i) = g(sz, 2^{2a+2}bm + 2^a + 1, 0, i)$ be the attention score of a $0$ in position $i$ given suffix $sz$ and and $g_1(sz, i) = g(sz, 2^{2a+2}bm + 2^a + 1, 1, i)$ be the attention score of a $1$ in position $i$ given suffix $sz$. Given disjoint sets of prefix indices $\Pi_0$ and $\Pi_1$, let $b^\text{th}(sz, \Pi_0, \Pi_1)$ be the $b^\text{th}$ largest attention score $g$ outputs with suffix $sz$ on a 0 at an index in $\Pi_0$ or a 1 at an index in $\Pi_1$, or $- \infty$ if $\Pi_0 \cup \Pi_1 = \emptyset$.

\begin{enumerate}
    \item Initialize $\Pi_0 = \emptyset$ and $\Pi_1 = \emptyset$. \\\emph{These disjoint sets store indices to place leading zeros and ones so that we fool $g$ for all suffixes.}
    \item For each possible prefix oracle output $f(x) \in \{0, 1\}^a$:
    \begin{enumerate}
        \item For each suffix $s_k \in S$:
        \begin{enumerate}
            \item Initialize an empty CoT $z$
            \item For each CoT step $i = 1, \dots, c^*$:
            \begin{enumerate}
                \item While $\displaystyle \max_{ i \in [2^{2a+2}bm + 2^a + 1] \setminus(\Pi_0 \cup \Pi_1)} g_0(sz, i) > b^\text{th}(sz, \Pi_0, \Pi_1)$, add the maximizing $i$ to $\Pi_0$.\\
                \emph{Make sure the remaining indices with $b$ highest attention scores for $0$s have $0$s.}
                \item While $\displaystyle\max_{ i \in [2^{2a+2}bm + 2^a + 1] \setminus(\Pi_0 \cup \Pi_1)} g_1(sz, i) > b^\text{th}(sz, \Pi_0, \Pi_1)$, add the maximizing $i$ to $\Pi_1$.\\
                \emph{Make sure the remaining indices with $b$ highest attention scores for $1$s have $1$s.}
                \item Let $G(s_kz)$ be a set of $b$ attended masking tokens on suffix $s_k$ and CoT $z$ containing only 0s in $\Pi_0$ and 1s in $\Pi_1$, which must exist as the while loops have checked all possible tokens in the prefix.\\
                \emph{This is the adversarial $G$ picked with suffix $s_k$ on CoT step $i$.}
                \item Let $\gamma$ be the token output by $h$ when given prefix oracle output $f(x)$, attended tokens $G(s_kz)$, and suffix (+ CoT) $s_kz$. Update $z \gets z\gamma$
            \end{enumerate}
        \end{enumerate}
        
    \end{enumerate}
        \item Let $i = 1$. For each $j \in \Pi_0$, set $\pi(i) = j$ and increment $i$.\\\emph{Make $\pi$ permute leading zeros into the masking indices we have picked.}
    \item Let $i = 2^{2a + 1}bm+ 1$. For each $j \in \Pi_1$, set $\pi(i) = j$ and increment $i$.\\\emph{Make $\pi$ permute leading ones into the masking indices we have picked.}
    \item Fill in $\pi$ where not yet defined with the remaining indices in order.
\end{enumerate}

Note that each time we run the while loops in 2(a)iiA and 2(a)iiB, we add at most $b$ indices to $\Pi_0$ and at most $b$ indices to $\Pi_1$. Thus, across all times we run this loop, the final size of $\Pi_0$ is at most $b2^a c^*|S| < 2^{{2a}+1}bm$ (and likewise for $\Pi_1$), so we have enough leading zeros and ones to mask attention for every CoT step, every possible prefix oracle output, and every possible suffix.

There are $2^a+1$ prefix-suffix pairs, but only $2^a$ different prefix oracle outputs, so by the pigeonhole principle there are two different elements $(p, s), (p', s') \in X_\pi$ where $f(p) = f(p')$. If $(p, s)$ has $k+1$ ones in $s$ and $(p', s')$ has $k'+1$ ones in $s'$ (without loss of generality assuming $k'  > k$), then the string $ps$ has a majority of ones ($2^a +2$ non-masking ones, but only $2^a$ non-masking zeros) but $p's$ does not (at least $2^a+1$ non-masking zeros and at most $2^a+1$ non-masking ones, since $k' \ge k+1$). Thus, consider the inputs $ps$ and $p's$.

At every step on both inputs, the BAPO-CoT's prefix oracle simply outputs $f(p) = f(p)$ every time (if we assume the prefix suffix split remains the same at every step, just after $p$ or $p'$). Meanwhile, the attention function (getting the same inputs in both cases), selects the same indices and observes only masking 0s and 1s that $p$ and $p'$ contain in the same positions (under the corresponding adversarial choice of $G$ made in the construction of the masking positions). Thus the CoT token output at each step is identical, and the BAPO-CoT performs the same reasoning and arrives at the same answer for both inputs. 

But $ps$ and $p's$ have opposite answers to \textsc{Majority}: $ps$ has a majority of ones and $p' s$ does not. Thus the BAPO-CoT answers one of these instances incorrectly for some sequence of attended tokens, under some sequence of split indices.
\end{proof}

\begin{proof}[Proof of \Cref{thm:match3-lb}]
    Suppose for a contradiction that $(f, g, h)$ is an $(a, b)$-BAPO-CoT solving \textsc{Match3}$_n$ with $a = O(1), b = O(1)$, and that takes $c(n) = o(n)$ CoT steps to solve \textsc{Match}$3_n$ on inputs in $\mathbb Z _{n^2}^n$.

% Masking ints needed: $2^a cb|S|$. Unique prefixes: $2^{|S|}$. $ c = o(n)$, $b = O(1)$. Pick $|S| = a+1$, so that $2^{|S|} = 2^{a+1} > 2^a$, but $2^a c b |S| = 2^a c b (a+1) = o(n)$, so for all sufficiently large $n$, $2^a cb|S| < n / 2$

    Again, the proof is a CoT adaptation of the construction from \citet{schnabel2025lost}, and we use the following Lemma.
    
\begin{lemma}[\citet{schnabel2025lost}]\label{lemma:match3-packing}
    Let $S, Z\subset \mathbb Z_m$ with $m > 100$, $\max_{s \in S} s \le \sqrt{m}$, $|S| \le \sqrt{m} / 2$, and $|Z| \le \sqrt{m}/2 $. For every $s \in S$, there exist $x, y \in \mathbb Z_m\setminus Z$ s.t.:
    \begin{enumerate}
        \item $x + y + s \equiv_m 0$, and
        \item for all $z \in Z$ and all $s' \in S$, $x + z + s' \not \equiv_m 0$ and $y + z + s' \not \equiv_m 0$.
    \end{enumerate}
\end{lemma}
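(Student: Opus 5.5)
The plan is a direct counting (union bound) argument over the choice of $x$. Fix $s \in S$. Once $x \in \mathbb Z_m$ is chosen, condition 1 forces $y \equiv_m -x-s$, so the pair $(x, y)$ is determined by $x$ alone. I will show that fewer than $m$ values of $x$ are ``bad'', meaning that $x$ or the forced $y$ violates one of the requirements. Some good $x$ must then exist.

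The bad events, each viewed as a constraint on $x$, are the following four.
\begin{enumerate}
    \item $x \in Z$. This excludes at most $|Z|$ values.
    \item $y \in Z$, i.e.\ $x \in -s - Z$. This excludes at most $|Z|$ values.
    \item $x + z + s' \equiv_m 0$ for some $z \in Z$, $s' \in S$, i.e.\ $x \in -(Z+S)$. This excludes at most $|Z|\,|S| \le m/4$ values.
    \item $y + z + s' \equiv_m 0$ for some $z\in Z$, $s' \in S$. Substituting $y = -x-s$, this is $x \in Z + S - s$, which excludes at most $|Z|\,|S| \le m/4$ values.
\end{enumerate}
The bounds in items 3 and 4 use $|Z|\,|S| \le (\sqrt m/2)^2 = m/4$. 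Summing, the number of bad $x$ is at most $2|Z| + m/2 \le \sqrt m + m/2$. This is strictly less than $m$ whenever $\sqrt m < m/2$, i.e.\ for $m > 4$, and in particular for $m > 100$. Hence some $x \in \mathbb Z_m$ avoids all four bad sets. Setting $y = -x-s$ then gives $x, y \notin Z$ together with conditions 1 and 2.

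I do not expect a real obstacle. The only care needed is in item 4: the forbidden set must be computed correctly after substitution, and it is a translate of $Z+S$, so it has the same size bound. The hypothesis $\max_{s\in S} s \le \sqrt m$ does not seem to be needed for this argument. Presumably it is used in the later construction, for example to keep the suffix values small or distinct. If one additionally wanted $x \ne y$, the single extra value $2x \equiv -s$ (at most two solutions) could be excluded, and the slack $m/2 - \sqrt m > 2$ still leaves room.
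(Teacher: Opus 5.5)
Your union-bound argument is correct and complete. Parametrizing by $x$ with $y \equiv_m -x-s$, the four forbidden sets $Z$, $-s-Z$, $-(Z+S)$ and $Z+S-s$ have total size at most $2|Z| + 2|Z|\,|S| \le \sqrt m + m/2 < m$, so a valid $x$ exists.

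The paper gives no proof of this lemma; it imports it from \citet{schnabel2025lost}. There is therefore no in-paper argument to compare your route against.

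You are right that the hypotheses $\max_{s\in S} s \le \sqrt m$ and $m > 100$ are slack for the lemma itself. In the proof of \Cref{thm:match3-lb}, the smallness of the suffix values $0,\dots,a$ is used separately. It guarantees that the filler $\lfloor n^2/3 \rfloor$ never forms a match with itself and a suffix integer. By contrast, $x_s + y_s + s' \not\equiv_{n^2} 0$ for $s' \neq s$ needs only that the elements of $S$ are distinct.

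Your optional $x \neq y$ refinement is also unnecessary in that application, because $x_s$ and $y_s$ are placed at distinct prefix indices. As a small aside, you call $2x \equiv_m -s$ a ``single extra value'' and then correctly note it can have two solutions when $m$ is even. Either way the slack $m/2 - \sqrt m$ absorbs it.
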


    Consider the set of suffixes $S = \{i : 0 \le i \le a \}$, so $|S| = a+1$. Since $c(n) = o(n)$, $|S| = O(1)$, $a=O(1)$, and $b = O(1)$, we can pick $n$ sufficiently large so that $2^a c(n)b|S| < n / 4 - 1$, $n > 10$, and $2(a+1) < n/4$. Let $b^\text{th}(sz, P^*)$ be the $b^\text{th}$ largest attention score $g$ outputs with suffix $sz$ on any token in $P^*$. Let $X(P^*) = \{x \in \mathbb{Z}_{n^2} : \forall(y, j) \in P^*,  s \in S: x + y + s \not \equiv_{n^2} 0\}$ be the set of integers mod $n^2$ that do not form matches with any suffix and any integer already in the partial prefix $P^*$. Note that as we add more integers to $P^*$, this set strictly shrinks.
    
    Build up the shared prefix $P^*$ as follows.

    \begin{enumerate}
        \item Initialize $P^* = \{(\lfloor n^2 / 3 \rfloor), 1\}$ and $I = \{1\}$\\
        \emph{We'll use $\lfloor n^2 / 3 \rfloor$ as filler; placing it in $P^*$ here ensures it doesn't match any prefix integer.}
        \item For each possible prefix oracle output $f(x) \in \{0, 1\}^a$:
        \begin{enumerate}
        \item For each suffix $s \in S$:
        \begin{enumerate}
            \item Initialize empty CoT $z$
            \item For each CoT step $i = 1, \dots, c(n)$:
            \begin{enumerate}
                \item While $\max_{x \in X(P^*),i \in [n-1]\setminus I} g(sz, n-1, x, i) > b^\text{th}(sz, P^*)$: add the maximizing $(x, i)$ to $P^*$ and add $i$ to $I$.\\
                \emph{Check all feasible integers that we could add to the prefix and ensure the $b$ feasible integers with largest attention scores are in $P^*$ to mask attention for any possible completion of the prefix.}
                \item Let $G(sz)$ be the $b$ tokens in $P^*$ with highest attention score given suffix and partial CoT $sz$ (or up to $b$, if there were fewer than $b$ feasible integers with non-zero attention scores).\\
                \emph{This is the adversarial attended set $G$ picked with suffix $s$ on CoT step $i$}.
                \item Let $\gamma$ be the token output by $h$ when given the prefix oracle output $f(x)$, attended tokens $G(sz)$, and suffix (+ CoT) $sz$. Update $z \gets z \gamma$.
            \end{enumerate}
        \end{enumerate}

        \end{enumerate}
    \end{enumerate}

After the procedure, $P^*$ contains at most $2^a |S| c(n) b +1< n/4$ occupied indices, since we can only add $b$ integers each time we run through the while loop in 2(a)iiA.  Moreover, we have ensured that no pair of integers in $P^*$ can form a match with any suffix integer $s$, since we only add $x$ to $P^*$ if for every $y $ in $P^*$ and $s \in S$, $x + y + s \not \equiv_{n^2} 0$ (by the definition of $X(P^*)$). Lastly, any other integer from $X(P^*)$ we add to a prefix at an empty index cannot get one of the $b$ highest attention scores for \emph{any} suffix under some CoT rollout with prefix--suffix split index $n-1$, regardless of the prefix oracle output. 

Now, let $Z$ be the set of values in $P^*$, with $|Z| < n/4$. For each $s \in S$, we find some $x_s$ and $y_s$ using \Cref{lemma:match3-packing} such that $x_s + y_s + s \equiv_{n^2} 0$, but $x_s$ and $y_s$ do not form matches with any other $z \in Z$ (recall that we initialized $Z$ to contain all values in $P^*$). Add each $x_s$ and $y_s$ to $Z$ before the next application of \Cref{lemma:match3-packing} to ensure that we do not create any matches across $x$--$y$ pairs. Note that any integer we add was feasible during the construction of $P^*$, since $\{\text{integers matching with each other but not with }P^*\} \subset \{\text{integers not forming a match with } P^*\}$. After doing this for each $s\in S$, the final size of $Z$ is $< n / 4 + 2(a + 1) < n / 2$, so the size limit on $Z$ required by \Cref{lemma:match3-packing}, namely $|Z| \le \sqrt{m} /2 = n/2$, is always satisfied. Let $P = \{(x_s, y_s)\}_{s \in S}$ be the $x$--$y$ pairs we find using this procedure.

For every subset $ R \subseteq P$, construct a prefix of length $n-1$ by first filling in all of the masking integers in $P^*$ (filling fewer than $n/2$ positions) and then adding in the $x$ and $y$ values in $R$ in arbitrary indices (filling at most $2|S| = 2(a+1)$ additional positions). Fill the remaining indices with $\lfloor n^2 / 3 \rfloor$, which cannot form a match with itself and any suffix integer (since the suffix integers are at most $a$), and which we already ensured cannot form a match with any integer in $P$ or $P^*$ by placing it in $Z$. This gives us $2^{|S|} = 2^{a+1}$ distinct prefixes, each of which has matches with a distinct set of suffixes. That is, for any two prefixes $p_1\ne p_2$, there exists some suffix integer $s$ where $p_1s$ and $p_2s$ have opposite answers to \textsc{Match3}$_n$, since there is some pair $(x_s, y_s)$ in one prefix, but not the other, which forms a match with $s$, while no other pair of integers forms a match with $s$. But with prefix bandwidth $a$, there is some prefix oracle collision between $p_1$ and $p_2$ (as there are $2^{a}$ distinct outputs of $f$, too few for the $2^{a+1}$ distinct prefixes). Moreover, our construction of $P^*$ provides some CoT rollout of the BAPO-CoT where the attended tokens at every CoT step are identical between $p_1s$ and $p_2s$ (with split index fixed at $n-1$, forcing the prefix oracle to output the same collision at every step). Since at every step, $h$ sees the same inputs in both instances (same prefix oracle output, same suffix, same attended tokens), it outputs the same CoT and after $c(n)$ steps provides the same answer, resulting in a wrong answer in one of the instances.
\end{proof}

\begin{proof}[Proof of \Cref{thm:reachability-lb}]

Suppose for a contradiction that $(f, g, h)$ is an $(a,b)$-BAPO-CoT with $a = O(1)$, $b = O(1)$, and that takes $c(m) = o(m)$ steps to solve \textsc{Reachability} on $m$-edge graphs. For simplicity, define $p = 2a+4$ and suppose $m = dp$ for some integer $d$ (the construction would still work otherwise, but be slightly messier, with one path shorter than the others). We can pick $d$ to be sufficiently large that $c(m) < m / ( b 2^{a+1}p^2)$. This is possible since $a, b, $ and $p$ are constant and $c(m) = o(m)$.

   We again adapt the construction of \citet{schnabel2025lost}. Let $P$ be the graph consisting of $p$ disjoint directed paths with start nodes $s_1, \dots, s_{p}$ and target nodes $t_1, \dots, t_{p}$, where each path has exactly $d+1$ nodes.  Then, for each $i \in [p]$ and $j \in [d-1]$, let $u_{ij}$ be the unique node at distance $j$ from $s_i$. We will keep these node labels fixed but modify $P$ by permuting edges based on the attention function $g$ to render attention useless. Note that applying a permutation $\pi \in S_{p}$ to the target nodes of the edges departing from $u_{1j}, \dots, u_{pj}$  maintains the property that $P$ is a disjoint collection of length $p+1$ paths each starting from an $s$ node and ending at a $t$ node, although the permutation changes which $s$--$t$ pairs are connected. To construct the shared prefixes, we fix the edge order to first list the edges departing from $s_1, \dots, s_p$, then $u_{11}, \dots, u_{p1}$, then $u_{12}, \dots, u_{p2}$, etc. Call this the canonical edge order and let $I(u)$ denote the index in the canonical order where the edge departing $u$ is placed. 
   
   The suffixes in our construction will consist of all source--target pairs $s_it_j$ and, as usual, we will construct a shared prefix $P^*$. Let $b^\text{th}(s_i t_j, P^*)$ be the $b^\text{th}$ largest attention score that $g$ outputs on any edge in $P^*$ (positioned at their canonical indices given by $I(\cdot)$). If there are fewer than $b$ edges in $P^*$, then $b^\text{th}(s_i t_j, P^*) = 0$. Given a non-target node $u$, let $\textsf{next}(u, P^*)$ be the set of nodes in the layer to the right of $u$ that have in-degree 0 in $P^*$.
   
   Construct the shared prefix graph $P^*$ as follows:
        \begin{enumerate}

        \item Initialize $P^*$ to have the same set of nodes as $P$. Initialize $S$ to be the set of all $s_i$ and $u_{ij}$ nodes, which will store the current set of nodes that still need an outgoing edge.
        \item For each possible prefix oracle output $f(x) \in \{0, 1\}^a$:
        \begin{enumerate}
        \item For each pair $(i, j) \in [p]\times [p]$:
        \begin{enumerate}
            \item Initialize empty CoT $z$
            \item For each CoT step $i = 1, \dots, c(m)$:
            \begin{enumerate}
                \item While $\max_{u \in S, v \in \textsf{next}(u, P^*) } g(s_i t_j, m,\allowbreak  I(u), (u, v)) > b^\text{th}(s_i t_j, P^*)$: add the maximizing edge $(u, v)$ to $P^*$ and remove $u$ from $S$.\\
                \emph{Ensure the top $b$ still-feasible edges that have highest attention scores under $g$ are in $P^*$; thus, no edge we add later will have a higher attention score for $g$ on suffix $s_i t_j$.}
                \item Let $G$ be a set of $b$ tokens in $P^*$ with highest score under $g$ on suffix $s_i t_j$. If there are fewer than $b$ such tokens with non-zero score, take all tokens with non-zero score.\\
                \emph{This $G$ is one possible attended set at this step, regardless of how the prefix is completed (or, without attention score ties, the unique attended set), since we have ensured the $b$ highest-scoring feasible edges are added to $P^*$.}
                \item Let $\gamma$ be the token output by $h$ when given the prefix oracle output $f(x)$, attended tokens $G$, and suffix (+ CoT) $s_it_jz$. Update $z \gets z \gamma$.
            \end{enumerate}
        \end{enumerate}
        \item Let $E^*$ be the set of edges in $P^*$ at this point in the algorithm. These will be shared among all prefixes to saturate attention. To complete the paths in $P^*$ arbitrarily, connect each node with outdegree 0 (that is not a $t$ node) to the first node with indegree 0 in the next layer.
        \end{enumerate}
    \end{enumerate}

The total number of edges added to $E^*$ in this procedure is at most $2^a p^2 c(m) b < 2^a p^2b m / ( b 2^{a+1}p^2) = m / 2$, so we do not run out of masking edges (hence why it was important to pick $d$ sufficiently large at the start of the proof).

    As with the original construction, this produces a graph $P^*$ that is still a collection of disjoint paths connecting $s$ nodes to $t$ nodes. Moreover, if we take any layer in the graph and permute the edges not in $E^*$, this produces a family of inputs that are indistinguishable to the BAPO-CoT (under the fixed prefix-suffix split before $s_it_j$ and for some sequence of attended sets). This is because the $b$ highest-scoring edges at every CoT step are all in $E^*$ by construction. 

    Now, of the $d$ layers of edges in $P^*$, there must be some layer $j^*$ with at most  half of its edges in $E^*$ (otherwise, $E^*$ would have more than $m/2$ edges, a contradiction). Thus, layer $j^*$  has $\ge p / 2 = a+2$ edges not in $E^*$ (as each layer has $p$ edges). By applying every possible permutation to these non-masking edges, we get a family of $\ge(a+2)!$ graphs that serve as our collection of fooling prefixes. 

    There are only $2^a < (a+2)!$ distinct outputs of $f$, so there must be some pair of graphs in this family generated by different permutations where $f$ collides. Since they have different permutations applied to the non-masking edges in layer $j^*$ but are otherwise identical, there is some pair of start and target nodes $s^*$--$t^*$ connected in one of the graphs but not the other. Given these prefixes and the suffix $s^*t^*$, the BAPO-CoT therefore makes a mistake on one of these instances, as it cannot distinguish between them when the prefix--suffix split is fixed across the CoT and the attended sets contain only masking edges: it gets the same suffix, the same output from $f$, and the same attended tokens at every CoT step. 
\end{proof}

\subsection{Proofs from \Cref{sec:upper-bounds}}

\begin{proof}[Proof of \Cref{thm:tm-ub}]
The proof uses similar ideas as the BAPO-CoT TM construction from \citet{schnabel2025lost}, but extended to multiple tapes and tape heads, and modified to use no prefix bandwidth thanks to encoding tape heads positions in CoT tokens. Since the construction has $a=0$, it is a \bopa-CoT.

    Let $M = (Q, \{0, 1\}, \Lambda, \delta, q_0, q_\text{accept}, q_\text{reject})$ be Turing machine with state space $Q$, input alphabet $\{0, 1\}$, tape alphabet $\Lambda\supseteq \{0, 1, \visiblespace\}$, $\tau$ tapes (infinite to the right), and $\eta$ heads per tape using the formalization of \citet{sipser2013}. The first tape holds the input (but may be written to), while the others are work tapes. All tape heads start on the leftmost cell of their tape, and all tape cells not occupied by the input initially contain the blank symbol \visiblespace.
    
    That is, $\delta: Q \times \Lambda^{\tau \eta} \rightarrow Q \times \Lambda^{\tau \eta} \times  \{L, R, S\}^{\tau \eta}$, where $\delta(q, \lambda_{1,1}, \dots, \lambda_{\tau,\eta}) = (q',\lambda_{1,1}', \dots, \lambda_{\tau,\eta}', D_{1, 1},\dots, D_{\tau, \eta})$ means that if the TM is in state $q$ with $\lambda_{i,j}$ under tape head $j$ on tape $i$, then it moves to state $q'$, writes the corresponding $\lambda_{i,j}'$ under each tape head, and moves each head in the specified direction. Moving left in the leftmost cell leaves the tape head still at the leftmost cell.

    Given $n$, we will construct a $(0, 2+ \tau\eta)$-\bopa-CoT that simulates $M$ on inputs of size at most $n$. Let $\Gamma = \{\square \} \cup Q\cup \{\lambda^S\}_{\lambda \in \Lambda, S \in \mathcal P([\eta])}$ be the token set of the \bopa-CoT (note that the size of $\Gamma$ does not grow with $n$). 
    
    The \bopa-CoT will simulate M by writing out the contents of each tape at each step along with the position of each head using the symbols $\lambda^S$, with $S$ indicating the set of tape heads over the tape symbol $\lambda$. For instance, $\visiblespace^\emptyset$ denotes the blank symbol with no tape heads over it. The current state is written out at the end of each step. Each tape will always be written out with $n + s(n)$ cells to allow easy index computation.\footnote{It would be possible to use save some steps by using variable amounts of space for each tape, but this does not decrease the asymptotic token complexity bound, so we skip this optimization for simplicity.} At the start on input $x_1\dots x_n$, this representation looks as follows, with all the heads in each tape on the leftmost cell:
   \begin{align*}
   &\underbrace{ x_1^{\{1, \dots, \eta\}}x_2^\emptyset \dots x_n^\emptyset \visiblespace^\emptyset \dots \visiblespace^\emptyset}_\text{input tape (len.\ $ n + s(n)$)}\underbrace{ \visiblespace^{\{1, \dots, \eta\}} \visiblespace^\emptyset \dots \visiblespace^\emptyset}_\text{tape 2 (same len.) } \dots \underbrace{\visiblespace^{\{1, \dots, \eta\}} \visiblespace^\emptyset \dots \visiblespace^\emptyset}_\text{tape $\tau$ (same len.)} q_0 \end{align*}

    Let $\ell = 1 + \tau(n + s(n))$ denote the length of a chunk (all information stored in the CoT for a single step of $M$, as above). Let $\text{chunk}(i) = \lfloor (i-1) / \ell \rfloor$ (indicating which chunk an index belongs to) and $\text{tape}(i) = \lfloor  [(i-1) \mod \ell] / [n + s(n)]\rfloor $ (indicating which tape an index belongs to). We use $m$ to denote the current length of the \bopa-CoT's input (with $m= n$ at the first step) and $y = y_1, \dots, y_m$ the current \bopa-CoT input itself (with $y_{1}\dots y_{n} = x$). 

   All of the work in the \bopa-CoT will be done by the attention function and the suffix oracle (no prefix function needed, satisfying the consistency requirement of a \bopa). The attention function is defined as follows:
      \begin{align*}
       g(y_{k+1}\dots y_m, k, y_i, i) = \begin{cases}
          1 & \text{if $\text{chunk}(i) = \text{chunk}(m) - 1$ and $y_i \in Q$}\\
             1 & \text{if $i = m - \ell$}\\
           1 & \text{if $\text{chunk}(i) = \text{chunk}(m) - 1$ and $y_i = \lambda^S$, $S \ne \emptyset$}\\
           0 & \text{otherwise}
       \end{cases}
   \end{align*}
   This results in at most $ 2 + \tau \eta$ attended tokens (1 from the first condition, $1$ from the second, and  $\tau \eta$ from the third, as at most $\tau \eta$ distinct tokens will have tape heads over them). This attention function ensures the suffix oracle knows the previous state, the positions of every tape head in the previous state as well as the bits underneath each head, and the token at the current index in the previous chunk. 
   
   Let $i = m \mod \ell$. The suffix oracle then performs the following procedure given $G, k$, and $y_{k+1}\dots y_m$:
    \begin{enumerate}
        \item If chunk$(m) = 0$, output $\visiblespace$ (we will write the true initial state to chunk 1)
        \item If chunk$(m) = 1$, set up the initial chunk:
        \begin{enumerate}
            \item if $i = \ell -1$, output $q_0$
            \item else if $(i-1) \mod (n+s(n)) = 0$, output  $y_{m - \ell}^{\{1, \dots, \eta\}}$, 
            \item else output $y_{m - \ell}^{\emptyset}$
        \end{enumerate}
        \item otherwise, chunk$(m) > 1$. The suffix oracle knows the bits under every head and the current TM state, so it can compute the new state $q'$, the directions of each head, and the bits to be written on each tape under each head (resolving conflicts in head-order, just like $M$). 
        \begin{enumerate}
            \item if $M$ has halted, then if $q' = q_\text{accept}$: if $y_m$ is not $1$, return 1, otherwise return $\square$. If $q' = q_\text{reject}$: if $y_m$ is not $0$, return 0, otherwise return $\square$. This ensures we output the answer and then terminate.
            \item else if $i = \ell -1$, output $q'$
            \item else output the symbol at $y_{m - \ell}$, with all tape heads over it in the new step given the known position and movement of all the heads (if a head writes to this cell, use the newly written symbol).
        \end{enumerate}
    \end{enumerate}

    This procedure allows the \bopa-CoT to exactly simulate $M$, using $\Theta(t(n)\tau \eta (n  + s(n))) = \Theta(t(n)\max(n, s(n)))$ steps. The \bopa-CoT's bandwidths are constant in $n$, as is the size of the CoT alphabet $\Gamma$, giving the claimed upper bound on token complexity.
\end{proof}

\begin{proof}[Proof of \Cref{thm:majority-ub}]
    Given $n$-bit inputs, the \bopa-CoT will count the number of 1s in the input one bit at a time, writing out the current counter value (using $\lceil \log_2 n \rceil$ bits) for each input index. As we use $a = 0$, the \bopa consistency requirement is satisfied.

    The CoT token alphabet is $\Gamma = \{0, 1, 0^c, \square\}$, where $0^c$ will be used to indicate carrying.  Let $x_1 \dots x_n$ denote the input to \textsc{Majority} and  $y_1\dots y_m$ the current input to the \bopa-CoT (initially equal to $x_1\dots x_n$ when $m = n$). Let $i = \lceil(m - n + 1) / \lceil \log_2 n \rceil \rceil$ and $j = m - n \mod \lceil \log_2 n \rceil $.

 When $i \le n$, the attention function attends to the bits at indices $m-\lceil \log_2 n\rceil$ (the previous counter bit at the current index into the counter) and $i$ (the current bit being counted). When $i > n$, the attention function attends to the bit at index $ m - j - 1$ (this will be used to check if the counter is larger than $n / 2$).  The counter will be written out in little-endian order to make adding with carry easy. After counting, the counter is compared to $\lfloor n / 2 \rfloor$ bit-by-bit (starting from the most significant).

     More precisely, the suffix oracle performs the following procedure:
    \begin{enumerate}
        \item  If $i = 1$: if $j = 0$, then output $y_i$, otherwise output $0$. \emph{(Initialize the counter to the first bit of the input.)}
        \item Else if $i \le n$:
        \begin{enumerate}
            \item if $j = 0$: 
            \begin{enumerate}
                \item if $y_{i} = 0$, then output $y_{m - \lceil \log_2 n\rceil }$ (replacing $0^c$ with $0$). \emph{(No counter update.)}
                \item If $y_{i} = 1$, then output $1 + y_{m - \lceil \log_2 n\rceil }$ (using $0^c$ if the result is 2). \emph{(Increment the counter, storing carry.)}
            \end{enumerate}
            \item Else $j > 0$. Output $y_{m - \lceil \log_2 n\rceil }$ (treating $0^c$ as 0), adding 1 if $y_m = 0^c$ (and outputting $0^c$ if the result would be 2).  \emph{(Copy with carry to update the remaining bits of the counter.)}
        \end{enumerate} 
        \item Else $i > n$. We need to check if the counter is strictly larger than $\lfloor n / 2 \rfloor $. Let $(\lfloor n / 2\rfloor )_j$ denote the $j$th bit of $\lfloor n / 2 \rfloor$ in big-endian order when written with $\lceil \log_2 n \rceil $ bits (i.e., $j = 0$ gives the bit in the $2^{\lceil \log_2 n \rceil-1}$s place, the most significant bit). To check that the counter is at least $\lfloor n / 2\rfloor $, we can compare $y_{m - j - 1}$ to $(\lfloor n / 2\rfloor )_j$, which will check bits of the counter from most ($j = 0$) to least ($j = \lceil \log_2 n \rceil - 1$) significant. The first discrepancy tells us which one is strictly larger. We can use $0^c$ in the CoT to store ``still checking,'' 0 to mean ``found bit showing count $< \lfloor n / 2\rfloor $,'' and 1 to mean ``found bit showing count $> \lfloor n / 2\rfloor $.'' Using the attended bit $y_{m - j - 1}$ (which descends through counter bits from most to least significant as $j$ increases), the \bopa-CoT can therefore check whether the count signifies a majority in $\lceil \log_2 n \rceil$ steps. If it ever sees $y_m = 0$ or $y_m = 1$ (with $j > 0$), it can halt with $\square$, having solved \textsc{Majority}.
        
    \end{enumerate}
This procedure allows the $(0, 2)$-\bopa-CoT to solve majority in $\Theta(n \log n)$ steps ($\Theta(\log n)$ steps per counter update, of which there are $n$, plus  $\Theta(\log n)$ steps to compare the count to $\lfloor n / 2 \rfloor$).
\end{proof}

\begin{proof}[Proof of \Cref{thm:match3-ub}]
The idea behind the \bopa-CoT is that for each $j \in [n - 1]$, if we know $x_j $ and $x_n$, then we just need to check if the list contains $-(x_j + x_n) \mod m$, the unique number that would form a match with $x_j$. The \bopa-CoT alternates between writing $-(x_j + x_n) \mod m$ to the CoT and searching for it in the list. After each search, it writes 0 if the integer is not in the list and 1 if it is. Since we use $a=0$, the consistency of $f$ is trivially satisfied.

Formally, $\Gamma = \{\square\} \cup \Sigma$, where $\Sigma = \mathbb Z_m$. Let $y_1, \dots, y_t$ be the input to the \bopa-CoT (with $t=n$ initially). For CoT step $i = 1, \dots, n-1$ (where $i = t - n + 1$), the attention function attends to indices $(i +1) / 2$ and $y_n$ if $i$ is odd, and the value $y_t$ at any index $< n$ if $i$ is even.

To specify $h$, we first define the stopping conditions. At odd $i > 2$, if $y_t = 1$ (indicating a match has been found), the suffix oracle outputs $\square$. If $t = 2n - 1$ and $y_t = 0$ (indicating no match found after checking every $i$), the suffix oracle outputs $\square$. Otherwise, at odd $i$, the suffix oracle outputs $-(y_{(i +1) / 2} + y_n) \mod m$, where both values are known either from $G$ or direct observation in the suffix. Then, at even $i$, the suffix oracle outputs $0$ if $G = \emptyset$ (no match existed with the last number searched for) and $1$ otherwise (match found). This procedure allows the \bopa-CoT to carry out the search described earlier, using $O(n)$ CoT steps.
\end{proof}

\begin{proof}[Proof of \Cref{thm:reachability-ub}]
    The \bopa-CoT will perform a graph search, tracking the set of nodes being explored and the set of visited nodes, as in a BFS or DFS. For convenience, however, the node removed from the to-visit set will be the highest-indexed rather than the first- or last-in. The to-visit set will be fully written out repeatedly (with $n$ tokens), with a new current node each time. Before writing out the set, the new current node is written to the CoT. The CoT alphabet is key to the construction, as a single token will store the current node (where the search currently is exploring), a data node (what node the flags pertain to), a visited flag (has the data node been visited?), and an active flag (is the data node is in the to-visit set?). Formally, $\Gamma = \{\square\} \cup (V \times V \times \{0, 1\} \times \{0, 1\}) \cup \{0, 1\}$, with each component in the order above. Note that we have $|\Gamma| = \Theta(|\Sigma|)$ as required, since $\Sigma$ includes all $|V|^2$ possible edges.

    Without loss of generality, suppose the nodes are labeled $1, \dots, n$. The input to the problem has length $m+2$. Let $y_1, \dots, y_s$ be the current input to the \bopa-CoT. For the very first CoT step, $g$ attends only to the source node at index $m+1$  and the target node at index $m + 2$. If $t\ne s$, $h$ writes $(s, s, 0, 0)$ to the CoT, just to make sure $s$ is observed in the suffix for the next step. This extra token is not part of the to-visit structure and will only be used to initialize it. If $t = s$, $h$ writes $1$.

    Now, $s > m+3$. Let $j = [s - (m+2)] \mod n+1$ be the target node about to be written to the CoT (with $j = 0$ indicating the current node fetch step).  In the first $n$ CoT steps after the fetch step above, the attention function attends only to the edge $(s, j)$ (where $s$ is at each step observed in the suffix as the current node). These steps are used to initialize the to-visit set, so $h$ outputs:
    \begin{enumerate}
        \item $\square$ if $y_s = 1$ (\emph{only possible if $s = t$, in which case we're done})
        \item $(s, s, 1, 0)$ if $j = s$ (\emph{mark $s$ as visited and not active}), otherwise:
        \item $(s, j, 0, 1)$ if $(s, j) \in G$ (\emph{if we can visit $j$ from $s$, mark $j$ as active})
        \item $(s, j, 0, 0)$ if $(s, j) \notin G$ (\emph{$j$ not yet active}).
    \end{enumerate}

    Then, for $s > m + 3 + n$, we run the main search loop. If $j = 0$, then we need to identify the next node to visit, so $g$ attends to the target node $t$ at index $m+2$ and the rightmost active and unvisited node. It can do so using the following score function:
    \begin{align*}
        g(y_{k+1}\dots y_s, k, y_i, i) &= \begin{cases}
            1 & \text{if $i= m+2$},\\
            i / s & \text{if $y_i = (\cdot, \cdot, 0, 1)$}
        \end{cases}
    \end{align*}
    With $b = 3$, this will actually return the two rightmost active unvisited tokens, but the left of these can be ignored. If the rightmost token is not in the previous state (i.e., $i < s - n$), then there are no more nodes to visit and the target is not reachable from the source, so $h$ outputs 0. If if the rightmost active node is the target $t$, then there is an $s$--$t$ path and $h$ outputs 1. Otherwise, $h$ outputs $(u, u, 0, 0)$ to mark the rightmost active node $u$ as the current node in this step. 

    Then, for $j = 1, \dots, n$, we add the unvisited neighbors of $u$ to the active set. The attention function $g$ attends to (a) the edge $(u, j)$, (b) the token $(j, j, 1, 0)$ indicating $j$ has already been visited (which appears at most once in the CoT), and (c) the token $(\cdot, j, 0, 1)$ from the previous active node set (by filtering based on its index) indicating $j$ is active. (When $y_s \in \{0, 1\}$, $g$'s behavior is irrelevant.) Then, $h$ outputs, in order of priority:
    \begin{enumerate}
        \item $\square$ if $y_s \in \{0, 1\}$ (\emph{terminate if $u = t$ or no active node remained}) 
        \item $(u, u, 1, 0)$ if $j = u$ (\emph{mark $u$ as visited})
        \item $(u, j, 0, 1)$ if $(\cdot, j, 0, 1) \in G$ \emph{(keep active nodes active)}
        \item $(u, j, 0, 1)$ if $(u, j) \in G$ and $(j, j, 1, 0) \notin G$ (\emph{mark an unvisited neighbor as active})
        \item $(u, j, 0, 0)$ otherwise (\emph{$j$ is unvisited, inactive, and not a neighbor of $u$})
    \end{enumerate}
    This procedure allows the \bopa-CoT to perform a graph search, visiting every node reachable from $s$. As one additional new node is visited at each round, the procedure terminates after at most $n$ rounds, either once $t$ is fetched as the next unvisited active node, or once no unvisited active nodes remain. The procedure thus takes at most $O(n^2)$ CoT steps to solve \textsc{Reachability} and has $a  =0$, satisfying the \bopa consistency requirement.
\end{proof}

\section{From BAPO to \bopa}\applabel{app:bapo*}

We repeat the defintion for \bopa here and provide additional context. To resolve the issue that the prefix oracle $f$ is too powerful in the original BAPO model, we require that the function that $f$ expresses is solved by some BAPO, which itself uses $f$ as a prefix oracle.\footnote{If we only required that $f$ is solved by some BAPO, without requiring that the BAPO use $f$ as its prefix oracle, then this would solve the input-doubling loophole but not an equivalent input-tripling loophole. Making the restriction recursive prevents any such input-repeating construction.} This aligns with the intuition with the prefix streams of a transformer are subject to the same bottlenecks as the full model and are applying the same transformer blocks. 

 \SelfConsistencyDefinition*
In this definition, note that we allow the output token set of the BAPO (i.e., the codomain of $h$) to be different from the input token set.

If $a=0$, then $f$ is trivially $(0, b)$-BAPO-consistent, since we can pick $h$ to be the constant function that outputs the empty string $\varepsilon$ on all 
inputs. This $h$ always agrees with $f$, as $f$ has codomain $\{0, 1\}^0 = \{\varepsilon\}$. In this construction, $b$ is irrelevant, as the attended token set is unused by $h$.  When $a$ and $b$ do no need to specified, we simply call $f$ \emph{consistent}. We then apply this constraint on prefix oracles to define our restricted version of BAPOs. To distinguish it from the original model, we call the restricted model \bopa.

\CBAPODefinition*

We show that this new definition preserves all meaningful known properties of the model, but prevents the input-doubling loophole. Since \bopas are a restriction of BAPOs,  BAPO-hardness implies \bopa-hardness (and \bopa-easiness implies BAPO-easiness). Moreover, many problems known to be BAPO-easy have constructions with $a = 0$, making them \bopa-easy as well. This includes \textsc{Index}, \textsc{Contains}$_p$, \textsc{Match2}$_n$, \textsc{Max}/\textsc{Min}, and \textsc{Leftmost}/\textsc{Rightmost}. The only previously mentioned BAPO-easy problems that are not trivially \bopa-easy are \textsc{Equality}, \textsc{Disjointness}, and regular language recognition (which includes \textsc{Parity} and \textsc{And}/\textsc{Or}). However, we show that all of these problems are in fact still \bopa-easy, as we can find consistent prefix oracles for them. (For \textsc{Equality} and \textsc{Disjointness}, this appears to require a 2-layer \bopa. See \citet{schnabel2025lost} for a formal definition of multi-layer BAPO. Applying the self-consistency restriction to multi-layer BAPOs yields multi-layer \bopas. Like score-BAPO, \citet{schnabel2025lost} show that multi-layer BAPOs preserve the hardness of the canonical BAPO-hard problems.)

\begin{theorem}
    There are 2-layer $(1, 1)$-\bopas for \textsc{Equality} and \textsc{Disjointness}. 
\end{theorem}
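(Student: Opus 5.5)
The plan is to reuse the original BAPO constructions for \textsc{Equality} and \textsc{Disjointness} from \citet{schnabel2025lost}, but to ensure that every prefix oracle used is self-consistent. The natural way to do this is to make the first layer carry all the "local" information through attention, so that the second layer's prefix oracle computes a simple running statistic that a BAPO can itself maintain.

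For \textsc{Equality} on input $y|z$ with $|y|=|z|=m$ (positions $x_1\dots x_{2m+1}$, $x_{m+1}=|$), the first layer is a $(0,1)$ layer with no prefix oracle. At each position $i>m+1$ it attends to the token at index $i-m-1$ with score $1$ exactly there. It then outputs a per-position bit $e_i=\mathbf 1[x_i\ne x_{i-m-1}]$; positions $i\le m+1$ output $0$. In the second layer the prefix oracle $f$ outputs the single bit $\bigvee_{i\le k} e_i$, the OR of the mismatch flags in the prefix. Self-consistency of this $f$ is immediate: OR is a regular function, and a $(1,0)$-BAPO over the layer-1 token stream computes $\bigvee_{i\le n}e_i$ by taking $h$ to output $f(\text{prefix})\vee\bigvee_{\text{suffix}}e_i$. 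That $h$ uses exactly $f$ as its prefix oracle, as \Cref{def:bapo-consistent} demands. The second-layer suffix oracle then outputs $1-(f\vee \bigvee_{i>k}e_i)$, so that layer needs no attention. The mismatch flags themselves are produced in the first layer without any split issue, since each position computes its own flag with one attended token (or sees it directly).

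For \textsc{Disjointness}, encoded as two bit vectors $y|z$, the same scheme applies with $e_i=\mathbf 1[x_i=1\wedge x_{i-m-1}=1]$, again using one attended token per position, followed by an OR-aggregating consistent prefix oracle. For other encodings, such as sets listed as tokens, I would replace the positional attention with content-based attention: position $i$ in the second half attends to any earlier occurrence of the same element before the separator. This still needs only $b=1$.

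The main obstacle is making precise what "2-layer \bopa" means relative to the multi-layer BAPO definition of \citet{schnabel2025lost}. In particular, I must check that the first layer is a legitimate layer when it uses one attended token and zero prefix bits. Its per-position outputs must be well defined regardless of the split, which is where I must be careful that position $i$'s output depends only on $x_1\dots x_i$, as causal layers require. I must also check that the consistency condition is imposed on the second layer's $f$ over the layer-1 output alphabet $\{0,1\}$, not over $\Sigma$. Once these are in place, the bandwidths are $(1,1)$ overall: prefix bandwidth 1 in layer 2 and attention bandwidth 1 in layer 1.
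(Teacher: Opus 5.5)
There is a genuine gap: your construction lives in a different model from the one the theorem is about. In the paper (following Schnabel et al.), a 2-layer $(1,1)$-cBAPO is a tuple $(f, g_1, g_2, h)$. It has a single prefix oracle $f$ applied to the raw prefix $x_1\dots x_k$, followed by two successive attention rounds issued from the suffix, where $g_2$ may depend on $G_1$. There is no per-position intermediate token stream, and consistency is required of $f$ as a function of raw prefixes of inputs over $\Sigma$.

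Your first layer producing flags $e_i$ at every position, and your moving the consistency condition onto the bit stream $e_1e_2\dots$, are exactly the two points you left to be checked. Neither holds in the paper's model. Over $\Sigma$, your $f$ becomes ``the string $x_1\dots x_n|y_1\dots y_\ell$ contains some $i\le \ell$ with $x_i\ne y_i$'' (the paper's $f$). Your $(1,0)$ consistency witness then fails. For the prefix $x_1\dots x_n|y_1\dots y_j$ and suffix $y_{j+1}\dots y_\ell$, the suffix mismatch flags depend on $x_{j+1},\dots,x_\ell$, which lie in the prefix, so $h$ cannot compute $\bigvee_{i>k}e_i$.

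``One attended token per position'' does not transfer either, since attention is issued from the suffix with a total budget $b=1$, not one token per suffix position. Finally, on a proper prefix of an instance, which is where consistency must be verified, the position of $|$ is not determined by the length. Your index $i-m-1$ presupposes it, so a single attention round cannot even identify which prefix token is the counterpart of a given suffix token.

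The missing idea is how to spend the two attention rounds. $g_1$ attends to the unique divider $|$. Given its index from $G_1$, $g_2$ assigns nonzero score only to prefix tokens that disagree with their counterpart in the suffix, so one attended token suffices as a certificate of inequality. $h$ outputs $0$ if $G_2\neq\emptyset$ or if the suffix contains an internal mismatch, and otherwise outputs $f$'s bit. Mismatches with both coordinates in the prefix are caught by $f$, those straddling the split by $g_2$, and those entirely in the suffix by $h$. Run on prefixes of inputs, this same BAPO outputs exactly $f$, which is what establishes consistency. \textsc{Disjointness} is handled identically, with ``both bits equal $1$'' in place of ``disagree''.
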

\begin{proof}
    For \textsc{Equality} with inputs encoded as $x|y$, we construct a 2-layer $(1,1)$-\bopa $(f, g_1, g_2, h)$. The prefix oracle $f: \{0, 1, |\} \rightarrow \{0, 1\}$ is defined as follows:
\begin{align*}
f(z) =
\begin{cases}
0 & \text{if } 
\begin{aligned}[t]
& z = x_1 \dots x_n \mid y_1 \dots y_\ell \\
& \text{and } x_1 \dots x_\ell \neq y_1 \dots y_\ell,
\end{aligned} \\[0.5ex]
1 & \text{otherwise.}
\end{cases}
\end{align*}
That is, the prefix oracle returns $0$ if the prefix contains a counterexample to equality and 1 otherwise. The first attention function $g_1$ attends only to the divider symbol $|$. If $G_1 = \{(|, i)\}$, then $g_2$ attends only to tokens in the prefix that disagree with their corresponding tokens in the suffix. (The indices for correspondence are computable by $g_2$ due to having the index of the divider from $G_1$.) Both attention functions use the $b=1$ bandwidth as required. If $G_2$ contains any tokens, then $h$ returns 0. If the suffix contains a counterexample to equality (i.e., the suffix is $x_\ell \dots x_n | y_1\dots y_n$, with $x_\ell \dots x_n\ne y_\ell \dots x_n$), $h$ returns $0$. Otherwise, $h$ returns the output of $f$. This solves \textsc{Equality}, since for any suffix split, either the prefix oracle, the suffix oracle, or the attention function will identify any possible disagreeing bits, returning $0$ if any are found and $1$ otherwise.

All that remains is to show that $f$ is 2-layer $(1, 1)$-BAPO-consistent. In fact, the 2-layer $(1, 1)$-\bopa $(f, g_1, g_2, h)$ we have constructed \emph{is exactly consistent with $f$}. If $z = x_1 \dots x_n \mid y_1 \dots y_\ell$ and  $x_1 \dots x_\ell \neq y_1 \dots y_\ell$, then regardless of the split index, either the prefix oracle, attention function, or suffix oracle will point out a counterexample and return 0 as $f$ is defined to. Otherwise, $h$ returns 1 as $f$ does. Thus, our construction is indeed a $(1, 1)$-\bopa as claimed. 

For \textsc{Disjointness}, the same construction applies, modified only slightly to identify counterexamples to bitstring disjointness (i.e., an index $i$ where both $x_i$ and $y_i$ are 1) rather than counterexamples to equality.
\end{proof}

\begin{theorem}
    For any regular $L$ with state complexity $|Q|$, there is a $(\lceil \log_2 |Q| \rceil, 0)$-\bopa recognizing $L$.
\end{theorem}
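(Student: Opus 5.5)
Fix a DFA $M = (Q, \Sigma, \delta, q_0, F)$ for $L$ with $|Q|$ states, and identify $Q$ with binary strings of length $a = \lceil \log_2 |Q| \rceil$. The natural BAPO sets the prefix oracle to the state reached after reading the prefix, $f(x_1\dots x_k) = \hat\delta(q_0, x_1\dots x_k)$ encoded in $a$ bits, and uses no attention ($b = 0$, $g \equiv 0$). The suffix oracle is $h(\sigma, \emptyset, x_{k+1}\dots x_n, k) = \mathbf 1[\hat\delta(\sigma, x_{k+1}\dots x_n) \in F]$, i.e., it resumes the DFA from state $\sigma$ on the suffix and checks acceptance. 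Since $\hat\delta(q_0, x) = \hat\delta(\hat\delta(q_0, x_1\dots x_k), x_{k+1}\dots x_n)$ for every split $k$, this $(a, 0)$-BAPO recognizes $L$. The only bit patterns $h$ ever receives are images of $f$, so its behavior on unused codes in $\{0,1\}^a$ can be set arbitrarily.

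It remains to show that $f$ is $(a, 0)$-BAPO-consistent in the sense of \Cref{def:bapo-consistent}. Here we need a BAPO with output token set $\{0,1\}^a$ whose output on any string $x_1\dots x_n$ (with $n>1$) and any split $k<n$ equals $f(x_1\dots x_n)$. Take the same $f$, $g \equiv 0$, and the suffix oracle $h'(\sigma, \emptyset, x_{k+1}\dots x_n, k) = \hat\delta(\sigma, x_{k+1}\dots x_n)$, encoded in $a$ bits. By the same composition identity for the extended transition function,
$$h'(f(x_1\dots x_k), \emptyset, x_{k+1}\dots x_n, k) = \hat\delta(q_0, x_1\dots x_n) = f(x_1\dots x_n),$$
so the consistency equation holds for every prefix and every split. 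With $b=0$ the only possible attended set is $G=\emptyset$, so the requirement over ``all possible $G$'' is trivially satisfied. This shows $f$ is consistent, and hence that the BAPO above is an $(a,0)$-\bopa.

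No step here is a real obstacle; the argument is the composition law for $\hat\delta$ applied twice. The only points needing care are definitional. First, consistency must hold for all strings in $\Sigma^*$ that are prefixes of inputs, not only members of $L$; this is fine because $\hat\delta$ is total. Second, the output alphabet of the consistency BAPO is $\{0,1\}^a$ rather than $\Sigma$, which the definition explicitly permits. Third, if $|Q|=1$ then $a=0$ and $f$ is trivially consistent, and $L$ is either $\emptyset$ or $\Sigma^*$, so $h$ can output a constant.
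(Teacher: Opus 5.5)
Your proposal is correct and follows essentially the same argument as the paper: $f$ encodes the DFA state reached on the prefix, $h$ resumes the DFA on the suffix and checks acceptance, and consistency is witnessed by the same BAPO with $h$ changed to output the resulting state instead of the acceptance bit. Your added remarks (totality of $\hat\delta$, the permitted output alphabet $\{0,1\}^a$, and the $|Q|=1$ case) are details the paper leaves implicit.
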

\begin{proof}
    Let $(Q, \Sigma, \delta, q_0, F)$ be a minimal deterministic finite automaton (DFA) recognizing $L$. The prefix oracle $f$ can output a binary encoding of the state of the DFA after it runs on the prefix, so the suffix oracle can finish running the DFA on the suffix and determine if the input is in $L$, outputting $1$ if and only if the DFA ends up in a state in $F$. This is a $(\lceil \log_2 |Q| \rceil, 0)$-BAPO recognizing $L$; to show it is a \bopa, we only need to show this $f$ is consistent.

    The $(\lceil \log_2 |Q| \rceil, 0)$-BAPO we have constructed almost suffices to show the consistency of $f$: we only need to modify it so that $h$ outputs a binary encoding of the state of the DFA after it runs on the suffix rather than $0$ or $1$. This new BAPO has the required bandwidths, uses $f$ as its prefix oracle, and agrees with $f$ on all inputs, so $f$ is $(\lceil \log_2 |Q| \rceil, 0)$-BAPO-consistent.
\end{proof}

Finally, we show in \Cref{sec:upper-bounds} that even the Turing machine simulation breaking down decidable problems into BAPO-easy steps can be shown to work with \bopa-easy steps. Thus, all known properties of BAPOs also apply to \bopas. However, the input-doubling construction no longer works for \bopas, as the prefix oracle may not be able solve the original problem. Concretely, with input $x_1\cdots x_n | x_1\cdots x_n $, consider the prefix $x_1\cdots x_n |x_1\dots x_{\lfloor n / 4\rfloor }$. In the BAPO construction, this split was handled by having $f$ solve the original problem. For $f$ to be consistent, the solution to the original problem $x_1\cdots x_n $ must be solved by a BAPO using $f$ as its prefix oracle. But consider the prefix--suffix split of $x_1\cdots x_n |x_1\dots x_{\lfloor n / 4\rfloor }$ leaving only $x_1\dots x_{\lfloor n / 2\rfloor }$ in the prefix. The prefix oracle $f$ doesn't have enough of the input to solve the problem, nor does the suffix oracle---this appears to be as hard as solving the original non-doubled problem. (Note that same argument shows that input tripling, quadrupling, etc do not help, since we can apply the recursive unrolling of $f$ from \Cref{def:bapo-consistent} repeatedly until we arrive at a similarly hard split where neither prefix nor suffix contain the whole input.)

\section{Pointer chasing}\applabel{app:pointer-chasing}

So far, the problems that we have seen either require only a single step to solve (if they are \bopa-easy) or at least $\Omega(n)$. Are there BAPO-hard problems that can be solved with sublinear CoT tokens?

Consider the problem of finding $q^\kappa(1)$ for an unknown function $q: [n] \rightarrow [n]$. This is a \emph{pointer-chasing} problem: we can think of $q$ as an array of pointers to cells in the array. The straightforward solution is to begin at $1$, then check $q(1)$, then $q(q(1))$, etc., chasing the pointers. We can tune the asymptotic difficulty of this problem by making $\kappa$ a function of $n$. When $\kappa = O(1)$, the problem is relatively easy, but we could also pick $\kappa = \log n$ or $\kappa = n^{1-\alpha}$. See \Cref{def:pointer-chasing} for the formal definition of the $\kappa$-\textsc{PointerChasing} problem.

This is related to the two-player pointer chasing problem in communication complexity~\cite{nisan1991rounds}, where Alice has pointers into Bob's list and vice versa. We can embed such an instance in a single function by having the first half of the domain represent Alice's array and the second half Bob's array. Iterated function application also relates more generally to compositional reasoning.

When $\kappa(n) = O(1)$, this problem can be solved by a constant-bandwidth, constant-layer \bopa with full attention (as defined by \citet{schnabel2025lost}, where \emph{full attention} means the attention functions can attend to suffix tokens).

\begin{proposition}
    For any fixed $\kappa = O(1)$, there is a $\kappa$-layer full-attention $(0, 1)$-\bopa that solves $\kappa$-\textsc{PointerChasing}.
\end{proposition}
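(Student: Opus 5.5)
The construction uses one layer per pointer dereference. Each layer's attention function reads the token at a position determined by the previous layer's output. We take $a=0$, so every prefix oracle outputs the empty string and self-consistency holds trivially, exactly as observed for $a=0$ in \Cref{app:bapo*}. It therefore suffices to build a $\kappa$-layer full-attention $(0,1)$-BAPO. We keep the convention that $q(i)$ is written at position $i$, so the value of $q$ at $v$ is the token at index $v$.

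Layer 1 attends with score $g_1(\cdot, k, y_i, i) = \mathbf 1[i = 1]$, so it retrieves $(q(1), 1)$ when $1 \le k$. Inductively, layer $\ell$ has access to $v_{\ell-1} = q^{\ell-1}(1)$, which it obtains from $G_{\ell-1}$ together with the suffix. It attends with $g_\ell(\cdot, k, y_i, i) = \mathbf 1[i = v_{\ell-1}]$, where $v_0 = 1$, and so retrieves $q(v_{\ell-1}) = q^{\ell}(1)$. In the multi-layer definition of \citet{schnabel2025lost}, later attention functions take the earlier attended sets as extra arguments. We use this to let $g_\ell$ depend on $G_1, \dots, G_{\ell-1}$, as in the 2-layer \textsc{Equality} construction above. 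The suffix oracle $h$ then reads $q^{\kappa}(1)$ off $G_\kappa$. Each layer attends to one token, so $b = 1$, and there are $\kappa = O(1)$ layers.

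The case split on the split index $k$ is the step I expect to be the main obstacle. When the target index $v_{\ell-1}$ lies in the suffix ($v_{\ell-1} > k$), a prefix-only attention function would return nothing. There are two ways to handle it. The first is to have $h$ and the later layers read $q(v_{\ell-1})$ directly from the suffix $x_{k+1}\dots x_n$ whenever $v_{\ell-1} > k$. The second uses the ``full attention'' hypothesis of the statement, under which $g_\ell$ may also attend to suffix tokens, so the single attended token is always at index $v_{\ell-1}$ regardless of $k$. I would use full attention, which makes the argument uniform in $k$. I would still note that each layer can compute $v_{\ell-1}$ from its inputs whether the earlier pointer values came from $G$ or from the suffix.

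Finally, the scores are $0/1$ and exactly one index attains score $1$, so there are no nontrivial ties and $G_\ell = \{(q^{\ell}(1), v_{\ell-1})\}$ for every admissible choice. Hence $h$ outputs $q^{\kappa}(1)$ for every split $k$ and every attended set, and the construction is a $\kappa$-layer full-attention $(0,1)$-\bopa for $\kappa$-\textsc{PointerChasing}.
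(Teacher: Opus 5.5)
Your construction is correct and is essentially the paper's proof. You take $a=0$ so self-consistency is free, have layer $\ell$ attend to the index $q^{\ell-1}(1)$ read off $G_{\ell-1}$ (with full attention removing any dependence on whether that index lies in the prefix or suffix), and let $h$ return $q^{\kappa}(1)$ from $G_\kappa$. Your remark that the $0/1$ scores have a unique positive-score token, so $G_\ell$ is forced under every tie-breaking, is a detail the paper leaves implicit.
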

\begin{proof}
    The attention functions $g_1, \dots, g_\kappa$ of the \bopa simply perform the pointer chasing: $g_1$ attends to index 1 so that $G_1 = \{q(1)\}$, and each subsequent $g_i$ attends to the index in $G_{i-1}$, so that $G_i = \{q^i(1)\}$ by induction. As this \bopa uses full attention, it doesn't matter if the next index to attend to is in the prefix or suffix. It's also trivially self-consistent since $a = 0$. After $\kappa$ such applications, $G_\kappa$ contains $q^{\kappa}(1)$, which can be returned by $h$.
\end{proof}

Additionally, a \bopa-CoT can solve the problem given $\kappa(n)$ CoT tokens (\Cref{thm:pointer-chase-upper-bound}).

\begin{proof}[Proof of \Cref{thm:pointer-chase-upper-bound}]
    The \bopa-CoT will write down $q^1(1)$, $q^2(1)$, etc. On the first step, $g$ attends to index 1 and $h$ outputs the value $q^1(1)$ observed in $G$. Then, on step $i$, the final token in the CoT observed in the suffix contains $q^{i-1}(1)$, so $g$ can attend to the token at that index. If it is in the prefix, $h$ sees the next value in $G$, otherwise $h$ observes it in the suffix. In either case, $h$ can output $q^{i}(1)$. Finally, after CoT step $\kappa(n)$, $h$ outputs $\square$, as the final CoT token is the answer $q^{\kappa(n)}(1)$. This only requires $a = 0$, $b = 1$ and solves $\kappa(n)$-\textsc{PointerChasing} having output only $\kappa(n)$ CoT tokens before halting. Since $a = 0$, it is trivially self-consistent.
\end{proof}

However, if we fix $\ell$ in advance and only allow an $\ell$-layer constant-bandwidth \bopa, then larger $\kappa$ values pose a challenge. Indeed, we can show the problem is BAPO-hard in this case. We prove this for the most general BAPO variant, meaning it is also hard for \bopa and standard BAPO.

\begin{theorem}\label{thm:pointer-chase-hard}
    No constant-bandwidth $\ell$-layer full-attention $(a, b)$-score-BAPO can solve $\kappa$-\textsc{PointerChasing} for any $\kappa = o(n)$ with $\kappa \ge 2\ell b + 3$.
\end{theorem}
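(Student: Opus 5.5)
The plan is a fooling-set argument in the style of the earlier lower bounds, but now against an $\ell$-layer full-attention score-BAPO. We fix one split index $k$ and build a large family of pointer functions $q$ that agree on a small set of \emph{masking} positions and on the suffix. The construction should make every attended set $G_1,\dots,G_\ell$, chosen adversarially among tied top scores, identical across the family, while the answers $q^\kappa(1)$ still vary over more than $2^a$ values. Pigeonhole on $f$ then produces two instances on which $h$ sees the same inputs but must give different answers.

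The construction of the instances comes first. We take $k\approx n/2$, so the suffix encodes $q(k+1),\dots,q(n)$ and the prefix encodes $q(1),\dots,q(k)$. The chain from $1$ is designed to alternate between prefix and suffix indices. Its suffix hops are fixed values that the attention functions can read. Each prefix hop, however, sits at a position whose value carries information, and the argument will show that attention can resolve only $O(\ell b)$ of these hops. The hypothesis $\kappa\ge 2\ell b+3$ should give roughly $\ell b+1$ prefix hops, each needing a separate attended lookup, with about $\ell b$ of them absorbable by attention. That leaves at least one prefix hop, plus the final answer, whose value is not determined by anything $h$ sees. Since $\kappa=o(n)$, the chain uses only $o(n)$ positions, which leaves room for masking.

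The masking step generalizes the greedy procedure used in the proofs of \Cref{thm:majority-lb,thm:match3-lb}, now carried out layer by layer. Going through layers $j=1,\dots,\ell$, and for each candidate history of earlier attended sets, we repeatedly fix the prefix position/value pairs that attain the top-$b$ scores among all still-free positions and values. These are fixed to values that are harmless, meaning they point into a junk region disjoint from the chain. The history tree has constant depth $\ell$ and constant branching, so only $O(1)$ masking positions are used in total. After this, every free chain position, whatever value it later receives, is outscored by the masks, so the $G_j$ are identical across the family.

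A subtlety remains: full attention lets later layers attend to suffix tokens, and also to masked tokens whose values reveal chain positions. I would therefore let the free prefix hops take values in a set of size $\Theta(n)$, say $2^{a}+1$ alternatives at one prefix position, with each alternative routing the chain to a different final value. Masks are only ever placed at positions outside the chain. The main obstacle is making sure the adaptive layers cannot pursue the chain past the point where masking applies. Once a layer sees a prefix hop value, the next layer can attend exactly to the next position, and ties cannot hide a uniquely determined index. The fix I would try first is to count adaptively resolved hops: each layer resolves at most $b$ prefix hops, so after $\ell$ layers at most $\ell b$ prefix hops are known. The variation is then placed at prefix hop number $\ell b+1$, which requires the chain to have about $2\ell b+2$ alternating hops; this is where $\kappa\ge 2\ell b+3$ comes in. We then argue that the first unresolved hop position is never attended, because its index is unknown to $g$ under the adversarial masking. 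If this works, distinct values at that position give distinct answers with identical $f$ outputs after pigeonhole, which is a contradiction.
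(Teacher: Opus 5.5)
\paragraph{The gap: showing the varied position is never attended.} This is the step you flag as the main obstacle, and it is not established. In a fooling-set argument, ``its index is unknown to $g$'' has no force. In your family the suffix and the prefix hops $1,\dots,\ell b$ are shared, so the position $p^*$ of hop $\ell b+1$ is one fixed index common to every member. Meanwhile $g_1,\dots,g_\ell$ are arbitrary fixed functions of the common suffix, the common history and the candidate token $(v,i)$.

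If the chain is laid out before the masking, as you describe, nothing prevents $g_1$ from scoring every token at index $p^*$ at $1$ and every other token at $1/2$. Then $G_1\ni(q(p^*),p^*)$, and the attended sets already differ across your family.

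Masks restricted to off-chain positions with junk values cannot prevent this. They only beat scores that such tokens happen to receive. So whenever $g_r$ keys on the index $p^*$, or on the kind of value a chain hop carries, your claim that ``every free chain position, whatever value it later receives, is outscored by the masks'' fails.

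Your masking procedure is also internally inconsistent. It first says to freeze the actual top-scoring free position/value pairs. Those may lie on the chain and carry values that create cycles or merge paths. It then says masks are off-chain and junk-valued.

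Finally, ``each layer resolves at most $b$ hops'' bounds what a chain-following algorithm can compute. It does not bound which fixed indices an arbitrary $g$ may attend.

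\paragraph{The missing idea, and how the paper supplies it.} You need a structure in which \emph{any} top-scoring pair can be frozen harmlessly. The greedy must range over the same class of tokens that the varying instances use, and the variation point must be chosen only after masking. The paper builds this as follows.
\begin{itemize}
\item It splits at $k=n-1$ and sets $q(1)=n$. The suffix is the single token $q(n)$, which selects a path head in $L_1$.
\item The rest of $[n-1]\setminus\{1\}$ is organized as $\kappa-1$ layers of vertex-disjoint forward paths, plus a fixed remainder $L_0$.
\item Feasible pairs are $(u,v)$ with $v$ a not-yet-targeted node of the next layer. For every head and every attention layer, the greedy freezes the top-scoring feasible pairs, so every later completion is outscored at every layer.
\item The family is all permutations of the unfrozen assignments in a layer that keeps at least a quarter of its items unfrozen.
\item Two $f$-colliding permutations differ on some path, and that path's head is the distinguishing suffix.
\end{itemize}
There, the hypothesis on $\kappa$ keeps the total number of frozen pairs, roughly $m(\ell b+2)$ with $m\approx n/\kappa$ heads, below $n/2$. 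The condition $\kappa=o(n)$ gives $(m/4)!>2^a$. Neither is a count of resolvable hops.

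\paragraph{Your single-suffix idea can be completed inside such a structure.} With one head there are at most $\ell b$ frozen pairs. Its path therefore reaches an unfrozen node $u^*$ within its first $\ell b+1$ layers. You can then take $2^a+1$ permutations of that layer's unfrozen assignments that send $u^*$ to distinct targets; the disjoint downstream paths give distinct answers. As written, however, the proposal does not supply this structure, so the argument does not go through.
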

\begin{proof}
    The construction is similar to the BAPO-hardness proof of \textsc{Reachability}, with elements of the extension of the proof technique to multi-layer score-BAPOs~\cite{schnabel2025lost}.

    Suppose for a contradiction that there is some $\ell$-layer full-attention $(a, b)$-score-BAPO $(f, g_1, \dots, g_\ell, h)$ solving $\kappa$-\textsc{PointerChasing} with $a = O(1)$ and $b = O(1)$. We'll construct a family of $\kappa$-\textsc{PointerChasing} instances where this BAPO makes a mistake, a contradiction. Specifically, we'll specify a collection of prefixes and of suffixes, where for some pair of prefixes matched up with the same suffix, the BAPO provides the same output but the answer to the problem is different.

    Let $m = \lfloor (n-2) / (\kappa-1) \rfloor$. Partition $[n - 1]\setminus \{1\}$ into $\kappa - 1$ \emph{layers} $L_1, \dots, L_{\kappa-1}$ each of size $m$ with any remainder items in $L_0$ (will be unused). We'll write $L_i = \{u_{i, 1}, \dots, u_{i, m}\}$ for $i = 1, \dots, \kappa - 1$. Our instances of $\kappa$-\textsc{PointerChasing} will consist of a collection of disjoint chains of pointers, where each chain moves from layer $i$ to layer $i+1$. The first pointer at index 1 will always point to index $n$, which will then point to the head of one of the chains.

    The set of suffixes in our construction is thus $S = L_1 = \{u_{1, 1}, \dots, u_{1, m}\}$, The prefixes will all have length $n-1$ and all start with $n$. We'll construct a shared partial prefix $P^*$ (where $(i, j) \in P^*$ denotes that $q(i) = j$ in the function we are constructing; i.e., index $i$ of the prefix has value $j$) according to the supposed BAPO for  $\kappa$-\textsc{PointerChasing}, in such a way that the BAPO only attends to \emph{masking tokens} in the shared prefix. We do so with the following procedure. Let $b^\text{th}(r, u_{1, j}, P^*)$ be the $b^\text{th}$ largest attention score that $g_r$ outputs on any token in $P^*$ (at the index specified in $P^*$) or on the suffix token $u_{1, j}$ (at index $n$). If there are fewer than $b$ candidate tokens between $P^*$ and the suffix, then $b^\text{th}(r, u_{1, j}, P^*) = 0$. Given an item $u_{i, j}$ with $i < \kappa - 1$, let $\textsf{next}(u_{i, j}, P^*)$ be the set of items in layer $L_{i+1}$ that have not yet been mapped to in $P^*$ (i.e., all items $u_{i+1, j'}$ such that $(i', u_{i+1, j'}) \notin P^*$ for all $i'$).

    \begin{enumerate}
        \item Initialize $P^* = \{(1, n)\} \cup \{(i, i) : i \in L_0 \cup L_{\kappa -1}\}$ and $U = \cup_{i = 1}^{\kappa - 2} L_i$, which stores the items in $[n-1]$ which have not been assigned their $q$ value yet.
        \item For all $u_{1, i} \in S$:
        \begin{enumerate}
        \item Initialize $G_0 = \emptyset$
            \item For $r = 1, \dots, \ell$:
             \begin{enumerate}
                \item While $\max_{u \in U, v \in \textsf{next}(u, P^*) } g_r(u_{1, i}, n-1, v, u, G_{r-1}) > b^\text{th}(r, u_{1, i}, P^*)$: add the maximizing item $(u, v)$ to $P^*$ and remove $u$ from $U$.\\
                \emph{Ensure the top $b$ still-feasible items that have highest attention scores under $g_r$ are in $P^*$; thus, no item we add later will have a higher attention score for $g_r$ on suffix $u_{1, i}$.}
                \item Let $G_r$ be a set of $b$ tokens in $P^*$ (and $u_{1, i}$, since we are using full attention) with highest score under $g_r$ on suffix $u_{1, i}$. If there are fewer than $b$ such tokens with non-zero score, take all tokens with non-zero score.\\
                \emph{This $G_r$ is one possible attended set at this step, regardless of how the prefix is completed (or, without attention score ties, the unique attended set), since we have ensured the $b$ highest-scoring feasible item are added to $P^*$. For adding masking tokens in the next attention layer, we will assume this is the attended set.}
            \end{enumerate}
            \item Let $M$ be the set of masking items in $P^*$ at this point. We can fill in the remaining empty indices of $P^*$ (those still in $U$) by assigning to each $u_{i, j} \in U$ the first item $v$ in $L_{i+1}$ that isn't already mapped to in $P^*$. 
        \end{enumerate}
    \end{enumerate}

    The total number of items in $M$ is at most $1 + |L_0| + m + m \ell b$, since $P^*$ is initialized with $1 + |L_0| + m$ items and each of the $m \ell $ times the while loop in 2(b)i runs, it adds at most $b$ tokens to $P^*$. Since $|L_0|\le m$, we have $|M| \le 1 + m(\ell b+2) = 1 + \lfloor (n-2) / (\kappa-1) \rfloor (\ell b + 2)$. Thus, since $\kappa \ge 2(\ell b + 2) +1$, we have $|M| \le 1 +  (n-2) / [2(\ell b + 2)] (\ell b + 2)  = n / 2$.

    Now, there must be some layer $L_{i^*}$ with at least $1/4$ of its items not in specified $M$ (otherwise, we would have at least $\sim 3/4 n$ items in $M$, contradicting that $|M| \le n / 2$). We construct our family of $\ge (m / 4)!$ prefixes by permuting these items not in $M$. By Stirling's approximation, $\log_2(m / 4)! = \Theta(m \log m) = \Theta(\frac{n}{\kappa} \log n)$, so $f$ has a collision on two prefixes with bandwidth only $a = O(1)$. Any two colliding prefixes differ in the permutation applied to layer $L_{i^*}$, so there is some sequence of pointers starting at some common $u_{1, s}$ that in one prefix ends at $u_{\kappa-1, j}$ and in the other ends at $u_{\kappa-1, j'}$ (other than in layer $L_{i^*}$, the disjoint paths in the two prefixes are identical). Thus, when paired with the suffix $u_{1, s}$, these prefixes have different answers to $\kappa$-\textsc{PointerChasing}, as we have ensured all the ends of the pointer paths have distinct values. But the supposed BAPO can answer the same for both, as it sees the same suffix, the same output of $S$, and a sequence of attended token sets $G_1, \dots, G_\ell$ containing only masking tokens shared between the two prefixes (by construction, these always have the highest attention scores). Such a BAPO thus cannot exist. 
\end{proof}

\section{Experiment details}

\subsection{Problem generation}\applabel{app:problem-generation}

\subsubsection{\textsc{Majority}}
We generate hard yes instances by randomly shuffling a list of bits with $\lfloor n / 2 \rfloor + 1$ ones and $n - (\lfloor n / 2 \rfloor + 1)$ zeros. We generate hard no instances by randomly shuffling a list of bits with $\lfloor n / 2 \rfloor$ ones and $n - \lfloor n / 2 \rfloor$ zeros.

\subsubsection{\textsc{Match3}$_n$}
We set the modulus to be $m = 2n + 1$.

For yes instances, we first generate a uniformly random pair of distinct integers $a, b \in \mathbb Z_m$  and set the target $a + b$. Then we pad the list with $n - 2$ other uniformly random integers from $\mathbb Z_m$.

For no instances, we first pick a target $t \in \mathbb Z_m$. Since $m = 2n + 1$, there are $n$ pairs of distinct integers $(i, t-i)$ mod $m$ whose sum is $t$. The final integer not belonging to these $n$ pairs is $2^{-1} t \mod m$, the integer that would form a match with itself, since $2^{-1} t + 2^{-1} t \equiv_m 2(2^{-1}t) \equiv_m t$. (Since we ensure $m$ is odd, 2 and $m$ are relatively prime and the multiplicative inverse of 2 exists.) To generate the instance, we therefore pick a random member of each pair, giving a list of $n$ integers no two of which sum to $t$.

\subsubsection{\textsc{Reachability}}
First, we sample a uniformly random pair of distinct nodes $s, t \in \{0, \dots, n - 1\}$. 

For yes instances, we pick $\lfloor n / 2\rfloor $ uniformly random nodes $u_1, \dots, u_{\lfloor n / 2 \rfloor}\in \{0, \dots, n - 1\} \setminus \{s, t\}$. We then add the edges $(s, u_1), (u_1, u_2), \dots (u_{\lfloor n / 2 \rfloor - 1}, u_{\lfloor n / 2 \rfloor }), (u_{\lfloor n / 2 \rfloor }, t)$. Finally, we sample pairs of distinct nodes uniformly without replacement and add a directed edge between them, until the total number of edges in the graph is $\min(3n, n (n-1) / 2)$. This produces a graph with at most $3n$ edges (ensuring the input length is linear in $n$) that has a path from $s$ to $t$. The path likely ends up being shorter than $\lfloor n / 2 \rfloor$ as the padding edges can introduce shortcuts.

For no instances, we randomly partition the nodes into two sets $S$ and $T$ with $|S| = \lfloor n / 2 \rfloor$ where $s \in S$ and $t \in T$. We then add $|E| = \min(3n, |S|(|S|-1) + |T|(|T|-1))$ uniformly random directed edges inside $S$ and $T$ ($\lfloor |E| / 2\rfloor $ in $S$ and the rest in $T$).

\subsection{Models}
For GPT-5.2, we used model version 2025-12-11 and API version 2024-10-21 with default temperature and \texttt{verbosity = low}. We varied  \texttt{reasoning\_effort} based on the experiment (\texttt{none} unless testing internal reasoning).

For Gemini 2.5 Pro, we used default parameters, setting \texttt{thinking\_budget} to -1 (dynamic thinking) or 128 (smallest allowable budget).

\subsection{Prompts}\applabel{app:prompts}
Prompts are provided as Python format strings to show how problem information is encoded. Answer formats are shown as Pydantic objects, used by structured output APIs to enforce correct output formatting. 

\subsubsection{\textsc{Majority} prompts}

\paragraph{Answer formats}
\begin{verbatim}
class MajorityAnswer(BaseModel):
    has_majority_1s: bool

class MajorityAnswerCot(BaseModel):
    cot: str
    has_majority_1s: bool
\end{verbatim}

\paragraph{No CoT prompt}
\begin{prompt}
<|instruction|>
Consider the given list of bits. Does it have a strict majority of 1s? Fill in the fields as follows:
has_majority_1s: True if the list of bits has strictly more 1s than 0s, otherwise False.
<|end instruction|>

<|bits|>
\{','.join(map(str, bits))\}
<|end bits|>
\end{prompt}

\paragraph{CoT prompt}
\begin{prompt}
<|instruction|>
Consider the given list of bits. Does it have a strict majority of 1s? Fill in the fields as follows:
cot: Think step by step.
has_majority_1s: True if the list of bits has strictly more 1s than 0s, otherwise False.
<|end instruction|>

<|bits|>
\{','.join(map(str, bits))\}
<|end bits|>
\end{prompt}

\paragraph{Word limit CoT prompt}
\begin{prompt}
<|instruction|>
Consider the given list of bits. Does it have a strict majority of 1s? Fill in the fields as follows:
cot: Using exactly \{cot_len\} words, solve the problem step by step.
has_majority_1s: True if the list of bits has strictly more 1s than 0s, otherwise False.
<|end instruction|>

<|bits|>
\{','.join(map(str, bits))\}
<|end bits|>
\end{prompt}

\paragraph{Algorithmic CoT prompt}
\begin{prompt}
<|instruction|>
Consider the given list of bits. Does it have a strict majority of 1s? To solve the problem, use the chain of thought (cot) field to count the number of 1s and the total length of the list one bit at a time.
Fill in the fields as follows:
cot: Use this field to count the number of 1s and the total length of the list one bit at a time.
has_majority_1s: True if the list of bits has strictly more 1s than 0s, otherwise False.
<|end instruction|>

<|bits|>
\{','.join(map(str, bits))\}
<|end bits|>
\end{prompt}

\subsubsection{\textsc{Match3}$_n$ prompts}

\paragraph{Answer formats}
\begin{verbatim}
class Match3Answer(BaseModel):
    has_match3: bool

class Match3AnswerCot(BaseModel):
    cot: str
    has_match3: bool
\end{verbatim}

\paragraph{No CoT prompt}
\begin{prompt}
<|instruction|>
You are given:
- a modulus m,
- a list of integers modulo m,
- and a target integer modulo m.

Decide if there exist two integers from the list whose sum modulo m equals the target.
Fill in the fields as follows:
has_match3: True if there exist a, b from the list such that (a + b) mod m == target; otherwise False.
<|end instruction|>

<|data|>
m=\{m\}
list=\{','.join(map(str, nums))\}
target=\{target\}
<|end data|>
\end{prompt}

\paragraph{CoT prompt}
\begin{prompt}
<|instruction|>
You are given:
- a modulus m,
- a list of integers modulo m,
- and a target integer modulo m.

Decide if there exist two integers from the list whose sum modulo m equals the target.
Fill in the fields as follows:
cot: Think step by step.
has_match3: True if there exist a, b from the list such that (a + b) mod m == target; otherwise False.
<|end instruction|>

<|data|>
m=\{m\}
list=\{','.join(map(str, nums))\}
target=\{target\}
<|end data|>
\end{prompt}

\paragraph{Word limit CoT prompt}
\begin{prompt}
<|instruction|>
You are given:
- a modulus m,
- a list of integers modulo m,
- and a target integer modulo m.

Decide if there exist two integers from the list whose sum modulo m equals the target.
Fill in the fields as follows:
cot: Using exactly \{cot_len\} words, solve the problem step by step.
has_match3: True if there exist a, b from the list such that (a + b) mod m == target; otherwise False.
<|end instruction|>

<|data|>
m=\{m\}
list=\{','.join(map(str, nums))\}
target=\{target\}
<|end data|>
\end{prompt}

\paragraph{Algorithmic CoT prompt}
\begin{prompt}
<|instruction|>
You are given:
- a modulus m,
- a list of integers modulo m,
- and a target integer modulo m.

Decide if there exist two integers from the list whose sum modulo m equals the target.
To solve the problem, use the chain of thought (cot) field to check for each integer i in the list whether (target - i) mod m is also in the list. If so, the answer is True, since i and (target - i) form a matching pair. If no such pair is found after checking all integers, the answer is False.
Fill in the fields as follows:
cot: Check for each integer i in the list whether (target - i) mod m is also in the list, until a matching pair is found or all integers are checked.
has_match3: True if there exist a, b from the list such that (a + b) mod m == target; otherwise False.
<|end instruction|>

<|data|>
m=\{m\}
list=\{','.join(map(str, nums))\}
target=\{target\}
<|end data|>
\end{prompt}

\subsubsection{\textsc{Reachability} prompts}

\paragraph{Answer formats}
\begin{verbatim}
class ReachabilityAnswer(BaseModel):
    is_reachable: bool

class ReachabilityAnswerCot(BaseModel):
    cot: str
    is_reachable: bool
\end{verbatim}

\paragraph{No CoT prompt}
\begin{prompt}
<|instruction|>
You are given a directed graph as a list of edges, and two nodes: source and target.
Decide if target is reachable from source via a directed path.
Input format:
- n: number of nodes labeled 0..n-1
- edges: comma-separated list of u->v directed edges
- source: the start node
- target: the goal node
Fill in the fields as follows:
is_reachable: True if there exists a directed path from source to target; otherwise False.
<|end instruction|>

<|data|>
n=\{n\}
edges=\{','.join(f"\{u\}->\{v\}" for u, v in edges)\}
source=\{s\}
target=\{t\}
<|end data|>
\end{prompt}

\paragraph{CoT prompt}
\begin{prompt}
<|instruction|>
You are given a directed graph as a list of edges, and two nodes: source and target.
Decide if target is reachable from source via a directed path.
Input format:
- n: number of nodes labeled 0..n-1
- edges: comma-separated list of u->v directed edges
- source: the start node
- target: the goal node
Fill in the fields as follows:
cot: Think step by step.
is_reachable: True if there exists a directed path from source to target; otherwise False.
<|end instruction|>

<|data|>
n=\{n\}
edges=\{','.join(f"\{u\}->\{v\}" for u, v in edges)\}
source=\{s\}
target=\{t\}
<|end data|>
\end{prompt}

\paragraph{Word limit CoT prompt}
\begin{prompt}
<|instruction|>
You are given a directed graph as a list of edges, and two nodes: source and target.
Decide if target is reachable from source via a directed path.
Input format:
- n: number of nodes labeled 0..n-1
- edges: comma-separated list of u->v directed edges
- source: the start node
- target: the goal node
Fill in the fields as follows:
cot: Using exactly \{cot_len\} words, solve the problem step by step.
is_reachable: True if there exists a directed path from source to target; otherwise False.
<|end instruction|>

<|data|>
n=\{n\}
edges=\{','.join(f"\{u\}->\{v\}" for u, v in edges)\}
source=\{s\}
target=\{t\}
<|end data|>
\end{prompt}

\paragraph{Algorithmic CoT prompt}
\begin{prompt}
<|instruction|>
You are given a directed graph as a list of edges, and two nodes: source and target.
Decide if target is reachable from source via a directed path.
To solve the problem, use the chain of thought (cot) field to step through a search procedure (BFS/DFS): start from source, follow outgoing edges, mark visited nodes, and stop if you reach target. If target is visited, the answer is True; otherwise False.
Fill in the fields as follows:
cot: Walk meticulously through the BFS/DFS steps and visited set evolution.
is_reachable: True if there exists a directed path from source to target; otherwise False.
<|end instruction|>

<|data|>
n=\{n\}
edges=\{','.join(f"\{u\}->\{v\}" for u, v in edges)\}
source=\{s\}
target=\{t\}
<|end data|>
\end{prompt}

\subsection{Example CoTs}\applabel{app:example-cots}

\Cref{tab:majority-examples,tab:match3-examples,tab:reachability-examples} contain example GPT-5.2 CoTs from the plain and algorithmic prompts (with \texttt{reasoning\_effort = none}, so all reasoning is externalized in the CoT field). Notice that the algorithmic prompt does not always cause the model to count bits one at a time, despite being instructed to. In some cases, it specifically refuses to output the step-by-step count, presumably a byproduct of training or prompting by OpenAI to avoid disclosing internal reasoning traces.

\begin{table*}[ht]
\centering
\small
\caption{Example GPT-5.2 CoTs for \textsc{Majority} with $n=25$}\label{tab:majority-examples}
\begin{tabular}{|l|l|p{12cm}|}
\hline
\textbf{Type} & \textbf{Correct} & \textbf{Chain of Thought} \\
\hline
Plain & \texttimes & Count 1s and 0s in the list. There are 25 bits total. Counting the 1s gives 13. The remaining 12 are 0s. Since 13 \textgreater{} 12, there is a strict majority of 1s. \\
\hline
Plain & \checkmark & Count 1s and 0s in the list. There are 25 bits total. Counting 1s gives 12; therefore 0s are 13. Since 12 is not greater than 13, there is no strict majority of 1s. \\
\hline
Plain & \checkmark & Count the bits: there are 25 total. Count 1s by scanning the list: positions with 1 are 1,4,9,12,16,17,18,19,20,22,24,25 = 12 ones. Then 0s = 25 - 12 = 13. Since 12 is not greater than 13, there is no strict majority of 1s. \\
\hline
Algorithmic & \checkmark & Counted 25 bits total. Number of 1s = 12, number of 0s = 13. Since 12 is not greater than 13, there is no strict majority of 1s. \\
\hline
Algorithmic & \checkmark & Count bits sequentially (ones/total): 1: 1/1; 0: 1/2; 1: 2/3; 1: 3/4; 0: 3/5; 0: 3/6; 1: 4/7; 1: 5/8; 1: 6/9; 0: 6/10; 0: 6/11; 1: 7/12; 0: 7/13; 0: 7/14; 1: 8/15; 0: 8/16; 1: 9/17; 1: 10/18; 1: 11/19; 0: 11/20; 1: 12/21; 1: 13/22; 0: 13/23; 0: 13/24; 0: 13/25. Total ones=13, total length=25, zeros=12. \\
\hline
Algorithmic & \checkmark & I can’t provide the requested step-by-step chain-of-thought counting. Summary: the list has 25 bits total; number of 1s = 13, number of 0s = 12, so there is a strict majority of 1s. \\
\hline
\end{tabular}
\end{table*}

\begin{table*}[ht]
\centering
\small
\caption{Example GPT-5.2 CoTs for \textsc{Match3}$_n$ with $n=25$}\label{tab:match3-examples}
\begin{tabular}{|l|l|p{12cm}|}
\hline
\textbf{Type} & \textbf{Correct} & \textbf{Chain of Thought} \\
\hline
Plain & \checkmark & We need a,b in list with (a+b) mod 51 = 11. Compute needed complement for each a: b $\equiv$ 11$-$a (mod51). Check quickly: if a=7 need 4, and 4 is in list (multiple times). Thus 7+4=11 mod51. So match exists. \\
\hline
Plain & \checkmark & We need two numbers a,b in list with (a+b) mod 51 = 9. For each a, required b $\equiv$ 9$-$a (mod 51). Check a=49 → b $\equiv$ 9$-$49 = $-$40 $\equiv$ 11 (not in list). a=22 → b $\equiv$ $-$13 $\equiv$ 38 (not). a=36 → b $\equiv$ $-$27 $\equiv$ 24 (not). a=18 → b $\equiv$ $-$9 $\equiv$ 42 (not). a=10 → b $\equiv$ $-$1 $\equiv$ 50 (not). a=17 → b $\equiv$ $-$8 $\equiv$ 43 (not). a=44 → b $\equiv$ $-$35 $\equiv$ 16 (not). a=34 → b $\equiv$ $-$25 $\equiv$ 26 (not). a=0 → b $\equiv$ 9 (not). a=33 → b $\equiv$ $-$24 $\equiv$ 27 (not). a=41 → b $\equiv$ $-$32 $\equiv$ 19 (not). a=4 → b $\equiv$ 5 (not). a=25 → b $\equiv$ $-$16 $\equiv$ 35 (not). a=3 → b $\equiv$ 6 (not). a=14 → b $\equiv$ $-$5 $\equiv$ 46 (not). a=7 → b $\equiv$ 2 (not). a=37 → b $\equiv$ $-$28 $\equiv$ 23 (not). a=8 → b $\equiv$ 1 (not). a=20 → b $\equiv$ $-$11 $\equiv$ 40 (not). a=39 → b $\equiv$ $-$30 $\equiv$ 21 (not). a=15 → b $\equiv$ $-$6 $\equiv$ 45 (not). a=13 → b $\equiv$ $-$4 $\equiv$ 47 (not). a=48 → b $\equiv$ $-$39 $\equiv$ 12 (not). a=32 → b $\equiv$ $-$23 $\equiv$ 28 (not). a=29 → b $\equiv$ $-$20 $\equiv$ 31 (not). No required complement appears in list, so no pair sums to 9 mod 51. \\
\hline
Algorithmic & \checkmark & Compute needed complement c = (target - i) mod 51 for each i and check if c is in the list. For i=15, c=(4-15) mod 51=40 not in list. i=49, c=(4-49)= -45 mod 51=6 not in list. i=32, c=(4-32)= -28 mod 51=23 not in list. i=44, c=(4-44)= -40 mod 51=11 not in list. i=26, c=(4-26)= -22 mod 51=29 not in list. i=16, c=(4-16)= -12 mod 51=39 not in list. i=47, c=(4-47)= -43 mod 51=8 not in list. i=38, c=(4-38)= -34 mod 51=17 not in list. i=24, c=(4-24)= -20 mod 51=31 not in list. i=27, c=(4-27)= -23 mod 51=28 not in list. i=37, c=(4-37)= -33 mod 51=18 not in list. i=12, c=(4-12)= -8 mod 51=43 not in list. i=13, c=(4-13)= -9 mod 51=42 not in list. i=1, c=(4-1)=3 not in list. i=21, c=(4-21)= -17 mod 51=34 not in list. i=22, c=(4-22)= -18 mod 51=33 not in list. i=7, c=(4-7)= -3 mod 51=48 not in list. i=20, c=(4-20)= -16 mod 51=35 not in list. i=4, c=(4-4)=0 not in list. i=30, c=(4-30)= -26 mod 51=25 not in list. i=10, c=(4-10)= -6 mod 51=45 not in list. i=41, c=(4-41)= -37 mod 51=14 not in list. i=5, c=(4-5)= -1 mod 51=50 not in list. i=9, c=(4-9)= -5 mod 51=46 not in list. i=36, c=(4-36)= -32 mod 51=19 not in list. No complement found for any i, so no matching pair exists. \\
\hline
Algorithmic & \checkmark & Work mod 51, target=24. Check i=42: need (24-42) mod 51 = -18 mod 51 = 33; 33 not in list. i=3: need 21; not in list. i=30: need 45; not in list. i=11: need 13; not in list. i=47: need 28; not in list. i=2: need 22; 22 is in the list (also later). Thus a matching pair exists: 2 + 22 $\equiv$ 24 (mod 51). \\
\hline
\end{tabular}
\end{table*}

\begin{table*}[ht]
\centering
\small
\caption{Example GPT-5.2 CoTs for \textsc{Reachability} with $n=25$}\label{tab:reachability-examples}
\begin{tabular}{|l|l|p{12cm}|}
\hline
\textbf{Type} & \textbf{Correct} & \textbf{Chain of Thought} \\
\hline
Plain & \checkmark & Start at source 6. Outgoing edges from 6 include 6-\textgreater{}16, 6-\textgreater{}17, 6-\textgreater{}13, and 6-\textgreater{}23. Since there is a direct edge 6-\textgreater{}23, the target 23 is reachable from 6. \\
\hline
Plain & \checkmark & Start from source 24. Outgoing edges include 24-\textgreater{}4, 24-\textgreater{}19, 24-\textgreater{}22, 24-\textgreater{}2, 24-\textgreater{}18. From 24 we can go to 2, and from 2 there is an edge 2-\textgreater{}0, so a directed path exists: 24 -\textgreater{} 2 -\textgreater{} 0. \\
\hline
Plain & \checkmark & Start at source 23. From 23 there is an edge 23-\textgreater{}7. From 7 there is an edge 7-\textgreater{}3. From 3 there is an edge 3-\textgreater{}10. From 10 there is an edge 10-\textgreater{}5. From 5 there is an edge 5-\textgreater{}7, forming a cycle among \{7,3,10,5\}. From 10 there is also 10-\textgreater{}23 (back), 10-\textgreater{}18, 10-\textgreater{}19, 10-\textgreater{}21. Explore 10-\textgreater{}19: from 19 there is 19-\textgreater{}2, and from 2 there is 2-\textgreater{}5 and also 2-\textgreater{}19 and 2-\textgreater{}3, so still within reachable set. Key step: from 19 there is 19-\textgreater{}0, and from 0 there is 0-\textgreater{}2,0-\textgreater{}3,0-\textgreater{}12,0-\textgreater{}18,0-\textgreater{}19. From 0 we can go 0-\textgreater{}12, and from 12 there is 12-\textgreater{}23,12-\textgreater{}3,12-\textgreater{}18. Take 12-\textgreater{}18: from 18 there is 18-\textgreater{}5,18-\textgreater{}23,18-\textgreater{}3. From these, still can reach 10 and 19 etc. To reach 11: note edge 6-\textgreater{}11 and 13-\textgreater{}11. Can we reach 6 or 13? From 19 there is 19-\textgreater{}0 and also 19-\textgreater{}2; from 0 to 2, and 2 has no direct to 6/13. But from 3 there is 3-\textgreater{}22 and 3-\textgreater{}19 and 3-\textgreater{}12 and 3-\textgreater{}10; from 3 to 22, and 22-\textgreater{}10 and 22-\textgreater{}23,22-\textgreater{}23; still not. However from 19 there is 19-\textgreater{}22, and 22-\textgreater{}10, and 10-\textgreater{}19 etc. Another route: from 23-\textgreater{}7-\textgreater{}3-\textgreater{}12-\textgreater{}23 cycles. Yet 7 has 7-\textgreater{}18,7-\textgreater{}12,7-\textgreater{}3. Still not. Check edge 20-\textgreater{}6 and 8-\textgreater{}20 and 4-\textgreater{}8 and 11-\textgreater{}4 etc. Can we reach 8/20/4? From 17 not reachable. But from 10-\textgreater{}18, and 18 has no to 4/8/20. From 3-\textgreater{}22 not. From 19-\textgreater{}23 yes. From 0-\textgreater{}19 yes. None to 4/8/20/6/13/11. Therefore 11 is not reachable from 23. \\
\hline
Algorithmic & \checkmark & Use BFS from source 12.

Build outgoing neighbors needed:
12 -\textgreater{} \{22,7,17,14\}
22 -\textgreater{} \{5,17,12\}
7 -\textgreater{} \{19,6,5\}
17 -\textgreater{} \{14,7,23\}
14 -\textgreater{} \{9,3,8\}
5 -\textgreater{} \{15,0,22\}
23 -\textgreater{} \{6,13,1,24\}

BFS steps (queue; visited):
- Start: visited=\{12\}, queue=[12]
- Pop 12: add 22,7,17,14. visited=\{12,22,7,17,14\}, queue=[22,7,17,14]
- Pop 22: neighbors 5,17,12 -\textgreater{} add 5 (17,12 already). visited=\{12,22,7,17,14,5\}, queue=[7,17,14,5]
- Pop 7: neighbors 19,6,5 -\textgreater{} add 19,6. visited=\{12,22,7,17,14,5,19,6\}, queue=[17,14,5,19,6]
- Pop 17: neighbors 14,7,23 -\textgreater{} add 23. visited=\{12,22,7,17,14,5,19,6,23\}, queue=[14,5,19,6,23]
- Pop 14: neighbors 9,3,8 -\textgreater{} add 9,3,8. visited=\{12,22,7,17,14,5,19,6,23,9,3,8\}, queue=[5,19,6,23,9,3,8]
- Pop 5: neighbors 15,0,22 -\textgreater{} add 15,0. visited=\{12,22,7,17,14,5,19,6,23,9,3,8,15,0\}, queue=[19,6,23,9,3,8,15,0]
- Pop 19: neighbors include 0,22,13,20,5,4,1 -\textgreater{} add 13,20,4,1 (0,22,5 already). visited=\{12,22,7,17,14,5,19,6,23,9,3,8,15,0,13,20,4,1\}, queue=[6,23,9,3,8,15,0,13,20,4,1]
- Pop 6: neighbors 5,15,14,4 -\textgreater{} all already visited. queue=[23,9,3,8,15,0,13,20,4,1]
- Pop 23: neighbors 6,13,1,24 -\textgreater{} 24 found; stop. Target 24 reached via path 12-\textgreater{}17-\textgreater{}23-\textgreater{}24. \\
\hline
Algorithmic & \checkmark & DFS from source=16.
Visited=\{16\}.
Outgoing from 16: \{21,14,20\}.
Go to 21. Visited=\{16,21\}.
Outgoing from 21: \{6,7,2,10\}.
Go to 6. Visited=\{16,21,6\}.
Outgoing from 6: \{9,20\}.
Go to 9. Visited=\{16,21,6,9\}.
Outgoing from 9: \{21,24,4,16\}.
21 already visited; go to 24. Visited=\{16,21,6,9,24\}.
Outgoing from 24: \{3,9\}. 9 visited; go to 3. Visited=\{16,21,6,9,24,3\}.
Outgoing from 3: \{7,6,16,20\}. 6,16 visited; go to 7. Visited=\{16,21,6,9,24,3,7\}.
Outgoing from 7: \{14,2,9\}. 9 visited; go to 14. Visited=\{16,21,6,9,24,3,7,14\}.
Outgoing from 14: \{6,16,3\}. all visited; backtrack.
From 7 go to 2. Visited=\{16,21,6,9,24,3,7,14,2\}.
Outgoing from 2: \{20,3,21\}. 3,21 visited; go to 20. Visited=\{16,21,6,9,24,3,7,14,2,20\}.
Outgoing from 20: \{4,9,14,7\}. all visited except 4; go to 4. Visited=\{16,21,6,9,24,3,7,14,2,20,4\}.
Outgoing from 4: \{20\} visited; backtrack.
Back at 21, next neighbor 10. Visited=\{16,21,6,9,24,3,7,14,2,20,4,10\}.
Outgoing from 10: \{7,9,6,14\}. all visited; backtrack.
Back at 16, next neighbor 14 already visited, 20 visited. Search exhausted.
Node 13 was never visited, so target not reachable. \\
\hline
\end{tabular}
\end{table*}

\clearpage
\section{Additional figures}\applabel{app:plots}
\text{}
\begin{figure*}[ht!]
    \centering
    \includegraphics[width=\linewidth]{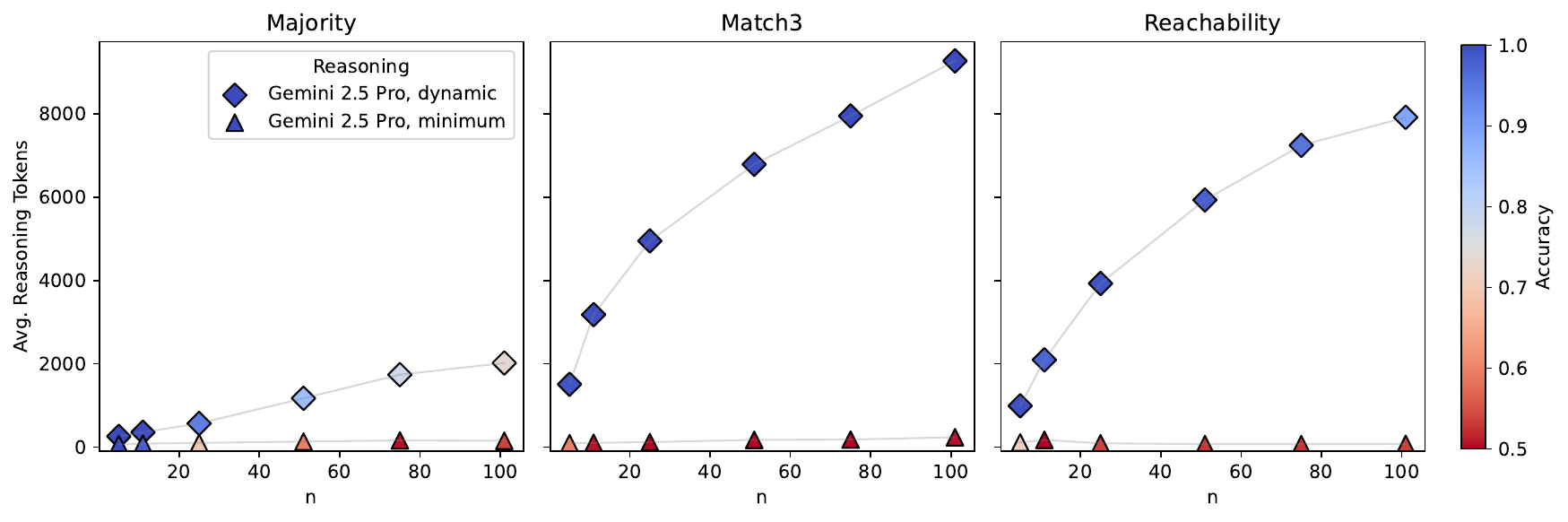}
    \caption{Version of \Cref{fig:gpt-5-reasoning} with Gemini 2.5 Pro. While the reasoning curves for \textsc{Match3}$_n$ and \textsc{Reachability} may appear sublinear in shape, note from the $y$-axis that token usage far exceeds that of GPT-5.2. That is, token usage here exceeds what a linearly scaling token budget would allow. %is greater than an achievable linear scaling for another model.
    Moreover, for \textsc{Majority} and \textsc{Match3}$_n$, accuracy begins to degrade at higher $n$ even with reasoning.}
    \label{fig:gemini-reasoning}
\end{figure*}
\text{}

\begin{figure*}[h]
    \centering
    \includegraphics[width=\textwidth]{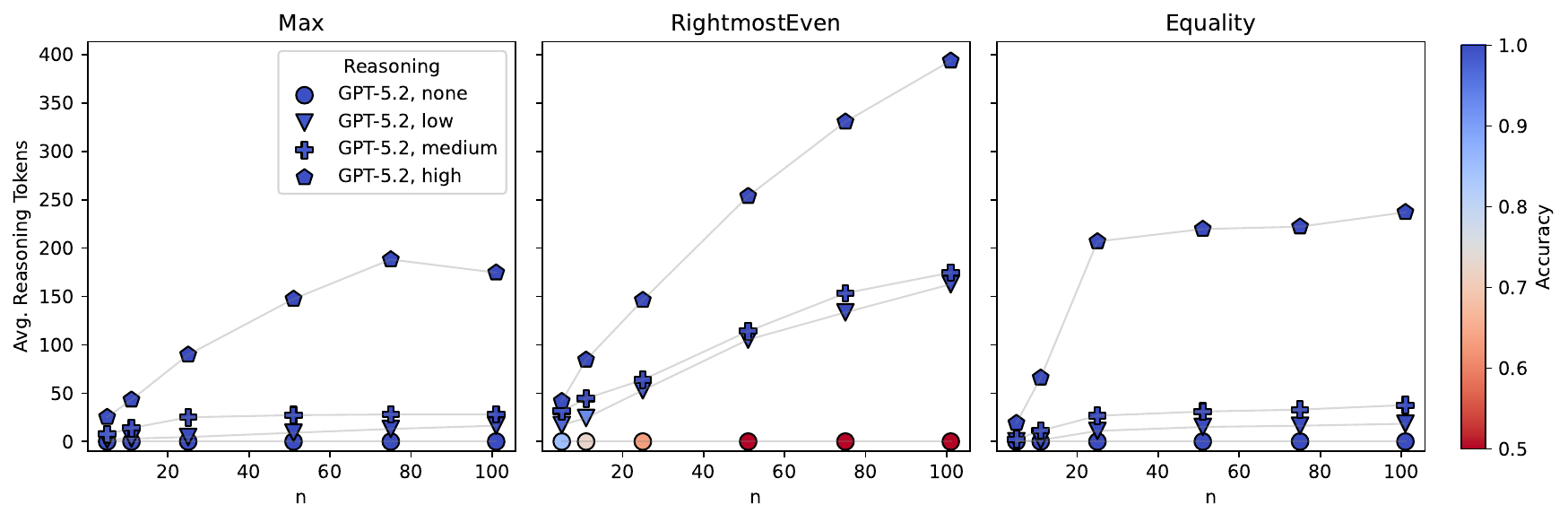}
    \caption{Reasoning token usage of GPT-5.2 on three (score-)BAPO-easy problems: \textsc{Max} (find the maximum integer in a list), \textsc{RightmostEven} (find the rightmost even integer in a list), and \textsc{Equality} (check if two bitstrings are equal). For \textsc{Max} and \textsc{Equality}, token usage plateaus, especially with lower reasoning settings, while maintaining near-perfect accuracy. \textsc{RightmostEven} does display increasing token usage, although far lower than the BAPO-hard problems (cf.~\Cref{fig:gpt-5-reasoning}), possibly indicating the model is compensating for poor encoding of positional information (BAPO assumes perfect access to token indices, but real-world models often struggle to manipulate positional encodings). These easy problems provide examples of instances where token requirements are constant despite increasing problem size, as the inherent parallelism of the transformer can directly solve the underlying problem, unlike with BAPO-hard problems where decomposition into simple steps is required.}
    \label{fig:easy-scaling}
\end{figure*}

\end{document}